\newcommand{\pushright}[1]{\ifmeasuring@#1\else\omit\hfill$\displaystyle#1$\fi\ignorespaces}
\newcommand{\pushleft}[1]{\ifmeasuring@#1\else\omit$\displaystyle#1$\hfill\fi\ignorespaces}
\renewcommand{\hat}{\widehat}
\def\beq{\begin{equation}}
\def\eeq{\end{equation}}
\def\beqa{\begin{eqnarray}}
\def\eeqa{\end{eqnarray}}
\def\beqan{\begin{eqnarray*}}
\def\eeqan{\end{eqnarray*}}
\def\R{{\mathbb{R}}}
\DeclareMathOperator*{\argmin}{arg\,min}
\DeclareMathOperator*{\argmax}{arg\,max}
\DeclareMathOperator{\diag}{diag}
\def\x{\bm{x}}
\newtheorem{theorem}{Theorem}
\newtheorem{lemma}{Lemma}
\theoremstyle{definition}
\newtheorem{definition}{Definition}
\newtheorem{assumption}{Assumption}
\def\arr{\rightarrow}
\def\Exp{\mathbb{E}}
\def\Cov{\mathrm{Cov}}
\def\Tr{\mathrm{Tr}}
\def\PL2{\stackrel{PL(2)}{=}}
\def\km{k\! - \!}
\def\kp{k\! + \!}
\def\lp{\ell\! + \!}
\def\lm{\ell\! - \!}
\def\Lm{L\! - \!}
\newcommand{\zero}{\mathbf{0}}
\newcommand{\bbf}{\mathbf{b}}
\newcommand{\fbf}{\mathbf{f}}
\newcommand{\gbf}{\mathbf{g}}
\newcommand{\pbf}{\mathbf{p}}
\newcommand{\qbf}{\mathbf{q}}
\newcommand{\rbf}{\mathbf{r}}
\newcommand{\ubf}{\mathbf{u}}
\newcommand{\vbf}{\mathbf{v}}
\newcommand{\wbf}{\mathbf{w}}
\newcommand{\xbf}{\mathbf{x}}
\newcommand{\ybf}{\mathbf{y}}
\newcommand{\zbf}{\mathbf{z}}
\newcommand{\zbfhat}{\widehat{\mathbf{z}}}
\newcommand{\Abf}{\mathbf{A}}
\newcommand{\Bbf}{\mathbf{B}}
\newcommand{\Gbf}{\mathbf{G}}
\newcommand{\Hbf}{\mathbf{H}}
\newcommand{\Ibf}{\mathbf{I}}
\newcommand{\Kbf}{\mathbf{K}}
\newcommand{\Pbf}{\mathbf{P}}
\newcommand{\Qbf}{\mathbf{Q}}
\newcommand{\Rbf}{\mathbf{R}}
\newcommand{\Sbf}{\mathbf{S}}
\newcommand{\Ubf}{\mathbf{U}}
\newcommand{\Vbf}{\mathbf{V}}
\newcommand{\Wbf}{\mathbf{W}}
\newcommand{\Xbf}{\mathbf{X}}
\newcommand{\Ybf}{\mathbf{Y}}
\newcommand{\Zbf}{\mathbf{Z}}
\newcommand{\Zbfhat}{\wh{\mathbf{Z}}}
\newcommand{\inner}[1]{\langle{#1}\rangle}
\def\alphabf{{\boldsymbol \alpha}}
\def\betabf{{\boldsymbol \beta}}
\def\Gammabf{{\boldsymbol \Gamma}}
\def\lambdabar{\overline{\lambda}}
\newcommand{\Lambdabf}{\mathbf{\Lambda}}
\def\Upsilonbar{\overline{\Upsilon}}
\def\Lambdabfbar{\overline{\mathbf{\Lambda}}}
\def\Omegabf{\mathbf{\Omega}}
\def\Omegabfbar{\overline{\mathbf{\Omega}}}
\def\Gammabfbar{\overline{\mathbf{\Gamma}}}
\def\Thetabfbar{\overline{\mathbf{\Theta}}}
\def\xibf{{\boldsymbol \xi}}
\def\Xibf{{\boldsymbol \Xi}}
\def\msf{\mathsf}
\def\taubf{{\boldsymbol \tau}}
\def\varphibf{{\boldsymbol \varphi}}
\newcommand{\Thetabf}{{\bm{\Theta}}}
\newcommand{\phibf}{{\bm{\phi}}}
\newcommand{\mubar}{\overline{\mu}}
\newcommand{\tran}{^{\text{\sf T}}}
\def\eqd{\stackrel{d}{=}}
\def\Norm{{\mathcal N}}
\def\Range{\mathrm{Range}}
\newcommand*\dif{\mathop{}\!\mathrm{d}}  
\newcommand{\bkt}[1]{{\left< #1 \right>}}
 \def\Gset{\mathfrak{G}}
\def\Gsetbar{\overline{\mathfrak{G}}}
\providecommand{\old}[1]{ }
\providecommand{\mc}{\mathcal}
\providecommand{\ie}{\rm i.e.}
\providecommand{\T}{^\top}
\providecommand{\wb}{\overline}
\providecommand{\wt}{\widetilde}
\providecommand{\wh}{\widehat}
\newcommand{\norm}[1]{\left\|#1\right\|}
\providecommand{\Real}{\mathbb{R}}
\providecommand{\i}{\bm{i}}
\providecommand{\j}{\bm{j}}
\providecommand{\p}{\mathbf{p}}
\providecommand{\q}{\mathbf{q}}
\providecommand{\w}{\mathbf{w}}
\providecommand{\x}{\bm{x}}
\providecommand{\y}{\bm{y}}
\providecommand{\M}{\bm{M}}
\providecommand{\V}{\mathbf{V}}
\providecommand{\X}{\bm{X}}
\providecommand{\bbf}{\mathbf{b}}
\providecommand{\fbf}{\mathbf{f}}
\providecommand{\gbf}{\mathbf{g}}
\providecommand{\qbf}{\mathbf{q}}
\providecommand{\rbf}{\mathbf{r}}
\providecommand{\ubf}{\mathbf{u}}
\providecommand{\vbf}{\mathbf{v}}
\providecommand{\wbf}{\mathbf{w}}
\providecommand{\xbf}{\mathbf{x}}
\providecommand{\ybf}{\mathbf{y}}
\providecommand{\zbf}{\mathbf{z}}
\providecommand{\Abf}{\mathbf{A}}
\providecommand{\Bbf}{\mathbf{B}}
\providecommand{\Fbf}{\mathbf{F}}
\providecommand{\Gbf}{\mathbf{G}}
\providecommand{\Hbf}{\mathbf{H}}
\providecommand{\Ibf}{\mathbf{I}}
\providecommand{\Kbf}{\mathbf{K}}
\providecommand{\Nbf}{\mathbf{N}}
\providecommand{\Pbf}{\mathbf{P}}
\providecommand{\Qbf}{\mathbf{Q}}
\providecommand{\Rbf}{\mathbf{R}}
\providecommand{\Sbf}{\mathbf{S}}
\providecommand{\Ubf}{\mathbf{U}}
\providecommand{\Vbf}{\mathbf{V}}
\providecommand{\Wbf}{\mathbf{W}}
\providecommand{\Xbf}{\mathbf{X}}
\providecommand{\Ybf}{\mathbf{Y}}
\providecommand{\Zbf}{\mathbf{Z}}
\providecommand{\phibf}{\mbf{\phi}}
\providecommand{\mcN}{\mathcal{N}}
\def\Lodd{\mc L_{\rm odd}}
\def\Leven{\mc L_{\rm even}}
\title{Inference in Multi-Layer Networks with Matrix-Valued Unknowns}
\author{
\IEEEauthorblockN{Parthe Pandit,$^{1}$ 
Mojtaba Sahraee-Ardakan,$^{1}$
Sundeep Rangan,$^{2}$ Philip Schniter,$^{3}$
and Alsyon K. Fletcher$^{1}$} 

\IEEEauthorblockA{$^1$Dept.\ Statistics and ECE, University of California, Los Angeles}

\IEEEauthorblockA{$^2$Dept.\ ECE, New York University, Brooklyn, NY}

\IEEEauthorblockA{$^3$Dept.\ ECE, The Ohio State University, Columbus, Ohio}
}
\begin{document}

\maketitle
\setlength{\abovedisplayskip}{7pt}
\setlength{\belowdisplayskip}{7pt}
\setlength{\parskip}{3pt}
\begin{abstract}
We consider the problem of inferring the input and hidden variables of a stochastic multi-layer 
neural network from an observation of the output.  
The hidden variables in each layer are represented as matrices.
This problem applies to signal recovery via deep generative prior models,
multi-task and mixed regression, and
learning certain classes of two-layer neural networks. 
A unified approximation algorithm for both MAP and MMSE inference is proposed by extending 
a recently-developed Multi-Layer Vector Approximate Message Passing (ML-VAMP) algorithm to handle 
matrix-valued unknowns.
It is shown that the performance of the proposed Multi-Layer Matrix VAMP (ML-Mat-VAMP) algorithm
can be exactly predicted in a certain random large-system limit, 
where the dimensions $N\times d$ of the unknown quantities grow as $N\rightarrow\infty$ with $d$ fixed.
In the two-layer neural-network learning problem, this scaling corresponds to the case where the number of input features and training samples grow to infinity but the number of hidden nodes stays fixed.
The analysis enables a precise prediction of the parameter and test error of the learning.
\end{abstract}

\section{Introduction}

\begin{figure*}[t]
    \centering
    \begin{tikzpicture}

    \pgfmathsetmacro{\sep}{3};
    \pgfmathsetmacro{\yoff}{0.4};
    \pgfmathsetmacro{\xoffa}{0.3};
    \pgfmathsetmacro{\xoffb}{0.6};

    \tikzstyle{var}=[draw,circle,fill=green!20,node distance=2.5cm];
    \tikzstyle{yvar}=[draw,circle,fill=orange!30,node distance=2.5cm];
    \tikzstyle{conn}=[draw,circle,fill=green!40,radius=0.02cm];
    \tikzstyle{linest}=[draw,fill=blue!20,minimum size=1cm,
        minimum height=1.8cm, node distance=2cm]
    \tikzstyle{nlest}=[draw,fill=green!20,minimum size=1cm,
        minimum height=1.8cm, node distance=2cm]
    \tikzstyle{linblock}=[draw,fill=blue!20, minimum size=1cm, node distance=\sep cm];
    \tikzstyle{nlblock}=[draw,fill=green!20, minimum size=1cm, node distance=\sep cm];

\node [var] (z0) {};
    \node [linblock, right of=z0] (W1) {$\Wbf_1,\bbf_1$};
    \node [nlblock, right of=W1] (phi2) {$\phibf_2(\cdot)$};
    \node [linblock, right of=phi2] (W3) {$\Wbf_3,\bbf_3$};
    \node [nlblock, right of=W3] (phi4) {$\phibf_4(\cdot)$};
    \node [yvar,right of=phi4, node distance=2cm] (y) {};
\node [var, above of=W1, yshift=-1cm] (xi1) {};
    \node [var, above of=phi2, yshift=-1cm] (xi2) {};
    \node [var, above of=W3, yshift=-1cm] (xi3) {};
    \node [var, above of=phi4, yshift=-1cm] (xi4) {};
\path[->] (z0) edge  node [above] {$\Zbf^0_0$} (W1);
    \path[->] (W1) edge  node [above] {$\Zbf^0_1$} (phi2);
    \path[->] (phi2) edge  node [above] {$\Zbf^0_2$} (W3);
    \path[->] (W3) edge  node [above] {$\Zbf^0_3$} (phi4);
    \path[->] (phi4) edge  node [above] {$\Ybf$} (y);
\path[->] (xi1) edge node [right] {\small$\Xibf_1$} (W1);
    \path[->] (xi2) edge node [right] {\small$\Xibf_2$} (phi2);
    \path[->] (xi3) edge node [right] {\small$\Xibf_3$} (W3);
    \path[->] (xi4) edge node [right] {\small$\Xibf_4$} (phi4);

\node [nlest,below of=z0] (h0) {$\Gbf^+_0(\cdot)$};
    \node [linest,below of=W1] (h1) {$\Gbf^{\pm}_1(\cdot)$};
    \node [nlest,below of=phi2] (h2) {$\Gbf^{\pm}_2(\cdot)$};
    \node [linest,below of=W3] (h3) {$\Gbf^{\pm}_3(\cdot)$};
    \node [nlest,below of=phi4] (h4) {$\Gbf^-_4(\cdot)$};
    \node [yvar,right of=h4, node distance=2cm] (y2) {};
    \path [draw,->] (y2) edge node [above] {$\Ybf$} (h4.east);

    \foreach \i/\j in {0/1,1/2,2/3,3/4} {
        \node [right of=h\i,xshift=\xoffa cm,yshift=\yoff cm] (conn0\i) {};
        \node [right of=h\i,xshift=\xoffb cm,yshift=\yoff cm] (conn1\i) {};
        \node [right of=h\i,xshift=\xoffa cm,yshift=-\yoff cm] (conn2\i) {};
        \node [right of=h\i,xshift=\xoffb cm,yshift=-\yoff cm] (conn3\i) {};
        \draw [fill=blue!20] (conn1\i) circle (0.1cm);
        \draw [fill=blue!20] (conn2\i) circle (0.1cm);

        \path [draw,->] (conn0\i) -- (conn2\i);
        \path [draw,->] (conn3\i) -- (conn1\i);

        \path[->] ([yshift=\yoff cm]h\i.east) edge node  [above]
            {$\widehat{\Zbf}^+_{k\i}$} (conn1\i);
        \path[->] (conn1\i) edge node  [above]
            {$\Rbf^+_{k\i}$} ([yshift=\yoff cm]h\j.west);
        \path[->] ([yshift=-\yoff cm]h\j.west) edge node [below]
            {$\widehat{\Zbf}^-_{k\i}$} (conn2\i);
        \path[->] (conn2\i) edge node  [below]
            {$~\Rbf^-_{k\i}$} ([yshift=-\yoff cm]h\i.east);

    }

\end{tikzpicture}     \caption{(TOP) The signal flow graph for \textit{true} values of matrix variables $\{\Zbf_\ell^0\}_{\ell=0}^3$, given in eqn. \eqref{eq:nntrue} where $\Zbf_\ell^0\in \Real^{n_\ell\times d}.$
    (BOTTOM) Signal flow graph of the ML-MVAMP procedure in Algo. \ref{algo:ml-mat-vamp}. The variables with superscript + and - are updated in the forward and backward pass respectively. ML-MVAMP (Algorithm~\ref{algo:ml-mat-vamp}) solves \eqref{eq:main_problem} by solving a sequence of simpler estimation problems over consecutive pairs $(\Zbf_\ell,\Zbf_{\ell-1})$.}
    \label{fig:mlmamp_signal_flow}
\end{figure*}
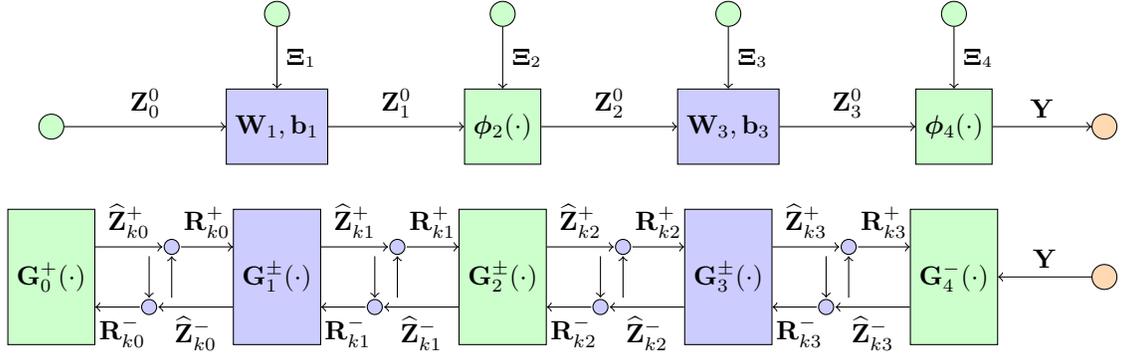

Consider an $L$-layer stochastic neural network given by
\begin{subequations}
\label{eq:nntrue}
\begin{align}
    \Zbf^0_\ell &= \Wbf_{\ell}\Zbf^0_{\lm1} + \Bbf_{\ell} + \Xibf_{\ell}^0,
    \quad &\ell&=1,3,\ldots,\Lm1,
        \label{eq:nnlintrue} \\
    \Zbf^0_{\ell} &= \phibf_\ell(\Zbf^0_{\lm1},\Xibf_{\ell}^0), \quad &\ell& = 2,4,\ldots,L,
        \label{eq:nnnonlintrue} 
\end{align}
\end{subequations}
where, for $\ell=0,1,\ldots,L$, we have 
\textit{true} activations $\Zbf_{\ell}^0\in\Real^{n_\ell\times d}$, 
weights $\Wbf_{\ell}\in \Real^{n_{\ell}\times n_{\ell-1}}$, 
biases $\Bbf_\ell \in\Real^{n_\ell\times d}$, 
and true noise realizations $\Xibf_{\ell}^0$. 
The activation functions $\phibf_\ell$ are known, non-linear functions acting row-wise on their inputs. 
See Fig.~\ref{fig:mlmamp_signal_flow} (TOP). 
We use the superscript $^0$ in $\Zbf_\ell^0$ to indicate
the true values of the variables, in contrast to estimated values discussed later.
We model the true values $\Zbf^0_0$ 
as a realization of random $\Zbf_0$, 
where the rows $\zbf_{0,i:}\tran$ of $\Zbf_0$ are i.i.d.\ with distribution $p_0$: 
\begin{equation}
    p(\Zbf_0)=\prod_{i=1}^{n_0} p_0(\zbf_{0,i:}) \label{eq:input_prior}.
\end{equation}
Similarly, we also assume that $\Xibf_\ell^0$ are realizations of random $\Xibf_\ell$ with i.i.d.\ rows $\xibf_{\ell,i:}\tran$.
For odd $\ell$, the rows $\xibf_{\ell,i:}$ are zero-mean multivariate Gaussian with covariance $\Nbf_{\ell}^{-1}\in\Real^{d\times d}$, 
whereas for even $\ell$, the rows $\xibf_{\ell,i:}$ can be arbitrarily distributed but i.i.d.

Denoting by $\Ybf:=\Zbf_L^0\in \Real^{n_L\times d}$ the output of the network, 
we consider the following matrix inference problem:
\begin{equation}\label{eq:main_problem} \begin{aligned}
&{\rm\ Estimate \ } \Zbf:=\{\Zbf_{\ell}\}_{\ell=0}^{L-1}\\
&{\rm\ given\ } \Ybf:=\Zbf^0_L {\rm\ and\ } \{\Wbf_{2k-1},\Bbf_{2k-1},\phibf_{2k}\}_{k=1}^{L/2} .
\end{aligned}
\end{equation}

This inference problem arises 
in reconstruction with deep generative priors \cite{yeh2016semantic,bora2017compressed}:
A deep neural network is trained as a generative model for some complex data, such as an image. 
The generative model could be a variational auto-encoder (VAE) \cite{rezende2014stochastic,kingma2013auto},
generative adversarial network (GAN) \cite{radford2015unsupervised,salakhutdinov2015learning},
or deep image prior (DIP) \cite{ulyanov2018deep,van2018compressed}.
The generative model is driven by some noise-like innovations signal $\Zbf^0_0$.
Subsequent layers are then appended to model a lossy measurement process
such as blurring, occlusion, or noise, resulting in a corrupted output $\Ybf$.
With the addition of the measurement layer(s), the original image or data manifests as one of the hidden variables in the network.
The problem of estimating the original image from the output $\Ybf$ is then equivalent to the inference problem \eqref{eq:main_problem} of estimating the values of a hidden layer in a multi-layer network from its output.
\iftoggle{conference}{}{
Note that, in this application, the problem is \emph{not} the ``learning problem'' for the network,
since the weights and biases of the network are assumed known (i.e., trained).
}

In many applications, the network \eqref{eq:nntrue} has no
noise $\Xibf_\ell$ for all layers except the final one, $\ell=L$.
In this case, the $\Zbf^0_{L-1} = \gbf(\Zbf_0^0)$ for some 
\emph{deterministic function} $\gbf(\cdot)$ representing the action of the first $L-1$ layers.   Inference can then be conducted
via a minimization of the form,
\beq \label{eq:zmin_det}
    \Zbfhat_0 := \argmin_{\Zbf_0}  H_L(\Ybf,\Zbf_{L-1})
    + H_0(\Zbf_0), 
\eeq
where $\Zbf_{L-1} = \gbf(\Zbf_0)$ is the network output, 
the term $H_L(\Ybf,\Zbf_{L-1})$ 
penalizes the prediction error and $H_0(\Zbf_0)$ is an (optional)
regularizer on the network input.  
For maximum a priori (MAP) estimation one takes,
\begin{subequations} \label{eq:det_map}
\begin{align} 
    H_L(\Ybf,\Zbf_{L-1}) &= -\log p(\Ybf|\Zbf_{L-1}), \\
    H_0(\Zbf_0) &= -\log p(\Zbf_0),
\end{align}
\end{subequations}
where the output probability $p(\Ybf|\Zbf_{L-1})$ is defined
from the last layer of model \eqref{eq:nnnonlintrue}: 
$\Ybf=\Zbf_{L} = \phibf_L(\Zbf_{L-1},\Xibf_{L})$.
The minimization \eqref{eq:zmin_det} can then 
be solved using a gradient-based method.  
Encouraging results in image reconstruction 
have been demonstrated in \cite{yeh2016semantic,bora2017compressed,hand2017global,kabkab2018task,shah2018solving,tripathi2018correction,mixon2018sunlayer}.
Markov-chain Monte Carlo (MCMC) algorithms and Langevin diffusion \cite{cheng2018sharp, welling2011bayesian} could also be employed
for more complex inference tasks.
However, rigorous analysis of these methods is difficult due to the non-convex nature
of the optimization problem.
To address this issue, recent works \cite{manoel2018approximate,fletcher2018inference,pandit2019inference} have extended
Approximate Message Passing (AMP) methods to provide inference algorithms for the multi-layer networks.
AMP was originally developed in \cite{DonohoMM:09,DonohoMM:10-ITW1,BayatiM:11} for compressed sensing.
Similar to other AMP-type results, the performance of multi-layer AMP-based inference 
can be precisely characterized in certain high-dimensional random instances.  
In addition, the mean-squared error for inference of the algorithms match predictions for the Bayes-optimal
inference predicted by various techniques from statistical physics  
\cite{reeves2017additivity,gabrie2018entropy,barbier2019optimal}.
Thus, AMP-based multi-layer inference provides a computationally tractable estimation framework with precise
performance guarantees and testable conditions for optimality in certain high-dimensional random settings.

Prior multi-layer AMP works \cite{he2017generalized,manoel2018approximate,fletcher2018inference,pandit2019inference}
have considered the case of vector-valued quantities with $d=1$.
The main contribution of this paper is to consider the \emph{matrix-valued} case when $d>1$.
As we will see in Section~\ref{sec:applications},
the matrix-valued case applies to multi-task and mixed regression
problems, sketched clustering, as well as learning certain classes of two layer networks.

To handle the case when $d>1$,
we extend the Multi-Layer Vector Approximate Message Passing (ML-VAMP) algorithm of
\cite{fletcher2018inference,pandit2019inference} to the matrix case.  
The ML-VAMP method is based on VAMP method of \cite{rangan2019vamp}, which is closely related to 
expectation propagation (EP) \cite{minka2001expectation,takeuchi2017rigorous},
expectation-consistent approximate inference (EC) \cite{opper2005expectation,fletcher2016expectation},
S-AMP \cite{cakmak2014samp}, and orthogonal AMP \cite{ma2017orthogonal}. 
We will use ``ML-Mat-VAMP'' when referring to the matrix extension of ML-VAMP.
 
Similar to the case of ML-VAMP, we analyze ML-Mat-VAMP in a large system limit, 
where $n_\ell \rightarrow \infty$ and $d$ is fixed, 
under rotationally invariant random weight matrices $\Wbf_\ell$.
In this large system limit, we prove that the mean-squared error (MSE) of the estimates of ML-Mat-VAMP
can be exactly predicted by a deterministic set of equations called the \emph{state evolution} (SE).
\iftoggle{conference}{}{
In the case of ML-VAMP, the SE equations involve scalar quantities
and $2 \times 2$ matrices.
For ML-Mat-VAMP, the SE equations involve $d \times d$ and
$2d \times 2d$ matrices.  For learning problems,
we will see that the SE equations enables predictions
of the parameter error as well as test error.
}

\iftoggle{conference}{}{
\subsubsection*{Notation}:
Boldface uppercase letters $\Xbf$ denote matrices. $\Xbf_{n:}$ refers to the $n^{\rm th}$ row of $\Xbf$. Random vectors are row-vectors. For a function $f:\Real^{1\times m}\rightarrow\Real^{1\times n},$ its row-wise extension is represented by $\fbf:\Real^{N\times m}\rightarrow\Real^{N\times n}$, \ie, $[\fbf(\Xbf)]_{n:}=f(\Xbf_{n:})$. We denote the Jacobian matrix of $f$ by $\tfrac{\partial f}{\partial \x}(\x,\y)\in\Real^{m\times n}$, so that $[\tfrac{\partial f}{\partial \x}(\x,\y)]_{ij}=\tfrac{\partial f_i}{\partial \x_j}(\x,\y)$. For its row-wise extension $\fbf$, we denote by $\inner{\tfrac{\partial \fbf}{\partial \Xbf}(\Xbf,\Ybf)}$ the average Jacobian, \ie, $\tfrac1N\sum_{n=1}^N\tfrac{\partial f}{\partial \Xbf_{n:}}(\Xbf_{n:},\Ybf_{n:})$
} 
\section{Example Applications}
\label{sec:applications}

\subsection{Multi-task and Mixed Regression Problems}
\label{sec:mmv_regression}

A simple application of the 
matrix-valued multi-layer inference problem \eqref{eq:main_problem}
is for \emph{multi-task regression} \cite{obozinski2006multi}.
Consider a linear model of the form,
\beq \label{eq:linear_multi_task}
    \Ybf = \Xbf\Fbf + \Xibf,
\eeq
where $\Ybf \in \R^{N \times d}$ is a matrix of measured responses,
$\Xbf \in \R^{N \times p}$ is a known matrix, $\Fbf \in \R^{p \times d}$ are a set regression coefficients to be estimated,
and $\Xibf$ is additive noise.
The problem can be considered as $d$ separate linear regression
problems -- one for each column.  
However, in some applications, these design ``tasks'' are related in such a way that it benefits to \emph{jointly} estimate the predictors.
To do this, it is common to solve an optimization problem of the form
\begin{equation}
    \arg\min_{\Fbf} \Bigg\{ \sum_{j=1}^d \sum_{i=1}^{N} L(y_{ij},[\Xbf\Fbf]_{ij}) 
        + \lambda \sum_{k=1}^p \rho(\Fbf_{k:}) \Bigg\} 
    \label{eq:multi-task},
\end{equation}
where 
$L(\cdot)$ is a loss function, 
and $\rho(\cdot)$ is a regularizer that acts on 
the rows of $\Fbf_{k:}$ of $\Fbf$ to couple the prediction coefficients across tasks.
For example, the multi-task LASSO \cite{obozinski2006multi} uses loss $L(y,z)=(y-z)^2$ and regularization $\rho(\Fbf_{k:})=\|\Fbf_{k:}\|_2$ to enforce row-sparsity in $\Fbf$.
In the compressive-sensing context, multi-task regression is known as the ``multiple measurement vector'' (MMV) problem, with applications in 
MEG reconstruction \cite{cotter2005sparse}, 
DoA estimation \cite{tzagkarakis2010multiple}, and 
parallel MRI \cite{liang2009parallel}.
An AMP approach to the MMV problem was developed in
\cite{ziniel2012efficient}.
The multi-task model \eqref{eq:linear_multi_task}
can be immediately written as a multi-layer network
\eqref{eq:nntrue} by setting:
\begin{align*}
    \Zbf_0 &:= \Fbf, \quad \Wbf_0 := \Xbf, \\
    \Zbf_1 &:= \Wbf_0\Zbf_0 = \Xbf\Fbf, \quad
    \Ybf=\Zbf_2 := \Zbf_1 + \Xibf.
\end{align*}
Also, by appropriately setting the prior $p(\Zbf_0)$, the multi-layer
matrix MAP inference \eqref{eq:det_map} will match the
multi-task optimization~\eqref{eq:multi-task}.
    
In \eqref{eq:multi-task}, the regularization couples the columns of $\Fbf$ but the loss term does not.
In \emph{mixed regression} problems, the loss couples the columns of $\Fbf$.
For example, consider designing predictors $\Fbf=[\fbf_1,\fbf_2]$ for \emph{mixed linear regression} \cite{yi2014alternating}, i.e.,
\beq \label{eq:linear_mixed}
    y_i = q_i \xbf_i\tran\fbf_1 + (1-q_i) \xbf_i\tran\fbf_2 + v_i, ~~ q_i\in\{0,1\}, 
\eeq
where $i=1,\ldots,N$ and the $i$th response comes 
from one of two linear models, but which model is not known.  This setting can be modeled
by a different output mapping:  As before, set $\Zbf_1=\Xbf\Fbf$ 
and let the noise in the output layer be $\Xibf_1 = [\qbf,\vbf]$
which includes the additive noise $v_i$ in \eqref{eq:linear_mixed}
and the random selection variable $q_i$.  Then, we
can write \eqref{eq:linear_mixed} via an appropriate function,
$\ybf = \phibf_1(\Zbf_1,\Xibf_1)$.  
\iftoggle{conference}{The full paper \cite{pandit2020mlmatvamp-arxiv}
also discusses applications in sketched clustering
\cite{keriven2017sketching,byrne2019sketched}.}{
\subsection{Sketched Clustering}
A related problem arises in \emph{sketched clustering} \cite{keriven2017sketching}, where a massive dataset is nonlinearly compressed down to a short vector $\ybf\in\Real^n$, from which cluster centroids $\fbf_k\in\Real^p$, for $k=1,\dots,d$, are then extracted.
This problem can be approached via the optimization \cite{keriven2017compressive}
\begin{equation}
    \min_{\alphabf\geq\zero} \min_{\Fbf} \sum_{i=1}^{n} \bigg| y_i-\sum_{j=1}^d \alpha_j 
                                                     e^{\sqrt{-1} \xbf_i\tran\fbf_j} \bigg|^2 
    \label{eq:sketched-clustering},
\end{equation}
where $\xbf_i\in\Real^p$ are known i.i.d.\ Gaussian vectors.
An AMP approach to sketched clustering was developed in \cite{byrne2019sketched}.
For known $\alphabf$, the minimization corresponds to 
MAP estimation with the multi-layer matrix model with $\Zbf_0=\Fbf$, $\Wbf_1=\Xbf$
$\Zbf_1 = \Xbf\Fbf$ and using the output mapping,
\[
    \phibf_1(\Zbf_1,\Xibf) := \sum_{j=1}^d \alpha_j 
        e^{\sqrt{-1} \Zbf_{1,:j}} + \Xibf,
\]
where the exponential is applied elementwise and $\Xibf$ is i.i.d.\ Gaussian.  The mapping $\phi_1$
operates row-wise on $\Zbf_1$ and $\Xibf$.
}

\subsection{Learning the Input Layer of a Two-Layer Neural Network} 
\label{sec:2layerNN}
The matrix inference problem \eqref{eq:main_problem} can also be 
applied to 
learning the input layer weights in a two-layer neural network (NN).
Let $\Xbf \in \R^{N \times N_{\rm in}}$ 
and $\Ybf \in \R^{N \times N_{\rm out}}$ 
be training data corresponding to $N$ data samples.
Consider the two-layer NN model,
\begin{equation} \label{eq:two_layer}
    \Ybf = \sigma(\Xbf\bm{F}_1)\bm{F}_2  + \Xibf,
\end{equation}
with weight matrices $(\bm{F}_1,\bm{F}_2)$, 
componentwise activation function $\sigma(\cdot)$,
and noise $\Xibf$. \iftoggle{conference}{
The goal is to learn the weights of the first layer, $\bm{F}_1\in\Real^{N_{\rm in}\times N_{\rm hid}}$, from a 
dataset of $N$ samples in $(\Xbf,\Ybf)$ assuming the second 
layer weights $\Fbf_2$ are known.
}{
In \eqref{eq:two_layer}, the bias terms are omitted for simplicity. 
We used the notation ``$\bm{F}_\ell$'' for the weights,
instead of the standard notation ``$\bm{W}_\ell$,'' 
to avoid confusion when \eqref{eq:two_layer} is mapped to the multi-layer inference network \eqref{eq:main_problem}.
Now, our critical assumption is that the weights in the second layer, $\bm{F}_2$, are known.
The goal is to learn only the weights of the first layer, $\bm{F}_1\in\Real^{N_{\rm in}\times N_{\rm hid}}$, from a 
dataset of $N$ samples $(\Xbf,\Ybf)$. 
}

If the activation is ReLU, i.e., 
$\sigma(\bm{H}) = \max\{ \bm{H}, 0\}$ and $\Ybf$ has a single 
column (i.e.\ scalar output per sample),
and $\bm{F}_2$ has all positive entries, we can w.l.o.g.\ treat the weights $\bm{F}_2$ as fixed, since they can always be absorbed into the weights $\bm{F}_1$.  
In this case, $\ybf$ and $\Fbf_2$ are vectors and
we can write the $i$th entry of $\ybf$ as
\beq \label{eq:com_mac}
    y_i = \sum_{j=1}^d F_{2j}\sigma([\Xbf\bm{F}_1]_{ij}) + \xi_i 
        = \sum_{j=1}^d \sigma([\Xbf\bm{F}_1]_{ij}F_{2j}) + \xi_i 
\eeq
Thus, we can assume w.l.o.g.\ that $\Fbf_2$ is all ones.  
The parameterization \eqref{eq:com_mac} is sometimes referred to as the \emph{committee machine} \cite{tresp2000bayesian}.
The committee machine has been recently studied by AMP methods \cite{aubin2018committee} 
and mean-field methods \cite{mei2018mean} as a way to understand the dynamics of learning.  

To pose the two-layer learning problem as multi-layer inference, 
define
\[
    \bm{Z}_0 := \bm{F}_1, 
    \quad 
    \bm{W}_1 := \Xbf, 
    \quad
    \bm{Z}_1 := \Xbf\bm{F}_1 
    \quad
    \Xibf_2 := \Xibf ,
\]
then $\bm{Y}=\bm{Z}_2$, where $\bm{Z}_2$ is the output of a 2-layer inference network of the form in \eqref{eq:nntrue}:
\beq \label{eq:two_layer_ml}
    \Ybf = \bm{Z}_2 = \phibf_2(\bm{Z}_1, \Xibf_2) := \sigma(\bm{Z}_1)\bm{F}_2 + \Xibf_2.
\eeq
\iftoggle{conference}{}{
Note that $\bm{W}_1$ is known.  
Also, since we have assumed that $\bm{F}_2$ is known, the function $\phibf_2$ is known.
Finally, the function $\phibf_2$ is row-wise separable on both inputs.  
Thus, the problem of learning the input weights $\bm{F}_1$ is equivalent to learning the input $\bm{Z}_0$ of the network \eqref{eq:two_layer_ml}.
} 
\vspace{5pt}
\section{Multi-layer Matrix VAMP}

\subsection{MAP and MMSE inference}
Observe that the equations \eqref{eq:nntrue} define a Markov chain over these signals and thus the posterior $p(\Zbf|\Zbf_L)$ factorizes as
\begin{equation}\label{eq:posterior}
p(\Zbf|\Zbf_L)\propto p(\Zbf_0)\prod_{\ell=1}^{L-1} p(\Zbf_{\ell}|\Zbf_{\ell-1})\,p(\Ybf|\Zbf_{L-1}).
\end{equation}
where the transition probabilities $p(\Zbf_\ell|\Zbf_{\ell-1})$ are implicitly defined in equation \eqref{eq:nntrue} and depend on the statistics of noise terms $\Xibf_\ell$.
We consider both maximum \emph{a posteriori} (MAP) and minimum mean squared error (MMSE) estimation
for this posterior:
\begin{subequations}\label{eq:estimators}
\begin{align}
    \hat{\Zbf}_{\sf{map}} &= \argmax_{\Zbf}\  p(\Zbf|\Zbf_L)\label{eq:MAP_def}\\
    \hat{\Zbf}_{\msf{mmse}} & =\Exp[\Zbf|\Zbf_L]=\int \Zbf\, p(\Zbf|\Zbf_L)\dif\Zbf\label{eq:MMSE_def}.
\end{align}
\end{subequations}
\subsection{Algorithm Details}

The ML-Mat-VAMP for approximately computing the MAP and MMSE
estimates is similar to the ML-VAMP method in \cite{fletcher2018inference,pandit2019asymptotics}.
The specific iterations of ML-Mat-VAMP 
algorithm are shown in Algorithm \ref{algo:ml-mat-vamp}. 
The algorithm produces estimates by a sequence of forward and backward pass updates denoted by superscripts $^+$ and $^-$ respectively. 

The estimates $\Zbfhat^\pm_\ell$ are constructed by solving sequential problems $\Zbf=\{\Zbf_\ell\}_{\ell=0}^{\ell-1}$ into a sequence of smaller problems each involving estimation of a single activation or preactivation $\Zbf_\ell$ via \emph{estimation functions} $\{\Gbf_\ell^\pm(\cdot)\}_{\ell=1}^{L-1}$ which are selected
depending on whether one is
interested in MAP or MMSE estimation.  

To describe the estimation functions, we use the notation
that, 
for a positive definite matrix $\Gammabf$, define the inner product $\inner{\Abf, \Bbf}_{\Gamma}:=\Tr(\Abf\tran \Bbf \Gammabf)$ and let $\norm{\Abf}_{\Gammabf}$ denote the norm induced by this inner product. For $\ell = 1, \ldots, L-1$ define the approximate belief functions
\begin{equation}\label{eq:def_belief}
\begin{aligned}
    \MoveEqLeft b_\ell(\Zbf_\ell,\Zbf_{\ell-1}|\Rbf_\ell^-,\Rbf_{\ell-1}^+,\Gammabf_\ell^-,\Gammabf_{\ell-1}^+)
     \propto p(\Zbf_\ell|\Zbf_{\ell-1})
    \\
    &\times e^{-\frac{1}2\norm{\Zbf_\ell-\Rbf_\ell^-}_{\Gammabf_{\ell}^-}^2-\frac{1}{2}\norm{\Zbf_{\ell-1}-\Rbf_{\ell-1}^+}_{\Gammabf_{\ell-1}^+}^2}.
\end{aligned}
\end{equation}
Define $b_0(\Zbf_0|\Rbf_0^-,\Gammabf_0^-)$ and
$b_L(\Zbf_{L-1}|\Rbf_{L-1}^+,\Gammabf_{L-1}^+)$
similarly.

\begin{algorithm}[t]
\caption{Multilayer Matrix VAMP (ML-Mat-VAMP)}
\begin{algorithmic}[1]  \label{algo:ml-mat-vamp}
\REQUIRE{Estimators $\Gbf_0^+$, $\Gbf_L^-$, $\{\Gbf_\ell^\pm\}_{\ell=1}^{L-1}$.}\STATE{Set $\Rbf^-_{0\ell}=\zero$ and initialize $\{\Gammabf_{0\ell}^-\}_{\ell=0}^{L-1}$.}
\FOR{$k=0,1,\dots,N_{\rm it}-1$}

    \STATE{// \texttt{Forward Pass} }
    \STATE{$\Zbfhat^+_{k0} = \Gbf_0^+(\Rbf^-_{k0},\Gammabf_{k0}^-)$}
        \label{line:zp0}
    \STATE{$\Lambdabf^+_{k0} = \bkt{\tfrac{\partial \Gbf_0^+}{\partial \Rbf^-_{k,0}}(\Rbf^-_{k0},\Gammabf_{k0}^-)
            }^{-1}\Gammabf_{k,0}^-, $}
            \STATE{$ \Gammabf_{k,0}^+ = \Lambdabf_{k,0}^+-\Gammabf_{k,0}^-$}
            \label{line:alphap0}
    \STATE{$\Rbf^+_{k,0} = (\Zbfhat^+_{k,0}\Lambdabf_{k,0}^+ - \Rbf^-_{k,0}\Gammabf^-_{k,0})(\Gammabf_{k,0}^{+})^{-1}$}
            \label{line:rp0}
    \FOR{$\ell=1,\ldots,\Lm1$}
        \STATE{$\Zbfhat^+_{k\ell} =
        \Gbf_\ell^+(\Rbf^-_{k\ell},\Rbf^+_{k,\lm1},\Gammabf_{k\ell}^-, \Gammabf^+_{k,\ell-1})$}
        \label{line:zp}
        \STATE{$\Lambdabf^+_{k\ell} = \bkt{\tfrac{\partial \Gbf_\ell^+}{\partial \Rbf^-_{\ell}}(\Rbf^-_{k\ell},\Rbf^+_{k,\lm1},\Gammabf_{k\ell}^-, \Gammabf^+_{k,\ell-1})}^{-1}\Gammabf_{k,\ell}^-, $}
        \STATE{$ \Gammabf_{k,\ell}^+ = \Lambdabf_{k,\ell}^+-\Gammabf_{k,\ell}^-$}
            \label{line:alphap}
\STATE{$\Rbf^+_{k\ell} = (\Zbfhat^+_{k,\ell}\Lambdabf_{k,\ell}^+ - \Rbf^-_{k,\ell}\Gammabf^-_{k,\ell})(\Gammabf_{k,\ell}^{+})^{-1}$}     \label{line:rp}
    \ENDFOR
    \STATE{}

    \STATE{// \texttt{Backward Pass} }
    \STATE{$\Zbfhat^-_{k,\Lm1} =
        \Gbf_{L}^-(\Rbf^+_{k,\Lm1},\Gammabf_{k,\Lm1}^+)$}   \label{line:znL}
    \STATE{$\Lambdabf^-_{k,\Lm1} = \bkt{\tfrac{\partial
        \Gbf_{L}^-}{\partial \Rbf^+_{k,\Lm1}}(\Rbf^+_{k,\Lm1},\Gammabf_{k,\Lm1}^+) }^{-1}\Gammabf_{k,\Lm1}^+, $}
    \STATE{$ \Gammabf_{k,\Lm1}^- = \Lambdabf_{k,\Lm1}^--\Gammabf_{k,\Lm1}^+$}
        \label{line:alphanL}
   \STATE{$\Rbf^-_{\kp1,\Lm1} = (\Zbfhat^-_{k,\Lm1}\Lambdabf_{k,\Lm1}^- - \Rbf^+_{k,0}\Gammabf^+_{k,0})(\Gammabf_{k,0}^{-})^{-1}$}
            \label{line:rnL}
    \FOR{$\ell=\Lm1,\ldots,1$}
        \STATE{$\Zbfhat^-_{k+1,\ell-1} =
            \Gbf_{\ell}^-(\Rbf^-_{k+1,\ell},\Rbf^+_{k,\ell-1},\Gammabf_{k+1,\ell}^-, \Gammabf^+_{k,\ell-1})$}
            \label{line:zn}
        \STATE{$\Lambdabf^-_{k+1,\ell-1} = \bkt{\tfrac{\partial
        \Gbf_{\ell}^-}{\partial \Rbf^+_{\ell-1}}(\cdots)}^{-1}\Gammabf_{k,\lm1}^+, $}
        \STATE{$ \Gammabf_{k+1,\ell}^- = \Lambdabf_{k,\ell}^--\Gammabf_{k,\ell}^+$}
            \label{line:alphan}
        \STATE{$\Rbf^-_{\kp1,\ell-1} = (\Zbfhat^-_{k,\ell}\Lambdabf_{k,\ell}^- - \Rbf^+_{k,\ell}\Gammabf^+_{k,\ell})(\Gammabf_{k+1,\ell}^{-})^{-1}$}
            \label{line:rn}
    \ENDFOR

\ENDFOR
\end{algorithmic}
\end{algorithm}

The MAP and MMSE estimation functions are then given by the
MAP and MMSE estimates for these belief densities,
\begin{subequations}\label{eq:subestimators}
\begin{align}
\Gbf_{\ell,\msf{map}}^\pm= (\Zbfhat_\ell^+,\Zbfhat_{\ell-1}^-)_{\msf{map}}&={\rm argmax}  \ b_\ell(\Zbf_\ell,\Zbf_{\ell-1})
 \label{eq:denoiser_map}\\
 \Gbf_{\ell,\msf{mmse}}^\pm=(\Zbfhat_\ell^+,\Zbfhat_{\ell-1}^-)_{\msf{mmse}}&= \Exp[(\Zbf_\ell,\Zbf_{\ell-1})|b_\ell]  
 \label{eq:denoiser_mmse}
\end{align}
\end{subequations}
where the expectation is with respect to the normalized density proportional to $b_\ell$. 
Thus, the ML-Mat-VAMP algorithm reduces the joint estimation
of the vectors $(\Zbf_0,\ldots,\Zbf_{L-1})$ 
to a sequence of simpler estimations on sub-problems with terms $(\Zbf_{\ell-1},\Zbf_\ell)$.
We refer to these subproblems as denoisers and denote their solutions by $\Gbf_\ell^\pm$, so that $\Zbfhat^+_\ell=\Gbf_\ell^+$ and $\Zbfhat^-_{\ell-1}=\Gbf_\ell^-$ corresponding to lines \ref{line:zp} and \ref{line:zn} of Algorithm~\ref{algo:ml-mat-vamp}. The denoisers $\Gbf_0^+$ and $\Gbf_L^-$, which provide updates to $\Zbfhat_0^+$ and $\Zbfhat_{L-1}^-$, are defined in a similar manner via $b_0$ and $b_L$ respectively.

\iftoggle{conference}{As shown in detail in the full paper \cite{pandit2020mlmatvamp-arxiv}, the}{The} estimation 
functions \eqref{eq:subestimators} can be easily computed
for the multi-layer matrix network. An important characteristic of these estimators is that they can be computed using maps which are row-wise separable over their inputs and hence are easily parallelizable.
\iftoggle{conference}{}{To simplify notation, we denote the precision parameters for denoisers $\Gbf_\ell^\pm$ in the $k^{\rm th}$ iteration by \begin{equation}\label{eq:thetadef}
\begin{aligned}
\MoveEqLeft\Thetabf_{k\ell}^+:=(\Gammabf_{k\ell}^-,\Gammabf_{k,\ell-1}^+),
\quad  &\Thetabf_{k\ell}^-&:= (\Gammabf_{k+1,\ell}^-,\Gammabf_{k,\ell-1}^+),\\
\MoveEqLeft\Thetabf_{k0}^+:= \Gammabf_{k0}^-, 
\quad &\Thetabf_{kL}^-&:=\Gammabf_{k,L-1}^+.
\end{aligned}
\end{equation}
}
\subsubsection{Non-linear layers}: For $\ell$ even, since the rows of $\Xibf_\ell$ are i.i.d., the belief density density $b_\ell(\Zbf_\ell,\Zbf_{\lm1}|\cdot)$ from \eqref{eq:def_belief} factors as a product across rows.
\begin{align*}
    b_\ell(\Zbf_\ell,\Zbf_{\lm1})= \prod_n b_\ell([\Zbf_{\ell}]_{n:},[\Zbf_{\lm1}]_{n:})
\end{align*}
Thus, the MAP and MMSE estimates \eqref{eq:subestimators} can be     performed over $d$-dimensional variables where     $d$ is the number of entries in each row.  There
    is no joint estimation across the different $n_\ell$ rows.
\subsubsection{Linear layers}  When $\ell$ is odd,
    the density $b_\ell(\Zbf_\ell,\Zbf_{\ell-1}|\cdot)$
    in \eqref{eq:def_belief} is a Gaussian.  Hence,
    the MAP and MMSE estimates agree and can be
    computed via least squares.  

\iftoggle{conference}{}{

Although for linear layers $[\Gbf_\ell^+,\Gbf_\ell^-](\Rbf_\ell^-,\Rbf_{\lm1}^+,\Thetabf_\ell)$ is not row-wise separable over $(\Rbf_\ell^-,\Rbf_{\lm1})$, it can be computed using another row-wise denoiser 
$[\wt\Gbf_\ell^+,\wt\Gbf_\ell^-]$ via the SVD of the weight matrix $\Wbf_\ell=\Vbf_\ell\diag(\Sbf_\ell)\Vbf_{\lm1}$ as follows. Note that the SVD is only needed to be performed once.:
\begin{align*}
    [\Gbf_\ell^+,\Gbf_\ell^-] &= \underset{\Zbf_\ell,\Zbf_{\lm1}}{\rm argmax} \norm{\Zbf_\ell-\Wbf_\ell\Zbf_{\lm1}-\Bbf_\ell}_{\Nbf_\ell}^2\\
    &+\norm{\Zbf_\ell-\Rbf_\ell^-}_{\Gammabf_{\ell}^-}^2+\norm{\Zbf_{\lm1}-\Rbf_{\lm1}^+}_{\Gammabf_{\lm1}^+}^2\\
    \overset{\rm (a)}= \underset{\Zbf_\ell,\Zbf_{\lm1}}{\rm argmax}& \norm{\Vbf_\ell\tran\Zbf_\ell-\diag(\Sbf_\ell)\Vbf_{\lm1}\Zbf_{\lm1}-\Vbf_\ell\tran\Bbf_\ell}_{\Nbf_\ell}^2\\
    &+\norm{\Vbf_\ell\tran\Zbf_\ell-\Vbf_\ell\tran\Rbf_\ell^-}_{\Gammabf_{\ell}^-}^2\\
    &+\norm{\Vbf_{\lm1}\Zbf_{\lm1}-\Vbf_{\lm1}\Rbf_{\lm1}^+}_{\Gammabf_{\lm1}^+}^2\\
\overset{\rm (b)}= [\Vbf_\ell\tran\wt\Gbf_\ell^+,&\Vbf_{\lm1}\wt\Gbf_\ell^-](\V_\ell\tran\Rbf_\ell,\V_{\lm1}\Rbf_{\lm1},\Thetabf_\ell)
\end{align*}
where (a) follows from the rotational invariance of the norm, and (b) follows from the definition of
denoiser $[\wt\Gbf_\ell^+,\wt\Gbf_\ell^-](\wt\Rbf_\ell^-,\wt\Rbf_{\lm1}^+,\Thetabf_\ell)$ given below
\begin{align}\label{eq:Gtildedef}
[\wt\Gbf_\ell^+,\wt\Gbf_\ell^-] &:= \underset{\wt\Zbf_\ell,\wt\Zbf_{\lm1}}{\rm argmax} \norm{\wt\Zbf_\ell-\diag(\Sbf_\ell)\wt\Zbf_{\lm1}-\wt\Bbf_\ell}_{\Nbf_\ell}^2\nonumber\\
    &+\norm{\wt\Zbf_\ell-\wt\Rbf_\ell^-}_{\Gammabf_{\ell}^-}^2+\norm{\wt\Zbf_{\lm1}-\wt\Rbf_{\lm1}^+}_{\Gammabf_{\lm1}^+}^2
\end{align}
Note that the optimization problem in \eqref{eq:Gtildedef}, is decomposable accross the rows of variables $\wt\Zbf_\ell$ and $\wt\Zbf_{\lm1}$, and hence $[\wt\Gbf_\ell^+,\wt\Gbf_\ell^-]$ operates row-wise on its inputs.
}

\section{Analysis in the Large System Limit }

\iftoggle{conference}{}{
\subsection{Large System Limit}
}

We follow the analysis framework of the ML-VAMP
work \cite{fletcher2018inference,pandit2019asymptotics},
which is itself based on the original AMP analysis 
in \cite{BayatiM:11}.
This analysis is based on considering the asymptotics of
cerain large random problem instances.
Specifically, we consider a sequence of problems \eqref{eq:nntrue}
indexed by $N$ such that for each problem the dimensions $n_\ell(N)$ are growing so that $ \lim_{N\rightarrow\infty} \tfrac{n_\ell}{N}= \beta_\ell \in (0,\infty)$ are scalar constants. Note that $d$ is finite and does not grow with $N$.

\subsubsection{Distributions of weight matrices}
For $\ell=1,3,\ldots,L-1$, 
we assume that the weight matrices $\Wbf_\ell$ are generated
via the singular value decomposition,
\begin{align}
    \label{eq:svd}
\Wbf_\ell = \Vbf_\ell\diag(\Sbf_\ell)\Vbf_{\ell-1}
\end{align}
where $\V_\ell\in\Real^{n_\ell\times n_\ell}$ are Haar distributed over orthonormal matrices and 
$\Sbf_\ell=(s_{\ell,1},\ldots,s_{\ell,{\rm min}\{n_\ell,n_{\lm1}\}})$. 
We will describe the distribution of the 
components $\Sbf_\ell$ momentarily.

\subsubsection{Assumption on Denoisers}
We assume that the non-linear denoisers $\Gbf_{2k}^\pm$ act row-wise on their inputs $(\Rbf_{2k}^-,\Rbf_{2k-1}^+)$. Further these operators and their Jacobian matrices $\tfrac{\partial\Gbf_{2k}^+}{\partial\Rbf_{2k}^-},\tfrac{\partial\Gbf_{2k}^-}{\partial\Rbf_{2k-1}^+},\tfrac{\partial\Gbf_{0}^+}{\partial\Rbf_{0}^-}, \tfrac{\partial\Gbf_{L}^-}{\partial\Rbf_{L-1}^+}$ are  \textit{uniformly Lipschitz continuous}, the definition of which is provided in \iftoggle{conference}{the full paper \cite{pandit2020mlmatvamp-arxiv}.}{\ref{sec:lsl_details}.}

\subsubsection{Assumption on Initialization, True variables}
The distribution of the remaining variables are described by a weak limit:
For a matrix sequence $\bm{X}:=\bm{X}(N)\in\Real^{N\times d}$, by the notation $\bm{X}\xRightarrow{2} X$ we mean that  there exists a random variable $X$ in $\Real^d$ with $\Exp\|X\|^2 < \infty$ such that $\lim_{N\rightarrow\infty}\sum_{i=1}^N\psi(\bm{X}_{i:})=\Exp\,\psi(X)$ almost surely, for any  bounded continuous function $\psi:\Real^d\rightarrow\Real$, as well as for quadratic functions $\x\T\bm{P}\x$ for any $\bm{P}\in \Real^{d\times d}_{\succeq 0}$. This is also referred to as Wasserstein-2 convergence \cite{montanari2019generalization}. For e.g., this property is satisfied for a random $\X$ with i.i.d.\ rows with bounded second moments, but is more general, since it applies to deterministic matrix sequences as well. More details on this weak limit are given in the 
\ref{sec:lsl_details}.

Let $\wb\Bbf_\ell:=\Vbf_\ell\tran\Bbf_\ell$, and $\wb\Sbf_\ell\in\Real^{n_\ell}$ be the zero-padded vector of singular values of $\Wbf_\ell$, and let $\taubf_{0\ell}^-\in\Real^{d\times d}_{\succ 0}$. Then we assume that the following empirical convergences hold.\begin{subequations}
\begin{align*}
    &\Zbf^0_{0} \xRightarrow{2} Z^0,\\
    &(\Xibf_{\ell},\Rbf_{0\ell}^--\Zbf^0_{\ell})
    \xRightarrow{2}(\Xi_{\ell},Q_{0\ell}^-), \quad &\ell
    \mbox{ even}  \\
    &(\wb\Sbf_{\ell},\wb\Bbf_{\ell},\Xibf_{\ell},\Vbf_\ell\T(\Rbf_{0\ell}^--\Zbf^0_{\ell}))  \xRightarrow{2}(S_{\ell},\wb B_{\ell},\Xi_{\ell},Q_{0\ell}^-),
    \quad &\ell \mbox{ odd}
\end{align*}
\end{subequations}
where $S_\ell\in\Real_{\geq 0}$ is bounded,  $\wb B_\ell\in\Real^d$ is bounded, $\Xi_{2\ell-1}\sim\mc N(\zero,\Nbf_{2\ell-1}^{-1})$, and 
\begin{align}\label{eq:init_random_vars}
Q_{0\ell}^-\sim \mc N(\zero,\taubf_{0\ell}^-),\quad \ell=0,1,\ldots,L-1
\end{align}
are all pairwise independent random variables. We also assume that the sequence of initial matrices $\{\Gammabf_{0\ell}^-\}$ satisfy the following convergence pointwise
\begin{align}\label{eq:init_precisions}
\Gammabf_{0\ell}^-(N)\rightarrow \Gammabfbar_{0\ell}^-,\quad \ell=0,1,\ldots,L-1
\end{align}

\iftoggle{conference}{}{
\subsection{Main Result}

The main result of this paper concerns the empirical distribution of the rows $[\Zbfhat_\ell^\pm]_{n:},[\Rbf_\ell^\pm]_{n:}$ of the iterates of Algorithm \ref{algo:ml-mat-vamp}. It characterizes the asymptotic behaviour of these empirical distributions in terms of $d$-dimensional random vectors which are either Gaussians or functions of Gaussians. Let $G_\ell^\pm$ denote maps $\Real^{1\times d}\rightarrow\Real^{1\times d},$ such that \eqref{eq:subestimators}, \ie, 
\begin{align*}[\Gbf_\ell^\pm(\Rbf_\ell^-,\Rbf_{\lm1}^+,\Thetabf)]_{n:} = G^\pm_\ell([\Rbf_\ell^-]_{n:},[\Rbf_{\lm1}^+]_{n:},\Thetabf).
\end{align*}
Having stated the requisite definitions and assumptions, we can now state our main result.
}
\setlength{\belowdisplayskip}{7pt}
\begin{theorem} \iftoggle{conference}{\label{thm:se_short}}{\label{thm:main_result}}
For a fixed iteration index $k\geq 0$, 
there exist deterministic matrices $\Kbf_{k\ell}^+\in\Real^{2d\times 2d}_{\succ 0}$, and $\taubf_{k\ell}^-,\wb\Gammabf^+_{k\ell}$ and $\wb\Gammabf^-_{k\ell},\in\Real^{d\times d}_{\succ 0}$ such that for even $\ell$:
\begin{align}
\left(\begin{matrix}\Zbf_{\ell-1}^0\\\Zbf_{\ell}^0\\\Zbfhat_{k,\ell-1}^-\\\Zbfhat_{k\ell}^+\end{matrix}\right) \xRightarrow{2}
    \begin{pmatrix}\mathsf{A}\\
    \wt{\mathsf{A}}\\
    G_\ell^-(\mathsf{C}+\wt{\mathsf{A}},\mathsf{B}+{\mathsf{A}},\wb\Gammabf_{k\ell}^-,\wb\Gammabf_{k,\ell-1}^+)\\
    G_\ell^+(\mathsf{C}+\wt{\mathsf{A}},\mathsf{B}+{\mathsf{A}},\wb\Gammabf_{k\ell}^-,\wb\Gammabf_{k,\ell-1}^+)
    \end{pmatrix}
    \label{eq:zconv_even}
\end{align}
where $(\mathsf{A},\mathsf{B})\sim\mc N(\zero,\Kbf_{k,\ell-1}^+)$, $\mathsf{C}\sim \mc N(\zero,\taubf_{k\ell}^-)$, 
$\wt{\mathsf{A}} = \phi_\ell(\mathsf{A},\Xi_\ell)$ and $(\msf A,\msf B),\msf C$ are independent. \iftoggle{conference}{}{For $\ell=0$, the same result holds where the $1^{\rm st}$ and $3^{\rm rd}$ terms are dropped, whereas for $\ell=L,$ the $2^{\rm nd}$ and $4^{\rm th}$ terms are dropped.}
\iftoggle{conference}{}{

Similarly, for odd $\ell$:
\begin{align}
    \left(\begin{matrix}\Vbf_{\ell-1}\tran\Zbf_{\ell-1}^0\\\Vbf_{\ell-1}\tran\Zbf_{\ell}^0\\\Vbf_{\ell}\Zbfhat_{k,\ell-1}^-\\\Vbf_{\ell}\Zbfhat_{k\ell}^+\end{matrix}\right) \xRightarrow{2}
    \begin{pmatrix}\mathsf{A}\\
    \wt{\mathsf{A}}\\
    G_\ell^-(\mathsf{C}+\wt{\mathsf{A}},\mathsf{B}+{\mathsf{A}},\wb\Gammabf_{k\ell}^-,\wb\Gammabf_{k,\ell-1}^+)\\
    G_\ell^+(\mathsf{C}+\wt{\mathsf{A}},\mathsf{B}+{\mathsf{A}},\wb\Gammabf_{k\ell}^-,\wb\Gammabf_{k,\ell-1}^+)
    \end{pmatrix}
    \label{eq:zconv_odd}
\end{align}
where $(\mathsf{A},\mathsf{B})\sim\mc N(\zero,\Kbf_{k,\ell-1}^+)$, $\mathsf{C}\sim \mc N(\zero,\taubf_{k\ell}^-)$, 
$\wt{\mathsf{A}} = S_\ell\, \mathsf{A} + \wb{B}_{\ell}+\Xi_\ell$ and $(\msf A,\msf B),\msf C$ are independent. 
}

Furthermore for $\ell=0,1,\ldots L-1$, we have
\begin{align}
(\Gammabf_{k\ell}^\pm,\Lambdabf_{k\ell}^\pm)\xrightarrow{a.s.} (\Gammabfbar_{k\ell}^\pm,\Lambdabfbar_{k\ell}^\pm).
\end{align}
\end{theorem}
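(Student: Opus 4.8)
The plan is to follow the state-evolution program established for scalar ML-VAMP in \cite{fletcher2018inference,pandit2019asymptotics}, which itself descends from the original AMP analysis \cite{BayatiM:11}, and to lift every scalar or $2\times 2$ object to its $d\times d$ or $2d\times 2d$ matrix counterpart. First I would recast Algorithm~\ref{algo:ml-mat-vamp} as an abstract recursion that alternates between multiplication by the Haar matrices $\Vbf_\ell,\Vbf_\ell\tran$ of the SVD \eqref{eq:svd} and application of row-wise separable, uniformly Lipschitz maps. The linear-layer denoisers were already reduced, via the change of variables $\Zbf_\ell\mapsto\Vbf_\ell\tran\Zbf_\ell$, to the row-separable map $[\wt\Gbf_\ell^+,\wt\Gbf_\ell^-]$; performing this substitution throughout decouples the algorithm into deterministic row-wise nonlinear updates and orthogonal mixing by $\Vbf_\ell$. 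The transformed iterates in \eqref{eq:zconv_odd} are precisely those produced by this canonical recursion, while the even-layer denoisers are already row-separable, giving \eqref{eq:zconv_even}.

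Second, I would define the deterministic SE matrices $\Kbf_{k\ell}^+,\taubf_{k\ell}^-,\Gammabfbar_{k\ell}^\pm,\Lambdabfbar_{k\ell}^\pm$ by a recursion mirroring lines \ref{line:zp0}--\ref{line:rn} of the algorithm, but with every empirical average replaced by an expectation under the limiting Gaussian model of \eqref{eq:zconv_even}--\eqref{eq:zconv_odd}. The base case at $k=0$ is supplied directly by the initialization hypotheses \eqref{eq:init_random_vars} and \eqref{eq:init_precisions}.

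Third, and at the heart of the argument, I would prove the two $W_2$ limits and the almost-sure limit by a joint induction over the half-iteration index $(k,\ell)$ through the forward and backward passes. The engine is the conditioning (Bolthausen) technique adapted to Haar matrices: conditioned on the $\sigma$-algebra generated by all iterates computed before the current multiplication by $\Vbf_\ell$, a Haar-distributed $\Vbf_\ell$ constrained to send the previously revealed inputs to their previously revealed images remains isotropic on the orthogonal complement. Hence the new message equals a deterministic linear combination of past messages plus an independent Gaussian term whose per-row covariance is the $d\times d$ (respectively $2d\times 2d$) block matrix appearing in $\taubf_{k\ell}^-$ and $\Kbf_{k\ell}^+$. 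The Onsager-type corrections built into lines \ref{line:rp} and \ref{line:rn} through $\Lambdabf_{k\ell}^\pm$ and $\Gammabf_{k\ell}^\pm$ are exactly what cancels the deterministic memory term, leaving an asymptotically Gaussian message that is, in the limit, independent of the signal. Feeding this through the uniformly Lipschitz denoiser and invoking a matrix-valued strong law for empirical averages yields \eqref{eq:zconv_even}--\eqref{eq:zconv_odd}; the convergence $(\Gammabf_{k\ell}^\pm,\Lambdabf_{k\ell}^\pm)\xrightarrow{a.s.}(\Gammabfbar_{k\ell}^\pm,\Lambdabfbar_{k\ell}^\pm)$ then follows because each is a continuous functional of the average Jacobian $\bkt{\partial\Gbf_\ell/\partial\Rbf}$, which converges by uniform Lipschitz continuity of the Jacobians together with the just-established $W_2$ convergence of the denoiser inputs.

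The main obstacle is the matrix bookkeeping in this inductive step. In the scalar theory the Onsager correction is a single divergence and the conditional covariance is a scalar; here the ``divergence'' is the average Jacobian matrix in $\Real^{d\times d}$, the updates in lines \ref{line:rp}/\ref{line:rn} involve genuinely non-commuting products of $\Lambdabf,\Gammabf\in\Real^{d\times d}$, and one must verify that these non-commutative products still produce the exact cancellation of the memory term so that conditional Gaussianity survives. Establishing that the average Jacobian concentrates to a deterministic $d\times d$ matrix, rather than merely a scalar, and that the Gaussian messages carry the claimed block covariance $\Kbf_{k\ell}^+$, requires upgrading the scalar law-of-large-numbers lemmas of \cite{pandit2019asymptotics} to matrix-valued empirical averages, for which the uniform Lipschitz hypothesis on the Jacobians is exactly the regularity needed to pass the limit through the now matrix-valued nonlinearities.
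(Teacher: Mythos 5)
Your proposal matches the paper's proof in both structure and substance: the paper likewise reduces Algorithm~\ref{algo:ml-mat-vamp} (via the SVD change of variables and transformed error quantities) to a general recursion of row-wise uniformly Lipschitz maps interleaved with Haar-matrix multiplications (Algorithm~\ref{algo:gen}), defines the matrix-valued SE (Algorithms~\ref{algo:ml-mat-vamp_se} and~\ref{algo:gen_se}), and proves the general convergence result (Theorem~\ref{thm:general_convergence}) by the same $(k,\ell)$-indexed induction over forward and backward passes, using Bolthausen-style conditioning on $\Vbf_\ell$, a deterministic-plus-random decomposition of each new iterate, the asymptotically divergence-free (Onsager) conditions combined with a multivariate Stein lemma to kill the cross-correlations, and uniform Lipschitz continuity to pass limits through the denoisers and parameter updates. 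Your identification of the matrix bookkeeping (non-commuting $d\times d$ products, upgrading scalar law-of-large-numbers lemmas to matrix-valued empirical averages) as the principal new difficulty relative to the $d=1$ case is exactly the paper's own characterization of what changes from \cite{pandit2019inference}.
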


\iftoggle{conference}{Due to space limitations, 
the proof of the result is given in the full paper
\cite{pandit2020mlmatvamp-arxiv} along with a similar statement for the indices when $\ell$ is odd.  In addition, the}
{The} parameters in the distribution, $\{\Kbf_{k\ell}^+,\taubf_{k\ell}^-,\wb\Gammabf_{k\ell}^\pm,\wb\Lambdabf_{k\ell}^\pm\}$ are deterministic and
can be computed via a set of recursive equations
called the \emph{state evolution} or SE. 
\iftoggle{conference}{The SE equations are also provided in the full paper.}{The SE equations are provided in \ref{sec:state_evolution}}
The result is similar to those for ML-VAMP in
\cite{fletcher2018inference,pandit2019inference}
except that the SE equations for ML-Mat-VAMP 
involve $d \times d$ and $2d\times 2d$ 
matrix terms; the ML-VAMP SE requires
only require scalar and $2 \times 2$ terms.
The result holds for both MAP inference and MMSE inference, the only difference is implicit, \ie, the choice of denoiser 
$\Gbf_\ell(\cdot)$ from eqn. \eqref{eq:subestimators}.

The importance of Theorem~\iftoggle{conference}{\ref{thm:se_short}}{\ref{thm:main_result}}
is that the rows of the iterates of the ML-Mat-VAMP
Algorithm ($\Zbfhat_{k,\ell-1}^-,\Zbfhat_{k\ell}^+$ in Algorithm~\ref{algo:ml-mat-vamp}) and the rows of the
corresponding true
values, $\Zbf_{\ell-1}^0,\Zbf_{\ell}^0$, have a simple, asymptotic  random vector description of a typical row.  
We will call this the ``row-wise" model.  \iftoggle{conference}{In this model,}{According to this model, for even $\ell$,} 
the rows of 
$\Zbf_{\ell-1}^0$ converge to a Gaussian $\mathsf{A} \in \R^d$ 
and the rows of $\Zbf_{\ell}^0$ converge to the 
output of the Gaussian through the row-wise function $\phi_\ell$,
$\wt{\mathsf{A}} = \phi_\ell(\mathsf{A},\Xi_\ell)$.
Then the
rows of the estimates $\Zbfhat_{k,\ell-1}^-,\Zbfhat_{k\ell}^+$
asymptotically approach to the outputs of row-wise
estimation function $G^+(\cdot)$ and $G^+(\cdot)$
supplied by $\mathsf{A}$ and $\wt{\mathsf{A}}$ corrupted
with Gaussian noise. \iftoggle{conference}{}{A similar convergence holds for odd $\ell$.}

This ``row-wise" model
enables exact an analysis of the performance of the
estimates at each iteration. For example,
to compute a weighted mean squared error (MSE)  metric at iteartion $k$, the convergence \eqref{eq:zconv_even} shows
that,
\begin{align*}
    \MoveEqLeft \tfrac{1}{n_\ell}\norm{\Zbfhat_{k\ell}^+-\Zbf^0_\ell}^2_{\Hbf} \xrightarrow{a.s.} \Exp\|\Gbf_\ell^+(\msf C+\wt{\msf A},\msf B+\msf A,\Thetabf_{k\ell})-\wt{\msf{A}}\|^2_{\Hbf},
\end{align*}
for even $\ell$ and any positive semi-definite matrix 
$\Hbf \in \R^{d\times d}$.  The norm on the left-hand
above acts row-wise,
$\|\Zbf\|^2_{\Hbf} := \sum_i \|\Zbf_{i:}\|^2_{\Hbf}$.
Hence, this asymptotic MSE can be evaluated via expectations
of $d$-dimensional variables from the SE.
Similarly, one can obtain exact answers for any other row-wise performance
metric of $\{(\Zbfhat_{k\ell}^\pm,\Zbf_\ell^0)\}_{\ell}$ for any $k$.

\section{Numerical Experiments}

\begin{figure}
    \centering
    \includegraphics[width=.45\textwidth]{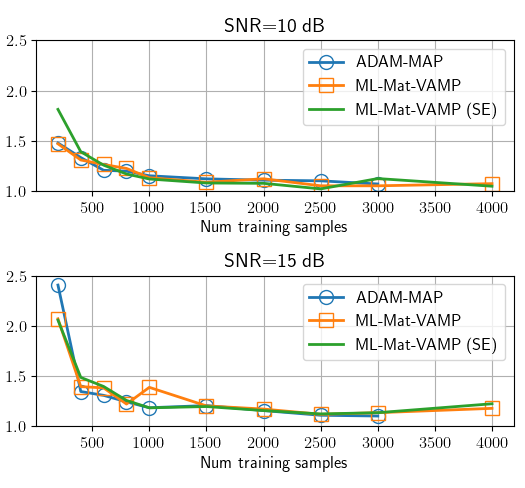}
    \caption{Test error in learning the first layer of a 2 layer neural network using ADAM-based gradient descent, ML-Mat-VAMP and its 
    state evolution prediction. }
    \label{fig:mse_ntr}
\end{figure}
We consider the problem of learning the input layer of a two layer neural network as described in Section \ref{sec:2layerNN}. We learn the weights  $\bm{F}_1$ of the first layer of a two-layer network by solving problem \eqref{eq:two_layer_ml}.  The LSL analysis in this case
corresponds to the input size $n_{\rm in}$ and number of samples
$N$ going to infinity with the number of hidden units being fixed.
Our experiment take  $d=4$ hidden units,
$N_{\rm in}=100$ input units, $N_{\rm out}=1$ output unit, sigmoid activations and variable number of samples $N$.
The weight vectors $\bm{F}_1$ and $\bm{F}_2$ are generated as i.i.d.\ Gaussians with zero mean
and unit variance.  The input $\Xbf$ is also i.i.d.\ Gaussians with variance $1/N_{\rm in}$
so that the average pre-activation has unit variance.
Output noise is added at two levels of 10 and 15~dB relative to the mean.
We generate 1000 test samples and a variable number of training samples that ranges from 
200 to 4000.  For each trial and number of training samples, we compare three methods:
(i) MAP estimation where the MAP loss function is minimized by the ADAM optimizer 
\cite{kingma2014adam} in the Keras package of Tensorflow;
(ii) the M-VAMP algorithm run for 20 iterations and (iii) the M-VAMP state evolution
prediction.  The MAP-ADAM algorithm is run for 100 epochs with a learning rate $= 0.01$.
The expectations in the M-VAMP SE are estimated via Monte-Carlo sampling (hence
there is some variation).  

Given an estimate $\wh{\Fbf}_1$ and
true value $\Fbf_1^0$, we can compute the test error as follows:
Given a new sample $\xbf$, the true
and predicted pre-activations will be $\zbf_1 = (\Fbf_1^0)\tran\xbf$
and $\zbfhat_1 = \wh{\Fbf}_1\tran\xbf$.  Thus, if the new sample $\xbf \sim {\mathcal N}(0,\tfrac{1}{N_{\rm in}}\Ibf)$, the true and predicted 
pre-activations, $(\zbf_1,\zbfhat_1)$,
will be jointly Gaussian with covaraince equal to the 
empirical $2d\times 2d$ 
covariance matrix of the rows of $\Fbf_1^0$ and $\wh{\Fbf}_1$:
\beq \label{eq:Ktest}
    \Kbf := \tfrac{1}{N_{\rm in}} \sum_{k=1}^{N_{\rm in}}
    \ubf_k\tran \ubf_k, \quad 
    \ubf_k = \left[\Fbf_{1,k:}~ \wh{\Fbf}_{1,k:}\right]
\eeq
From this covariance matrix,
we can estimate the test error,
\[
    \Exp|y-\wh{y}|^2 = \Exp 
    |\Fbf_2\tran(\sigma(\zbf_1)-\sigma(\zbfhat_1) |^2,
\]
where the expectation is taken over the Gaussian $(\zbf_1,\zbfhat_1)$
with covariance $\Kbf$.  
Also, since \eqref{eq:Ktest} is a row-wise operation, it  
can be predicted from the ML-Mat-VAMP SE.  Thus, the SE can also 
predict the asymptotic test error.
The normalized test error for ADAM-MAP, ML-Mat-VAMP and the ML-Mat-VAMP SE
are plotted in Fig.~\ref{fig:mse_ntr}.
The normalized test error is defined as the ratio of the MSE on the test samples to the optimal MSE.
Hence, a normalized MSE of one is the minimum value.

Note that since ADAM and ML-Mat-VAMP are solving the same optimization problem, they perform similarly as expected. The main message of this paper is not to develop an algorithm that outperforms ADAM, but rather an algorithm that has theoretical guarantees. The key property of ML-Mat-VAMP is that its asymptotic behavior at all the iterations can be exactly predicted by the state evolution equations. In this example, 
Fig.~\ref{fig:mse_ntr} shows tat the normalized test MSE predicted via state evolution (plotted in green) matches the normalized MSE of ML-Mat-VAMP estimates (plotted in orange). 
\section*{Conclusions}

We have developed a general framework for analyzing 
inference in multi-layer
networks with matrix valued quantities in certain high-dimensional 
random settings.  For learning the input layer of a two layer
network, the methods enables precise predictions of the expected
test error as a function of the parameter statistics, 
numbers of samples and noise level.  This analysis can thus in turn
be valuable in understanding key properties such as generalization error.
Future work will look to extend these to more complex networks.

\paragraph*{Acknowledgements}
The work of P. Schniter was supported by NSF grant
1716388. The work of P.\ Pandit, M.\ Saharee-Ardakan and A.\ K.\ Fletcher was supported in part by the NSF Grants 1738285 and 1738286, ONR Grant N00014-15-1-2677. The work of
S. Rangan was supported in part by NSF grants
1116589, 1302336, and 1547332, NIST, SRC and the the industrial affiliates of NYU Wireless.

\iftoggle{conference}{}{
\clearpage
\setcounter{section}{0}
\setcounter{subsection}{0}
\renewcommand\thesection{Appendix \Alph{section}}
\renewcommand\thesubsection{\thesection.\arabic{subsection}}

\section{State Evolution Equations}
\label{sec:state_evolution}

\begin{algorithm}
\setstretch{1.1}
\caption{State Evolution for ML-Mat-VAMP (Algo. \ref{algo:ml-mat-vamp})}
\begin{algorithmic}[1]  \label{algo:ml-mat-vamp_se}

\REQUIRE{
Functions $\{f^0_\ell\}$ from \eqref{eq:f0def}, $\{h_\ell^\pm\}$ from \eqref{eq:hdef}, and $\{f^\pm_{\ell}\}$ from \eqref{eq:fdef}. Perturbation random variables $\{W_\ell\}$ from \eqref{eq:Wdef}. Initial random vectors $\{Q_{0\ell}^-\}_{\ell=0}^{L-1}$ with Initial covariance matrices $\{\taubf_{0\ell}^-\}_{\ell=0}^{L-1}$ from \eqref{eq:init_random_vars}. Initial matrices $\{\Gammabfbar_{0\ell}^-\}_{\ell=0}^L$ from \eqref{eq:init_precisions}.
}
\vspace{10pt}
\STATE{// \texttt{Initial Pass}}
    \label{line:qinit_se_mlvamp}
\STATE{$Q^0_0 = W_0$, $\taubf^0_0 = \Cov(Q^0_0)$ and $P^0_0 \sim \Norm(\zero,\taubf^0_0)$} \label{line:p0init_se_mlvamp}
\FOR{$\ell=1,\ldots,\Lm1$}
    \STATE{$Q^0_\ell=f^0_\ell(P^0_{\lm1},W_\ell)$}
    \label{line:q0_init}
    \STATE{$P^0_\ell \sim \Norm(\zero,\taubf^0_\ell)$,\qquad
            $\taubf^0_\ell = \Cov(Q^0_\ell)$} \label{line:pinit_se_mlvamp}
\ENDFOR
\vspace{10pt}
\FOR{$k=0,1,\ldots$}
    \STATE{// \texttt{Forward Pass} }
\STATE{$\wh{Q}^+_{k0} = h^+_0(Q_{k0}^-,W_0,\Thetabfbar^+_{k0})$} \label{line:qhat0_se_mlvamp}
    \STATE{$\Lambdabfbar_{k0}^+ = (\Exp\tfrac{\partial \wh{Q}^+_{k0}}{\partial Q_{0}^-})^{-1}\Gammabfbar_{k,0}^-$}
    \STATE{$\Gammabfbar_{k0}^+ = \Lambdabfbar_{k0}^+-\Gammabfbar_{k0}^-
$}
    \STATE{$Q_{k0}^+ = f^+_{0}(Q_{k0}^-,W_0,\Omegabfbar^+_{k0})$}  \label{line:q0_se_mlvamp}
    \STATE{$(P^0_0,P_{k0}^+) \sim \Norm(\zero,\Kbf_{k0}^+)$,
        $\qquad\Kbf_{k0}^+ := \Cov(Q^0_0,Q_{k0}^+)$}

        \label{line:p0_se_mlvamp}
        \vspace{10pt}
    \FOR{$\ell=1,\ldots,L-1$}
        \STATE{$\wh{Q}^+_{k\ell} = h^+_\ell(P^0_{\lm1},P^+_{k,\lm1},Q_{k\ell}^-,W_\ell,\Thetabfbar^+_{k\ell})$} \label{line:qhat_se_mlvamp}
        \STATE{$\Lambdabfbar_{k\ell}^+ = (\Exp\tfrac{\partial \wh{Q}^+_{k\ell}}{\partial Q_{k\ell}^-})^{-1}\Gammabfbar_{k\ell}^-$}
        \STATE{$\Gammabfbar_{k\ell}^+ = \Lambdabfbar_{k\ell}^+-\Gammabfbar_{k\ell}^-
$}
            \label{line:lamp_se_mlvamp}
        \STATE{$Q_{k\ell}^+ = f^+_{\ell}(P^0_{\lm1},P^+_{k,\lm1},Q_{k\ell}^-,W_\ell,\Omegabfbar^+_{k\ell})$}
            \label{line:qp_se_mlvamp}
        \STATE{$(P^0_\ell,P_{k\ell}^+) \sim \Norm(\zero,\Kbf_{k\ell}^+)$,
            $\quad\Kbf_{k\ell}^+ := \Cov(Q^0_\ell,Q_{k\ell}^+)$} \label{line:pp_se_mlvamp}
\ENDFOR
\vspace{10pt}
    \STATE{// \texttt{Backward Pass} }
    \STATE{$\wh{P}_{\kp1,\Lm1}^- = h^-_{L}(P^0_{\Lm1},P_{k,\Lm1}^+,{W_L},\Thetabfbar^-_{\kp1,L})$}
    \label{line:phatL_se_mlvamp}
    \STATE{$\Lambdabfbar_{k+1,L}^- = (\Exp\tfrac{\partial \wh{P}_{\kp1,\Lm1}^-}{\partial P_{\Lm1}^+})^{-1}\Gammabfbar_{kL}^+$}
    \STATE{$\Gammabfbar_{k+1,\Lm1}^- = \Lambdabfbar_{k+1,\Lm1}^--\Gammabfbar_{k,\Lm1}^+,$}
\STATE{$P_{\kp1,\Lm1}^- = f^-_{L}(P^0_{\Lm1},P_{k,\Lm1}^+,W_L,\Omegabfbar^-_{\kp1,L})$}  \label{line:pL_se_mlvamp}
\STATE{$Q_{\kp1,\Lm1}^- \sim \Norm(\zero,\taubf_{\kp1,\Lm1}^-),$$\ \ \taubf_{\kp1,\Lm1}^- := \Cov(P^-_{\kp1,\Lm1})$}
    \label{line:qL_se_mlvamp}
    \FOR{$\ell=\Lm2,\ldots,0$}
    \STATE{$\wh{P}_{\kp1,\ell}^- = h^-_{\ell}(P^0_{\ell},P_{k\ell}^+,Q^-_{k+1,\ell+1},W_\ell,\Thetabfbar^-_{\kp1,\ell})$}
    \label{line:phatn_se_mlvamp}
    \STATE{$\Lambdabfbar_{k+1,\ell}^- = (\Exp\tfrac{\partial \wh{P}_{\kp1,\ell}^-}{\partial P_{k,\ell}^+})^{-1}\Gammabfbar_{k,\ell}^+$}
    \STATE{$\Gammabfbar_{k+1,\ell}^- = \Lambdabfbar_{k+1,\ell}^--\Gammabfbar_{k,\ell}^+,$}
\STATE{$P_{\kp1,\ell}^- =
    f^-_{\ell}(P^0_{\ell},P^+_{k\ell},Q_{\kp1,\ell+1}^-,W_\ell,\Omegabfbar^-_{k+1,\ell})$}
    \label{line:pn_se_mlvamp}
\STATE{$Q_{\kp1,\ell}^- \sim \Norm(\zero,\taubf_{\kp1,\ell}^-),$ $\quad \taubf_{\kp1,\ell}^- := \Cov(P_{\kp1,\ell}^-)$}  \label{line:qn_se_mlvamp}
    \ENDFOR
\ENDFOR
\end{algorithmic}
\end{algorithm} 
The state evolution equations given in Algo. \ref{algo:ml-mat-vamp_se} define an iteration indexed by $k$ of constant matrices $\{\Kbf_{k\ell}^+,\taubf_{kl}^-,\Gammabfbar_{kl}^\pm\}_{\ell=0}^L.$ These constants appear in the statement of the main result in Theorem \ref{thm:main_result}. The iterations in Algo. \ref{algo:ml-mat-vamp_se} also iteratively define a few $\Real^{1\times d}$ valued random vectors $\{Q_\ell^0,P_{\ell}^0,Q_{k\ell}^\pm,P_{k\ell}^\pm\}$ which are either multivariate Gaussian or functions of Multivariate Gaussians. In order to state Algorithm \ref{algo:ml-mat-vamp_se}, we need to define certain random variables and functions appearing therein which are described below. Let $\Lodd=\{1,3,\ldots,L-1\}$ and $\Leven=\{2,4,\ldots,L-2\}$.

Define $\{\Thetabfbar_{k\ell}^\pm\}$ similar to $\Thetabf_{k\ell}^\pm$ from equation \eqref{eq:thetadef}  using $\{\Gammabfbar_{k\ell}^\pm\}$. Further, for $\ell=1,2,\ldots,L-1$ define
\begin{equation*}\label{eq:omegadef}
\Omegabfbar_{k\ell}^+:= (\Lambdabfbar_{k\ell}^+,\Gammabfbar_{k\ell}^+,\Gammabfbar_{k\ell}^-),
\ \Omegabfbar_{k\ell}^-:= (\Lambdabfbar_{k,\ell-1}^+,\Gammabfbar_{k,\ell-1}^-,\Gammabfbar_{k,\ell-1}^-),
\end{equation*}
and $\Omegabfbar_{k0}^+$ and $\Omegabfbar_{kL}^-.$ Now define random variables $W_\ell$ as
\begin{equation}\label{eq:Wdef}
\begin{aligned}
\MoveEqLeft W_0 = Z_0^0,\ \ 
 W_L=(Y,\Xi_L),
\ \  W_\ell=\Xi_\ell,&\forall\,\ell\in\Leven,\\
\MoveEqLeft W_\ell=(S_\ell,\wb B_\ell,\Xi_\ell),&\forall\,\ell\in\Lodd.
\end{aligned}
\end{equation}
Define functions $\{f_\ell^0\}_{\ell=1}^L$ as
\begin{equation}\label{eq:f0def}
\begin{aligned}
\MoveEqLeft     f^0_\ell(P^0_{\lm1},W_\ell) := {S}_\ell P^0_{\ell-1} + \wb{B}_\ell + {\Xi}_\ell, 
    \quad \forall\,\ell\in\Lodd,
    \\
\MoveEqLeft f^0_\ell(P^0_{\lm1},W_\ell) := \phi_\ell(P^0_{\lm1},\Xi_\ell), 
    \quad \forall\,\ell\in\Leven\cup\{L\}. 
\end{aligned}
\end{equation}
and using \eqref{eq:thetadef} define functions $\{h_\ell^\pm,\}_{\ell=1}^L$, $h_0^+$ and $h_L^-$ as
\begin{equation}\label{eq:hdef}
\begin{aligned}
\MoveEqLeft h_\ell^\pm(P_{\ell-1}^0,P_{\ell-1}^+,Q_\ell^-,W_\ell,\Thetabf_{k\ell}^\pm) \\
\MoveEqLeft\  = G_\ell^\pm(Q_\ell^-+Q_{\ell}^0,P_{\ell-1}^++P_{\ell-1}^0,\Thetabf_{k\ell}^\pm),\ \  \forall\,\ell\in\Leven,\\
\MoveEqLeft h_\ell^\pm(P_{\ell-1}^0,P_{\ell-1}^+,Q_\ell^-,W_\ell,\Thetabf_{k\ell}^\pm) \\
\MoveEqLeft\ = \wt{G}_\ell^\pm(Q_\ell^-+Q_{\ell}^0,P_{\ell-1}^++P_{\ell-1}^0,\Thetabf_{k\ell}^\pm),\ \  \forall\,\ell\in\Lodd\\
\MoveEqLeft h_0^+(Q_{0}^-,W_{0},\Thetabf_{k0}^+) = G^+_0(Q_0^-+W_{0},\Thetabf_{k0}^+),
\\ 
\MoveEqLeft h_{\scaleto{L}{4pt}}^-(P_{\scaleto{L-1}{4pt}}^0,P_{\scaleto{L-1}{4pt}}^+,W_{\scaleto{L}{4pt}},\Thetabf_{{\scaleto{kL}{4pt}}}^-) = G^-_{\scaleto{L}{4pt}}(P_{\scaleto{L-1}{5pt}}^++P^0_{\scaleto{L-1}{5pt}},\Thetabf_{{\scaleto{kL}{4pt}}}^-).\end{aligned}
\end{equation}
Note that $[G_\ell^+,G_\ell^-]$ and $[\wt G_\ell^+,\wt G_\ell^-]$ are maps from $\Real^{1\times d}\rightarrow \Real^{1\times d}$ such that their row-wise extensions are the denoisers $[\Gbf_\ell^+,\Gbf_\ell^-]$ and $[\wt\Gbf_\ell^+,\wt\Gbf_\ell^-]$ respectively.
Using \eqref{eq:hdef} define functions $\{f_\ell^\pm\}_{\ell=1}^{L-1}$, $f_0^+$ and $f_L^-$ as
\begin{equation}\label{eq:fdef}
\begin{aligned}
\MoveEqLeft f^{+}_{\ell}(P^0_{\lm1},P_{\lm1}^+,Q_{\ell}^-,W_\ell,\Omegabf_{k\ell}^+)
\\
\MoveEqLeft = 
      \left[
      \left(h^{+}_{\ell}
- Q^0_\ell\right)\Lambdabf_{k\ell}^+ - Q_\ell^-\Gammabf_{k\ell}^- \right](\Gammabf_{k\ell}^+)^{-1},  \\
\MoveEqLeft f^{-}_{\ell}(P^0_{\lm1},P_{\lm1}^+,Q_{\ell}^-,W_\ell,\Omegabf_{k\ell}^-)
\\
\MoveEqLeft = 
     \left[
            \left(h^{-}_{\ell}
- P^0_{\lm1}\right)\Lambdabf_{k,\lm1}^-
            -  P_{\lm1}^+\Gammabf_{k,\lm1}^+ \right](\Gammabf_{k,\lm1}^-)^{-1}. \\
\MoveEqLeft f^{+}_{0}(Q_{0}^-,W_0,\Omegabf_{k0}^+) \\
\MoveEqLeft= 
     \left[
            \left(h^{+}_{0}
-W_0\right)\Lambdabf_{k0}^+
            -  Q_0^-\Gammabf_{k0}^- \right](\Gammabf_{k0}^+)^{-1}, \\
\MoveEqLeft      f^{-}_{L}(P^0_{\Lm1},P_{\Lm1}^+,W_L,\Omegabf_{kL}^-)\\
\MoveEqLeft     = 
     \left[
            \left(h^{-}_{L}
- P^0_{\Lm1}\right)\Lambdabf_{k,\Lm1}^-
            - P_{\Lm1}^+\Gammabf_{k,\Lm1}^+ \right](\Gammabf_{k,\Lm1}^-)^{-1}.    
\end{aligned}
\end{equation}

\section{Large System Limit Details}
\label{sec:lsl_details}
The analysis of Algorithm \ref{algo:ml-mat-vamp} in the large system limit is based on \cite{BayatiM:11} and is by now standard in the theory of AMP-based algorithms. The goal is to characterize ensemble row-wise averages of iterates of the algorithm using \textit{simpler} finite-dimensional random variables which are either Gaussians or functions of Gaussians.
To that end, we start by defining some key terms needed in this analysis.
\begin{definition}[Pseudo-Lipschitz continuity]
For a given $p\geq 1$, a map $\gbf:\Real^{1\times d}\rightarrow \Real^{1\times r}$ is called pseudo-Lipschitz of order $p$ if for any $\rbf_1,\rbf_2\in\Real^d$ we have,
\begin{align*}
\|\gbf(\rbf_1)-\gbf(\rbf_2)\|\leq C\|\rbf_1-\rbf_2\|\left(1+\|\rbf_1\|^{p-1}+\|\rbf_2\|^{p-1}\right)
\end{align*}
\end{definition}

\newcommand{\PLT}{\rm PL(2)}
\begin{definition}[Empirical convergence of rows of a matrix sequence] Consider a matrix-sequence $\{\Xbf^{(N)}\}_{N=1}^\infty$ with $\Xbf^{(N)}\in\Real^{N\times d}.$ For a finite $p\geq 1$, let $X\in(\Real^d,\mc R^d)$ be a $\mc R^d$-measurable random variable with bounded moment $\Exp\|X\|_p^p <\infty$. We say the rows of matrix sequence $\{\Xbf^{(N)}\}$ {\it converge empirically to $X$ with $p^{th}$ order moments} if
for all pseudo-Lipschitz continuous functions $f(\cdot)$ of order $p$,
\begin{align}\label{eq:Wasserstein_p}
\lim_{N\rightarrow \infty}\frac1N\sum_{n=1}^N f(\Xbf^{(N)}_{n:})=\Exp[f(X)]\quad {\rm a.s.}
\end{align}
\end{definition}
Note that the sequence $\{\Xbf^{(N)}\}$ could be random or deterministic. If it is random, however, then the quantities on the left hand side are random sums and the almost sure convergence must take this randomness into account as well.

The above convergence is equivalent to requiring weak convergence  as well as convergence of the $p^{\rm th}$ moment, of the empirical distribution $\tfrac1N\sum_{n=1}^N\delta_{\Xbf_{n:}^{(N)}}$ of the rows of $\Xbf^{(N)}$ to $X$. This is also referred to convergence in the Wasserstein-$p$ metric \cite[Chap. 6]{villani2008optimal}.

In the case of $p=2$, the condition is equivalent to requiring \eqref{eq:Wasserstein_p} to hold for all continuously bounded functions $f$ as well as for all $f_q(\x)=\x\tran\bm{Q}\x$ for all positive definite matrices $\bm{Q}$.

\begin{definition}[Uniform Lipschitz continuity] For a positive definite matrix $\M$, the map $\phi(\rbf;\M):\Real^{d}\rightarrow \Real^d$ is said to be uniformly Lipschitz continuous in $\rbf$ at $\M=\wb\M$ if there exist non-negative constants $L_1$, $L_2$ and $L_3$ such that for all $\rbf\in\Real^{d}$
\begin{align*}
\|\phi(\rbf_1;\M_0)-\phi(\rbf_2;\M_0)\| &\leq L_1\|\rbf_1-\rbf_2\|\\
\|\phi(\rbf;\M_1)-\phi(\rbf;\M_2)\| &\leq L_2(1+\|\rbf\|)\rho(\M_1,\M_2)
\end{align*}
for all $\M_i$ such that $\rho(\M_i,\wb\M)<L_3$
where $\rho$ is a metric on the cone of positive semidefinite matrices.
\end{definition}
We are now ready to prove Theorem \ref{thm:main_result}. 
\section{Proof of Theorem \ref{thm:main_result}}

The proof of Theorem \ref{thm:main_result} is a special case of a more general result on multi-layer recursions given in Theorem \ref{thm:general_convergence}. This result is stated in \ref{app:general_convergence}, and proved in \ref{app:proof_of_general_convergence}. The rest of this section identifies certain relevant quantities from Theorem \ref{thm:main_result} in order to apply Theorem \ref{thm:general_convergence}.

Consider the SVD given in equation \eqref{eq:svd} of weight matrices $\Wbf_\ell$ of the network.
We analyze Algo. \ref{algo:ml-mat-vamp} using \textit{transformed} versions of the true signals $\Zbf_\ell^0$ and input errors $\Rbf_\ell^\pm-\Zbf_\ell^0$ to the denoisers $\Gbf_\ell^\pm$. For $\ell=0,2,\ldots L-2$, define
\begin{subequations}\label{eq:PQ0def}
\begin{align}
    \MoveEqLeft \qbf_\ell^0 = \Zbf_\ell^0 
    &\qbf_{\ell+1}^0 &= \Vbf_{\ell+1}\T\Zbf_{\ell+1}^0\\
    \MoveEqLeft
    \pbf_{\ell}^0 = \Vbf_{\ell}\Zbf_{\ell}^0 &\pbf_{\ell+1}^0 &= \Zbf_{\ell+1}^0
\end{align}
\end{subequations}
which are depicted in Fig. \ref{fig:mlmatvamp_equivalent_system} (TOP). 
Similarly, define the following \textit{transformed} versions of errors in the inputs $\Rbf_\ell^\pm$ to the denoisers $\Gbf_\ell^\pm$
\begin{subequations}\label{eq:PQin_def}
\begin{align}
\MoveEqLeft\qbf_\ell^- = \Rbf_\ell^--\Zbf_\ell^0
    &\qbf_{\ell+1}^- &= \Vbf_{\ell+1}\T(\Rbf_{\ell+1}^--\Zbf_{\ell+1}^0)\\
    \MoveEqLeft
    \pbf_\ell^+ = \V_\ell(\Rbf_\ell^+-\Zbf_\ell^0)
    &\pbf_{\ell+1}^+ &= \Rbf_{\ell+1}^+-\Zbf_{\ell+1}^0
\end{align}
\end{subequations}
These quantities are depicted as inputs to function blocks $\fbf_\ell^\pm$ in Fig. \ref{fig:mlmatvamp_equivalent_system}
(MIDDLE). Define perturbation variables $\wbf_\ell$ as
\begin{subequations}\label{eq:wdef}
\begin{align}
\wbf_0 &= \Zbf^0_0,\ \ 
\wbf_{\scaleto{L}{4pt}} = (\Ybf,\Xibf_{\scaleto{L}{4pt}}),\ \  \wbf_\ell = \Xibf_\ell, &\forall\,\ell\in\Leven
\\
\wbf_\ell &= (\Sbf_\ell,\wb\Bbf_\ell,\Xibf_\ell), &\forall\,\ell\in\Lodd
\end{align}
\end{subequations}
Finally, we define $\qbf_\ell^+$ and $\pbf_\ell^-$ for $\ell=1,2,\ldots, L-1$ as
\begin{subequations}\label{eq:PQout_def}
\begin{align}
     \qbf_\ell^+ &= \fbf_\ell^+(\pbf_{\ell-1}^0,\pbf_{\ell-1}^+,\qbf_\ell^-,\wbf_\ell,\Omega_\ell)\\
     \pbf_{\lm1}^- &= \fbf_\ell^-(\pbf_{\ell-1}^0,\pbf_{\ell-1}^+,\qbf_\ell^-,\wbf_\ell,\Omega_\ell),
\end{align}
\end{subequations}
which are outputs of function blocks in Fig. \ref{fig:mlmatvamp_equivalent_system} (MIDDLE). Similarly, define the quantities $\qbf_0^+=\fbf_0^+(\qbf_0^-,\Zbf_0,\Omega_0)$ and 
$\pbf_{L-1}^-=\fbf_L^+(\pbf_{L-1}^0,\pbf_{L-1}^+,\Ybf,\Omega_L)$.

\begin{lemma}\label{lem:special_case}
Algorithm \ref{algo:ml-mat-vamp} is a special case of Algorithm \ref{algo:gen} with the definitions $\{\qbf_\ell^0,\pbf_\ell^0,\qbf_\ell^\pm,\pbf_\ell^\pm\}_{\ell=0}^{L-1}$ given in equations \eqref{eq:PQ0def},\eqref{eq:PQin_def}, and \eqref{eq:PQout_def}, functions $\fbf_{\ell}^\pm$ are row-wise extensions of $f_\ell^\pm$ defined using equations \eqref{eq:fdef} and \eqref{eq:hdef}.
\end{lemma}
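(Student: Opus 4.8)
The plan is to prove the claim by a direct change of variables followed by a line-by-line comparison, since Algorithm~\ref{algo:gen} is a template that alternates the row-wise maps $\fbf_\ell^\pm$ with orthogonal mixing by the singular-vector matrices $\Vbf_\ell$ from the SVD~\eqref{eq:svd}. The single observation that makes the whole reduction work is that left-multiplication by $\Vbf_\ell$ acts on the row (sample) index, whereas every precision and inverse-Jacobian matrix $\Gammabf_{k\ell}^\pm,\Lambdabf_{k\ell}^\pm\in\Real^{d\times d}$ acts on the feature (column) index; the two operations therefore commute, so the $\Gammabf$- and $\Lambdabf$-update lines of Algorithm~\ref{algo:ml-mat-vamp} are left untouched by the transformation and map trivially onto their counterparts in Algorithm~\ref{algo:gen}.

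Next I would dispatch the two parities separately. For even $\ell\in\Leven$ the denoisers $\Gbf_\ell^\pm$ already act row-wise, so setting $\qbf_\ell^0=\Zbf_\ell^0$ as in~\eqref{eq:PQ0def} makes the estimate lines~\ref{line:zp} and~\ref{line:zn} coincide with the definition of $h_\ell^\pm$ through $G_\ell^\pm$ in~\eqref{eq:hdef}, the transformed errors $\qbf_\ell^-,\pbf_{\lm1}^+$ of~\eqref{eq:PQin_def} being exactly the arguments fed to them once the true signal is added back. For odd $\ell\in\Lodd$ the least-squares denoiser is \emph{not} row-separable, but the SVD reformulation of the linear layers rewrites $\Gbf_\ell^+=\Vbf_\ell\tran\wt\Gbf_\ell^+$ and $\Gbf_\ell^-=\Vbf_{\lm1}\wt\Gbf_\ell^-$ with $\wt\Gbf_\ell^\pm$ from~\eqref{eq:Gtildedef} row-separable and depending only on $\diag(\Sbf_\ell)$; substituting the rotated errors $\qbf_\ell^-=\Vbf_\ell\tran(\Rbf_\ell^--\Zbf_\ell^0)$ and $\pbf_{\lm1}^+=\Vbf_{\lm1}(\Rbf_{\lm1}^+-\Zbf_{\lm1}^0)$ recovers precisely the $\wt G_\ell^\pm$ branch of~\eqref{eq:hdef}.

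It then remains to match the $\Rbf^\pm$ update lines. After subtracting the true signal $\Zbf_\ell^0$, line~\ref{line:rp},
\[
\Rbf^+_{k\ell} = (\Zbfhat^+_{k\ell}\Lambdabf_{k\ell}^+ - \Rbf^-_{k\ell}\Gammabf^-_{k\ell})(\Gammabf_{k\ell}^{+})^{-1},
\]
separates into the row-wise affine combination that defines $\fbf_\ell^+$ in~\eqref{eq:fdef} (with $h_\ell^+-\qbf_\ell^0$ playing the role of the centered denoiser output) followed by the orthogonal mixing that Algorithm~\ref{algo:gen} performs between consecutive layers; symmetrically line~\ref{line:rn} yields $\fbf_\ell^-$, and the boundary maps $\fbf_0^+,\fbf_L^-$ follow the same pattern using $b_0$ and $b_L$. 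Collecting these identifications shows that each line of Algorithm~\ref{algo:ml-mat-vamp} is exactly the corresponding line of Algorithm~\ref{algo:gen} under~\eqref{eq:PQ0def}--\eqref{eq:PQout_def}.

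The main obstacle is bookkeeping rather than analysis: one must keep the orthogonal factors $\Vbf_\ell$ versus $\Vbf_\ell\tran$ and the subtractions of $\Zbf_\ell^0$ consistent across both parities and across the coupled forward and backward passes, so that the layer indices in~\eqref{eq:PQout_def} align and the rotated true signals in~\eqref{eq:PQ0def} propagate correctly through the linear layers. No ingredient beyond the SVD reformulation~\eqref{eq:Gtildedef} and the row/feature commutation noted above is required.
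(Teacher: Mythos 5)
Your proposal is correct, but it is not what the paper does: the paper disposes of Lemma \ref{lem:special_case} with a one-line citation, asserting that the verification is identical to the $d=1$ case and pointing to \cite[Appendix F]{pandit2019inference}. You instead carry out the reduction directly for $d>1$, and your two key ingredients are the right ones: (i) the parity split --- row-wise denoisers for even $\ell$ matched to the $G^\pm_\ell$ branch of \eqref{eq:hdef}, and the SVD reformulation \eqref{eq:Gtildedef} matched to the $\wt G^\pm_\ell$ branch for odd $\ell$ --- and (ii) the observation that left multiplication by $\Vbf_\ell$ (sample index) commutes with right multiplication by the $d\times d$ matrices $\Lambdabf^\pm_{k\ell},\Gammabf^\pm_{k\ell}$ (feature index), which together with the identity $(\Lambdabf^+_{k\ell}-\Gammabf^-_{k\ell})(\Gammabf^+_{k\ell})^{-1}=\Ibf$ turns line \ref{line:rp} into the centered update $\fbf^+_\ell$ of \eqref{eq:fdef} and yields $\qbf^+_\ell=\Rbf^+_\ell-\Zbf^0_\ell$ (and its rotated analogue for odd $\ell$), exactly as \eqref{eq:PQin_def}--\eqref{eq:PQout_def} require. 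Observation (ii) is precisely the fact that legitimizes the paper's ``identical to $d=1$'' claim, since for $d=1$ all these quantities are commuting scalars; so your version makes explicit what the paper leaves to a pointer.

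One step you wave through as ``trivial'' does need its own argument: matching the $\Lambdabf$-update lines for the \emph{linear} layers. Algorithm \ref{algo:ml-mat-vamp} averages the Jacobian of $\Gbf^+_\ell$ with respect to $\Rbf^-_\ell$ in the untransformed coordinates, whereas the Gen-ML parameter statistic must be an average of row-wise Jacobians of $\wt G^+_\ell$ in the rotated coordinates; these two averages are not related by your row/column commutation, which concerns right multiplication by $d\times d$ matrices only. Writing $\Gbf^+_\ell$ as the conjugation of the row-separable $\wt\Gbf^+_\ell$ by the orthogonal $\Vbf_\ell$ acting on the sample index, the $n$-th diagonal $d\times d$ block of its full Jacobian is $\sum_m V_{nm}^2 J_m$, where $J_m$ is the row-wise Jacobian of $\wt G^+_\ell$ at row $m$; averaging over $n$ and using $\sum_n V_{nm}^2=1$ gives $\bkt{\tfrac{\partial\Gbf^+_\ell}{\partial\Rbf^-_\ell}}=\bkt{\tfrac{\partial\wt\Gbf^+_\ell}{\partial\wt\Rbf^-_\ell}}$. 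With this averaged-Jacobian invariance supplied, your line-by-line identification of the parameter recursions, and hence the whole reduction, is complete.
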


\begin{lemma}
Assumptions \ref{as:gen} and \ref{as:gen2} required for applying Theorem \ref{thm:general_convergence} are satisfied by the conditions in Theorem \ref{thm:main_result}.
\end{lemma}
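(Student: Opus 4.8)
The plan is to verify each clause of Assumptions \ref{as:gen} and \ref{as:gen2} against a hypothesis of Theorem \ref{thm:main_result}, using the reduction of Lemma \ref{lem:special_case}. Since that lemma already realizes Algorithm \ref{algo:ml-mat-vamp} as the instance of Algorithm \ref{algo:gen} obtained through the changes of variable \eqref{eq:PQ0def}, \eqref{eq:PQin_def} and \eqref{eq:PQout_def}, it remains only to check that, in these transformed coordinates, the ingredients of the general recursion --- the orthogonal matrices $\Vbf_\ell$, the perturbations $\wbf_\ell$, the initializers, and the update maps $\fbf_\ell^\pm$ --- meet the distributional, initialization, and regularity requirements of the general theorem.

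The distributional and initialization clauses transfer essentially verbatim. The Haar law on $\Vbf_\ell$ posited in \eqref{eq:svd} is exactly the orthogonal-invariance hypothesis of Assumption \ref{as:gen}. The componentwise weak limits assumed in Theorem \ref{thm:main_result} --- $\Zbf_0^0 \xRightarrow{2} Z^0$, the joint limit of $(\Xibf_\ell,\Rbf_{0\ell}^- - \Zbf_\ell^0)$ for even $\ell$, and of $(\wb\Sbf_\ell,\wb\Bbf_\ell,\Xibf_\ell,\Vbf_\ell\T(\Rbf_{0\ell}^- - \Zbf_\ell^0))$ for odd $\ell$ --- are precisely the $\xRightarrow{2}$ convergence of the perturbations $\wbf_\ell$ of \eqref{eq:wdef} and of the transformed initial errors $\qbf_{0\ell}^-$ of \eqref{eq:PQin_def} to the limits $W_\ell$ and $Q_{0\ell}^-$ required by Assumption \ref{as:gen2}. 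The pairwise independence of these limits, the Gaussianity of $Q_{0\ell}^-$ in \eqref{eq:init_random_vars} and of the odd-layer noise $\Xi_{2k-1}$, and the pointwise convergence \eqref{eq:init_precisions} of $\Gammabf_{0\ell}^-$ to $\wb\Gammabf_{0\ell}^-$ all carry over directly, supplying the remaining initialization requirements.

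The substantive step is the regularity of the update maps $\fbf_\ell^\pm$. Separability is inherited from Lemma \ref{lem:special_case}: although the linear-layer denoiser $\Gbf_\ell^\pm$ is not row-wise separable, its rotation $\wt\Gbf_\ell^\pm$ of \eqref{eq:Gtildedef} is, so in the SVD coordinates every $\fbf_\ell^\pm$ acts row-wise. For Lipschitz continuity, observe from \eqref{eq:fdef} that each $\fbf_\ell^\pm$ is an affine combination --- with coefficients assembled from $\Lambdabf_{k\ell}^\pm$, $\Gammabf_{k\ell}^\pm$ and $(\Gammabf_{k\ell}^\pm)^{-1}$ --- of its inputs and of the denoiser outputs $h_\ell^\pm$, which by \eqref{eq:hdef} are the row maps $G_\ell^\pm$ (even $\ell$) or $\wt G_\ell^\pm$ (odd $\ell$) at shifted arguments. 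I would argue that the assumed uniform Lipschitz continuity of the nonlinear denoisers and their Jacobians propagates through these finitely many affine operations and argument shifts, so that each $\fbf_\ell^\pm$ is itself uniformly Lipschitz continuous --- hence pseudo-Lipschitz of order $2$ --- in its inputs, uniformly over the parameter matrices $\Omegabf_{k\ell}^\pm$ near their limits; for odd $\ell$ the map $\wt G_\ell^\pm$ solves the strictly convex quadratic \eqref{eq:Gtildedef} and is therefore affine in its inputs with coefficients that are smooth bounded functions of the bounded singular values $\Sbf_\ell$ and of the positive-definite parameter matrices. The main obstacle is this uniform-in-parameter control: the factors $(\Gammabf_{k\ell}^\pm)^{-1}$ and $\Lambdabf_{k\ell}^\pm$ in \eqref{eq:fdef} must stay bounded as the parameters approach their deterministic limits, which reduces to bounding the limiting precisions $\wb\Gammabf_{k\ell}^\pm$ away from singularity. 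This follows from their strict positive-definiteness asserted in Theorem \ref{thm:main_result}, together with \eqref{eq:init_precisions} and the continuity of matrix inversion on the positive-definite cone; once these factors are bounded, the affine structure of \eqref{eq:fdef} renders the remaining estimates routine, and Assumptions \ref{as:gen} and \ref{as:gen2} follow.
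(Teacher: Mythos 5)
Your checklist covers the distributional clauses of Assumption~\ref{as:gen} and the Lipschitz clauses of Assumption~\ref{as:gen2}, but it omits Assumption~\ref{as:gen2}(\ref{as2:d}) entirely: the requirement that the matrix update functions be \emph{asymptotically divergence free}, i.e.\ that $\bkt{\partial \fbf^+_{k\ell}/\partial \qbf^-_{k\ell}} \rightarrow \zero$ and $\bkt{\partial \fbf^-_{k\ell}/\partial \pbf^+_{k,\ell-1}} \rightarrow \zero$ as in \eqref{eq:fdivfree}. This is not a regularity property that ``propagates through finitely many affine operations''; it is the Onsager-correction identity on which the entire state-evolution argument rests --- it is what forces $\Exp(Q_{i\ell}^{+\msf{T}}Q_{j\ell}^-)=\zero$ via Stein's lemma in \eqref{eq:Qijstein}. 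Verifying it requires the specific algebra of \eqref{eq:fdef} and \eqref{eq:hdef}: since $h^+_\ell$ is $G^+_\ell$ evaluated at the shifted argument $Q^-_\ell+Q^0_\ell$, one has
\[
\bkt{\tfrac{\partial \fbf^+_{k\ell}}{\partial \qbf^-_{k\ell}}}
= \Bigl[\, \bkt{\tfrac{\partial \Gbf^+_{\ell}}{\partial \Rbf^-_{\ell}}}\,\Lambdabfbar^+_{k\ell} - \Gammabfbar^-_{k\ell} \Bigr]\bigl(\Gammabfbar^+_{k\ell}\bigr)^{-1},
\]
and this vanishes asymptotically precisely because the SE defines $\Lambdabfbar^+_{k\ell} = \bigl(\Exp\,\tfrac{\partial G^+_\ell}{\partial R^-_\ell}\bigr)^{-1}\Gammabfbar^-_{k\ell}$ while the empirical average Jacobian converges to that expectation; this last convergence is exactly where the assumed uniform Lipschitz continuity of the denoisers' \emph{Jacobians} in Theorem~\ref{thm:main_result} is actually consumed. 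Relatedly, you never identify, for this instance, the parameter-statistic functions $\varphibf^\pm_{k\ell}$ (they are the row-wise Jacobians of the denoisers) and the parameter-update maps $T^\pm_{k\ell}$ (essentially $\mu \mapsto (\mu^{-1}\Gammabf^-_{k\ell},\, \mu^{-1}\Gammabf^-_{k\ell}-\Gammabf^-_{k\ell})$, continuous at nonsingular $\mu$), so Assumption~\ref{as:gen2}(\ref{as2:a}) and the $\varphi$-clause of Assumption~\ref{as:gen2}(\ref{as2:b}) are also unaddressed. Without clause (\ref{as2:d}) Theorem~\ref{thm:general_convergence} simply cannot be invoked, so this is a genuine gap rather than a routine omission.

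A smaller problem: you justify boundedness of $(\Gammabf^\pm_{k\ell})^{-1}$ and $\Lambdabf^\pm_{k\ell}$ by the positive-definiteness ``asserted in Theorem~\ref{thm:main_result}.'' Those matrices belong to the theorem's \emph{conclusion} --- they are the SE quantities whose existence and positive-definiteness the theorem establishes --- so invoking them while verifying the hypotheses under which the theorem is to be proved is circular as written; their nonsingularity must instead be carried as a property of the SE recursion (or of the induction in the proof of Theorem~\ref{thm:general_convergence}). For comparison, the paper disposes of this lemma by observing that the verification is identical to the $d=1$ case and citing \cite[Appendix~F]{pandit2019inference}; your plan of spelling the verification out is perfectly reasonable, but any spelled-out version stands or falls on the divergence-free check.
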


\begin{proof} The proofs of the above lemmas are identical to the case of $d=1$, which was shown in \cite{pandit2019inference}. For details
see \cite[Appendix F]{pandit2019inference}.
\end{proof} 
\section{General Multi-Layer Recursions}
\label{app:general_convergence}
\begin{figure*}[ht]
\resizebox{\textwidth}{!}{
\begin{tikzpicture}[scale = 0.9]

    \pgfmathsetmacro{\sep}{3};
    \pgfmathsetmacro{\yoff}{0.4};
    \pgfmathsetmacro{\xoffa}{0.3};
    \pgfmathsetmacro{\xoffb}{0.6};
    \tikzstyle{var}=[draw=white,circle,fill=white!100,node distance=0cm,inner sep=0cm];
    \tikzstyle{rot}=[draw,fill=blue!10, minimum size=1cm, node distance=\sep cm];
    \tikzstyle{Dnl}=[draw,fill=green!20, minimum size=.5cm, node distance=\sep cm];
    \tikzstyle{Dl}=[draw,fill=blue!30, minimum size=.5cm, node distance=\sep cm];
    \tikzstyle{ROT}=[draw,circle,fill=blue!10, minimum size=1cm, node distance=\sep cm];
    \tikzstyle{dnl}=[draw,fill=green!20, minimum size=.5cm, node distance=\sep cm];
    \tikzstyle{dl}=[draw,fill=blue!30, minimum size=.5cm, node distance=\sep cm];

\newcommand{\Vbm}{\bm V}
\newcommand{\pbfbm}{\bm P}
\newcommand{\qbfbm}{\bm Q}
\newcommand{\ubm}{\bm U}
\newcommand{\tbm}{\bm T}
\newcommand{\Abm}{\bm A}
\def\xoff{.4cm}
\def\xoffblock{1.3cm}
\def\yoff{.5cm}
\def\yspace{.4cm}
\def\Yspace{1cm}
\def\yvec{0mm}

\node [var] (q0p) {$\qbf^+_0$};
        
        \def\loopend{4}
\foreach \i/\j in {0/1,1/2,2/3,3/\loopend} {
    \node [rot, right =\xoff of q\i p] (V\i ) {$\V_\i $};
        {
        \node [rot, above = \yspace of V\i] (V\i star){$\V_\i$};
        \node [var, right=\xoff of V\i star] (p\i star) {$\pbf_\i ^0$};
        \ifthenelse{\j<\loopend}{
            \ifthenelse{\intcalcMod{\j}{2}>0}{
                \node [Dl, right =\xoff of p\i star] (phi\j) {$\fbf_\j^0$};
                }
                {
                \node [Dnl, right =\xoff of p\i star] (phi\j) {$\fbf_\j^0$};
            }
        }{}
        \node [var, left=\xoff of V\i star] (q\i star) {$\qbf_\i ^0$};
        }
        
    \node [var, right=\xoff of V\i ] (p\i p) {$\pbf_\i ^+$};
    
    \ifthenelse{\intcalcMod{\j}{2}>0}{
    \node [Dl, below =\yoff of p\i p] (F\j m) {$\fbf_\j^-$};
    }
    {
    \node [Dnl, below =\yoff of p\i p] (F\j m) {$\fbf_\j^-$};
    }
    
    \node [var, below=\yoff of F\j m] (p\i m) {$\pbf_\i ^-$};
	\node [rot, left =\xoff of p\i m] (V\i T) {$\V_\i \T$};
	    {
	    \node [ROT, below = \Yspace of V\i T] (N\i){$\mc N$};
	    \node [var, right=\xoff of N\i ] (P\i p) {$P_\i ^+$};
	    \node [var, above=\yvec of P\i p](P\i 0){$P_\i ^0$};
        \ifthenelse{\intcalcMod{\j}{2}>0}{
        \node [dl, below =\yoff of P\i p] (f\j m) {$f_\j^-$};
        }
        {
        \node [dnl, below =\yoff of P\i p] (f\j m) {$f_\j^-$};
        }
        
        \node [var, below=\yoff of f\j m] (P\i m) {$P_\i ^-$};
    	\node [ROT, left =\xoff of P\i m] (N\i T) {$\mcN$};
        \node [var, left =\xoff of N\i T] (Q\i m) {$Q^-_\i $};
        \ifthenelse{\intcalcMod{\j}{2}>0}{
        \node [dnl, above =\yoff of Q\i m] (f\i p) {$f_\i^+$};
        }
        {	
        \node [dl, above =\yoff of Q\i m] (f\i p) {$f_\i^+$};
        }
    	\node [var, above =\yoff of f\i p] (Q\i p) {$Q^+_\i $};
    	\node [var, above = \yvec of Q\i p](Q\i 0){$Q^0_\i$};
	    }

    \node [var, left =\xoff of V\i T] (q\i m) {$\qbf^-_\i $};
    \ifthenelse{\intcalcMod{\j}{2}>0}{
    \node [Dnl, above =\yoff of q\i m] (F\i p) {$\fbf_\i^+$};
    }
    {	
    \node [Dl, above =\yoff of q\i m] (F\i p) {$\fbf_\i^+$};
    }
    \ifthenelse{\j<\loopend}{
	\node [var, right =\xoffblock of p\i p] (q\j p) {$\qbf^+_\j$};
	}{}
}

\foreach \i/\j in {0/1,1/2,2/3,3/\loopend} {

\draw[->] (q\i star.east) -- (V\i star.west);
\draw[->] (V\i star.east) -- (p\i star.west);
\ifthenelse{\j<\loopend}{
    \draw[->] (p\i star.east) -- (phi\j.west);
    \draw[->] (phi\j.east) -- (q\j star.west);
}

\draw[->] (q\i p.east) -- (V\i.west);
\draw[->] (V\i.east) -- (p\i p.west);
\draw[->] (p\i p.south) -- (F\j m.north);
\draw[->] (F\j m.south) -- (p\i m.north);
\draw[->] (p\i m.west) -- (V\i T.east);
\draw[->] (V\i T.west) -- (q\i m.east);
\draw[->] (q\i m.north) -- (F\i p.south);
\draw[->] (F\i p.north) -- (q\i p.south);

\ifthenelse{\i>0}{
\draw[->,dashed] (F\i p.160) -- (F\i m.20);
\draw[<-,dashed] (F\i p.-160) -- (F\i m.-20);
\draw[->,dashed] (f\i p.160) -- (f\i m.20);
\draw[<-,dashed] (f\i p.-160) -- (f\i m.-20);
}{}

\draw[->] (Q\i p.east) -- (N\i.west);
\draw[->] (N\i.east) -- (P\i p.west);
\draw[->] (P\i p.south) -- (f\j m.north);
\draw[->] (f\j m.south) -- (P\i m.north);
\draw[->] (P\i m.west) -- (N\i T.east);
\draw[->] (N\i T.west) -- (Q\i m.east);
\draw[->] (Q\i m.north) -- (f\i p.south);
\draw[->] (f\i p.north) -- (Q\i p.south);

}
\end{tikzpicture}

 }
\caption{\label{fig:mlmatvamp_equivalent_system}
(TOP) The equations \eqref{eq:nntrue} with equivalent quantities defined in \eqref{eq:PQ0def}, and $\fbf_\ell^0$ defined using \eqref{eq:f0def}.\newline
(MIDDLE) The Gen-ML-Mat recursions in Algorithm \ref{algo:gen}. These are also equivalent to ML-Mat-VAMP recursions from Algorithm \ref{algo:ml-mat-vamp} (See Lemma \ref{lem:special_case}) if $\qbf^\pm,\pbf^\pm$ are as defined as in equations \eqref{eq:PQin_def} and \eqref{eq:PQout_def}, and $\fbf_\ell^\pm$ given by equations \eqref{eq:fdef} and $\eqref{eq:hdef}$.\newline
(BOTTOM) Quantities in the GEN-ML-SE recursions. These are also equivalent to ML-Mat-VAMP SE recursions from Algorithm \ref{algo:ml-mat-vamp_se} (See Lemma \ref{lem:special_case})\newline
The iteration indices $k$ have been dropped for notational simplicity.
}
\end{figure*}

To analyze Algorithm~\ref{algo:ml-mat-vamp}, we consider a more general class
of recursions as given in Algorithm~\ref{algo:gen} and depicted in Fig. \ref{fig:mlmatvamp_equivalent_system}.
The Gen-ML recursions generates
(i) a set of \textit{true matrices} $\q_\ell^0$ and $\p_\ell^0$ 
and (ii) \textit{iterated matrices} $\qbf^{\pm}_{k\ell}$ and $\pbf^{\pm}_{k\ell}$. Each of these matrices have the same number of columns, denoted by $d$. 

The true matrices are generated by a single forward pass, whereas the iterated matrices are generated
via a sequence of forward and backward passes through a multi-layer system.
In proving the State Evolution for the ML-Mat-VAMP algorithm (Algo. \ref{algo:ml-mat-vamp}, one would then associate the terms $\qbf^{\pm}_{k\ell}$ and $\pbf^{\pm}_{k\ell}$
with certain error quantities in the ML-Mat-VAMP recursions. To account for the effect of the parameters $\Gammabf^{\pm}_{k\ell}$ and $\Lambdabf^{\pm}_{k\ell}$
in ML-Mat-VAMP, the Gen-ML algorithm describes the parameter updates through a sequence of
\emph{parameter lists} $\Upsilon^{\pm}_{k\ell}$.
The parameter lists are ordered lists of parameters that accumulate as the algorithm progresses. The true and iterated matrices from Algorithm \ref{algo:gen} are depicted in the signal flow graphs on the (TOP) and (MIDDLE) panel of Fig. \ref{fig:mlmatvamp_equivalent_system} respectively. The iteration index $k$ for the iterated vectors $\q_{k\ell},\p_{k\ell}$ has been dropped for simplifying notation.

The functions $\fbf_\ell^0(\cdot)$ that produce the true matrices $\q_\ell^0,\p_\ell^0$ are called \textit{initial matrix functions} and use the initial parameter list $\Upsilon_{01}^-$. The functions $\fbf_{k\ell}^{\pm}(\cdot)$ that produce the matrices
$\qbf^{+}_{k\ell}$ and $\pbf^{-}_{k\ell}$ are  called the \emph{matrix update functions} and use parameter lists $\Upsilon_{kl}^\pm$.
The initial parameter lists $\Upsilon^-_{01}$ are assumed to be provided. 
As the algorithm progresses, new parameters $\lambda^{\pm}_{k\ell}$
are computed and then added to the lists in lines~\ref{line:lamp0_gen}, \ref{line:lamp_gen}, \ref{line:lamL_gen}
and \ref{line:lamn_gen}.  The matrix update functions $\fbf_{k\ell}^{\pm}(\cdot)$ may depend on any sets of parameters accumulated in the parameter list. 
In lines~\ref{line:mup0_gen}, \ref{line:mup_gen}, \ref{line:muL_gen} and \ref{line:mun_gen},
the new parameters $\lambda_{k\ell}^{\pm}$ are computed by:
(1) computing average values $\mu_{k\ell}^{\pm}$ of \emph{row-wise} functions $\varphibf^{\pm}_{k\ell}(\cdot)$;
and (2) taking functions $T^{\pm}_{k\ell}(\cdot)$ of the average values $\mu_{k\ell}^{\pm}$.
Since the average values $\mu_{k\ell}^{\pm}$ represent statistics on the rows of
$\varphibf^{\pm}_{k\ell}(\cdot)$, we will call $\varphibf^{\pm}_{k\ell}(\cdot)$ the \emph{parameter statistic
functions}.  We will call the $T^{\pm}_{k\ell}(\cdot)$ the \emph{parameter update functions}.
The functions $\fbf_\ell^0,\fbf_{k\ell}^\pm,\varphibf^\pm_\ell$ also take as input some perturbation vectors $\w_\ell$.

\old{We will show below that the updates for the parameters $\gamma^{\pm}_{k\ell}$ and $\alpha^{\pm}_{k\ell}$
can be written in this form.}

\begin{algorithm}[t]
\setstretch{1.1}
\caption{General Multi-Layer Matrix (Gen-ML-Mat) Recursion }
\begin{algorithmic}[1]  \label{algo:gen}
\REQUIRE{Initial matrix functions $\{\fbf_\ell^0\}$. Matrix update functions $\{\fbf^\pm_{k\ell}(\cdot)\}$.
Parameter statistic functions $\{\varphibf^\pm_{k\ell}(\cdot)\}$.
Parameter update functions $\{T^{\pm}_{k\ell}(\cdot)\}$.
Orthogonal matrices $\{\Vbf_\ell\}$.
Perturbation variables $\{\wbf^\pm_\ell\}$. Initial matrices $\{\qbf_{0\ell}^-\}$. Initial parameter list $\Upsilon_{01}^-$.}

\STATE{// \texttt{Initial Pass} }
\STATE{$\qbf^0_0 = \fbf^0_0(\wbf_0), \quad \pbf^0_0 = \Vbf_0\qbf^0_0$} \label{line:q00init_gen}
\FOR{$\ell=1,\ldots,\Lm1$}
    \STATE{$\qbf^0_\ell = \fbf^0_\ell(\pbf^0_{\lm1},\wbf_\ell, \Upsilon_{01}^-)$ }
    \label{line:q0init_gen}
    \STATE{$\pbf^0_\ell = \Vbf_\ell\qbf^0_\ell$ }  \label{line:p0init_gen}
\ENDFOR \label{line:end_initial_for}
\STATE{}
\FOR{$k=0,1,\dots$}\label{line:start_algo_for}
    \STATE{// \texttt{Forward Pass} }
    \STATE{$\lambda^+_{k0} = T_{k0}^+(\mu^+_{k0},\Upsilon_{0k}^-)$}
    \STATE{$\mu^+_{k0} = \bkt{\varphibf_{k0}^+(\qbf_{k0}^-,\wbf_0,\Upsilon_{0k}^-)}$}    \label{line:mup0_gen}
    \STATE{$\Upsilon_{k0}^+ = (\Upsilon_{k1}^-,\lambda^+_{k0})$} \label{line:lamp0_gen}
    \STATE{$\qbf_{k0}^+ = \fbf^+_{k0}(\qbf_{k0}^-,\wbf_0,\Upsilon^+_{k0})$}  \label{line:q0_gen}
    \STATE{$\pbf_{k0}^+ = \Vbf_0\qbf_{k0}^+$} \label{line:p0_gen}
    \FOR{$\ell=1,\ldots,L-1$}
        \STATE{$\lambda^+_{k\ell} = T_{k\ell}^+(\mu^+_{k\ell},\Upsilon_{k,\lm1}^+)$}
    \STATE{$
            \mu^+_{k\ell} = \bkt{\varphibf_{k\ell}^+(\pbf^0_{\lm1},\pbf^+_{k,\lm1},\qbf_{k\ell}^-,\wbf_\ell,\Upsilon_{k,\lm1}^+)}$}    \label{line:mup_gen}
        \STATE{$\Upsilon_{k\ell}^+ = (\Upsilon_{k,\lm1}^+,\lambda^+_{k\ell})$}
            \label{line:lamp_gen}
        \STATE{$\qbf_{k\ell}^+ = \fbf^+_{k\ell}(\pbf^0_{\lm1},\pbf^+_{k,\lm1},\qbf_{k\ell}^-,\wbf_\ell,\Upsilon^+_{k\ell})$}
            \label{line:qp_gen}
        \STATE{$\pbf_{k\ell}^+ = \Vbf_{\ell}\qbf_{k\ell}^+$}   \label{line:pp_gen}
    \ENDFOR
\vspace{10pt}
    \STATE{// \texttt{Backward Pass} }
    \STATE{$\lambda^-_{\kp1,L} = T_{kL}^-(\mu^-_{kL},\Upsilon_{k,\Lm1}^+)$}
    \STATE{$
        \mu^-_{kL} = \bkt{\varphibf_{kL}^-(\pbf_{k,\Lm1}^+,\wbf_L,\Upsilon_{k,\Lm1}^+)}$}    \label{line:muL_gen}
    \STATE{$\Upsilon_{\kp1,L}^- = (\Upsilon_{k,\Lm1}^+,\lambda^+_{\kp1,L})$} \label{line:lamL_gen}
    \STATE{$\pbf_{\kp1,\Lm1}^- = \fbf^-_{kL}(\pbf^0_{\Lm1},\pbf_{k,\Lm1}^+,\wbf_L,\Upsilon^-_{\kp1,L})$}  \label{line:pL_gen}
    \STATE{$\qbf_{\kp1,\Lm1}^- = \Vbf_{\Lm1}\tran\pbf_{\kp1,\Lm1}$} \label{line:qL_gen}
    \FOR{$\ell=\Lm1,\ldots,1$}
        \STATE{$\lambda^-_{\kp1,\ell} = T_{k\ell}^-(\mu^-_{k\ell},\Upsilon_{\kp1,\lp1}^-)$}
    \STATE{$
            \mu^-_{k\ell} =
            \bkt{\varphibf_{k\ell}^-(\pbf_{\lm1}^0,\pbf_{k,\lm1}^+,\qbf_{\kp1,\ell}^-,\wbf_\ell,\Upsilon_{\kp1,\lp1}^-)}$}    \label{line:mun_gen}
        \STATE{$\Upsilon_{\kp1,\ell}^- = (\Upsilon_{\kp1,\lp1}^-,\lambda^-_{\kp1,\ell})$} \label{line:lamn_gen}
        \STATE{$\pbf_{\kp1,\lm1}^- =
        \fbf^-_{k\ell}(\pbf_{\lm1}^0,\pbf^+_{k,\lm1},\qbf_{\kp1,\ell}^-,\wbf_\ell,\Upsilon^-_{k+1,\ell})$}
            \label{line:pn_gen}
        \STATE{$\qbf_{\kp1,\lm1}^- = \Vbf_{\lm1}\tran\pbf_{\kp1,\lm1}^-$}   \label{line:qn_gen}
    \ENDFOR

\ENDFOR\label{line:end_algo_for}
\end{algorithmic}
\end{algorithm}

Similar to the analysis of the ML-Mat-VAMP Algorithm,
we consider the following large-system limit (LSL) analysis of Gen-ML.
Specifically, we consider a sequence of runs of the recursions indexed by $N$.
For each $N$, let $N_\ell = N_\ell(N)$ be the dimension of the matrix signals $\pbf_\ell^{\pm}$ and $\qbf_\ell^\pm$
as we assume that $\displaystyle\lim_{N \arr \infty} \tfrac{N_\ell}N = \beta_\ell\in(0,\infty)$ is a constant so that $N_\ell$ scales linearly with $N$. Note however that the number of columns of each of the matrices $\{\qbf_\ell^0,\pbf_\ell^0,\qbf_{k\ell}^\pm,\pbf_{k\ell}^\pm\}$ is equal to a finite integer $d>0,$ which remains fixed for all $N$.
We then make the following assumptions. See \ref{sec:lsl_details} for an overview of empirical convergence of sequences which we use in the assumptions described below.

\begin{assumption}\label{as:gen} For vectors in the Gen-ML Algorithm (Algorithm~\ref{algo:gen}),
we assume:
\begin{enumerate}[(a)]
\item\label{as1:a} The matrices $\Vbf_\ell$ are Haar distributed on the set of $N_\ell \times N_\ell$ orthogonal matrices and are
independent from one another and from the matrices $\q^0_0$,
$\qbf_{0\ell}^-$, perturbation variables $\wbf_\ell$.
\item\label{as1:b} The rows of the initial matrices
$\qbf_{0\ell}^-$, and perturbation variables $\wbf_\ell$ converge jointly empirically with limits,
\beq \label{eq:qwinitlim}
    \qbf_{0\ell}^- \xRightarrow{2} Q_{0\ell}^-, \quad
    \wbf_{\ell} \xRightarrow{2} W_\ell,
\eeq
where $Q_{0\ell}^-$ are random vectors in $\Real^{1\times d}$ such that $(Q_{00}^-,\cdots,Q^-_{0,\Lm1})$
is jointly Gaussian. For $\ell=0,\ldots,\Lm1$, the random variables $W_\ell, P_{\ell-1}^0$ and $Q_{0\ell}^-$ are all independent.
We also assume that the initial parameter list converges as
\begin{align} \label{eq:Lambar01lim}
    \lim_{N \arr \infty} \Upsilon_{01}^-(N) \xrightarrow{a.s.} \Upsilonbar_{01}^-,
\end{align}
to some list $\Upsilonbar_{01}^-$.  The limit \eqref{eq:Lambar01lim} 
means that every element in the list $\lambda(N) \in \Upsilon_{01}^-(N)$ converges to a limit
$\lambda(N) \arr \lambdabar\in\Upsilonbar_{01}^-$ as $N \arr \infty$ almost surely.

\item\label{as1:c} The \textit{matrix update functions} $\fbf_{k\ell}^\pm(\cdot)$
and \textit{parameter update functions} $\varphibf_{k\ell}^\pm(\cdot)$ act row-wise.  For e.g.,
in the $k^{\rm th}$ forward pass, at  stage $\ell$, we assume that for each output row $n$,
\begin{align*}
    \MoveEqLeft\left[ \fbf^+_{k\ell}(\pbf^0_{\lm1},\pbf^+_{k,\lm1},\qbf_{k\ell}^-,\wbf_\ell,\Upsilon^+_{k\ell}) \right]_{n:}\\
    \MoveEqLeft\qquad\qquad = f^+_{k\ell}(\pbf^0_{\lm1,n:},\pbf^+_{k,\lm1,n:},\qbf_{k\ell,n:}^-,\wbf_{\ell,n:},\Upsilon^+_{k\ell}) \end{align*}
    \begin{align*}
    \MoveEqLeft\left[ \varphibf^{+}_{k\ell}(\pbf^0_{\lm1},\pbf^+_{k,\lm1},\qbf_{k\ell}^-,\wbf_\ell,\Upsilon^+_{k\ell}) \right]_{n:}\\
    \MoveEqLeft\qquad\qquad = \varphi^+_{k\ell}(\pbf^0_{\lm1,n:},\pbf^+_{k,\lm1,n:},\qbf_{k\ell,n:}^-,\wbf_{\ell,n:},\Upsilon^+_{k\ell}),
\end{align*}
for some $\Real^{1\times d}$-valued functions $f^+_{k\ell}(\cdot)$ and $\varphi^+_{k\ell}(\cdot)$.
Similar definitions apply in the reverse directions and for the initial vector functions $\fbf^0_\ell(\cdot)$.
We will call $f^{\pm}_{k\ell}(\cdot)$ the \emph{matrix update row-wise
functions} and $\varphi^{\pm}_{k\ell}(\cdot)$ the \emph{parameter update row-wise functions}.
\end{enumerate}
\end{assumption}

\begin{algorithm}[t]
\setstretch{1.1}
\caption{Gen-ML-Mat State Evolution (SE)}
\begin{algorithmic}[1]  \label{algo:gen_se}

\REQUIRE{Matrix update row-wise functions $f^0_\ell(\cdot)$ and $f^\pm_{k\ell}(\cdot)$,
parameter statistic row-wise functions $\varphi^\pm_{k\ell}(\cdot)$,
parameter update functions $T^{\pm}_{k\ell}(\cdot)$, initial parameter list limit:  $\Upsilonbar_{01}^-$, initial random variables  $W_\ell$, $Q_{0\ell}^-$, $\ell=0,\ldots,\Lm1$.}

\STATE{// \texttt{Initial pass}}
\STATE{$Q^0_0 = f^0_0(W_0,\Upsilonbar_{01}^-), \quad P^0_0 \sim \Norm(0,\tau^0_0),
    \quad \tau^0_0 = \Exp(Q^0_0)^2$} \label{line:q0init_se_gen}
\FOR{$\ell=1,\ldots,\Lm1$}
    \STATE{$Q^0_\ell=f^0_\ell(P^0_{\lm1},W_\ell,\Upsilonbar_{01}^-)$}\label{line:Q0def}
    \STATE{$P^0_\ell \sim \Norm(0,\tau^0_\ell)$, \quad
            $\tau^0_\ell = \Cov(Q^0_\ell)$} \label{line:p0init_se_gen}
\ENDFOR
\STATE{}

\FOR{$k=0,1,\dots$}
    \STATE{// \texttt{Forward Pass }}
    \STATE{$\lambdabar^+_{k0} = T_{k0}^+(\mubar^+_{k0},\Upsilonbar_{0k}^-)$}
    \STATE{$
        \mubar^+_{k0} = \Exp(\varphi_{k0}^+(Q_{k0}^-,W_0,\Upsilonbar_{0k}^-))$}    \label{line:mup0_se_gen}
    \STATE{$\Upsilonbar_{k0}^+ = (\Upsilonbar_{k1}^-,\lambdabar^+_{k0})$} \label{line:lamp0_se_gen}
    \STATE{$Q_{k0}^+ = f^+_{k0}(Q_{k0}^-,W_0,\Upsilonbar^+_{k0})$}  \label{line:q0_se_gen}
    \STATE{$(P^0_0,P_{k0}^+) \sim \Norm(\zero,\Kbf_{k0}^+),
        \quad \Kbf_{k0}^+ = \Cov(Q^0_0,Q_{k0}^+)$} \label{line:p0_se_gen}
    \FOR{$\ell=1,\ldots,L-1$}
        \STATE{$\lambdabar^+_{k\ell} = T_{k\ell}^+(\mubar^+_{k\ell},\Upsilonbar_{k,\lm1}^+)$}
    \STATE{$
            \mubar^+_{k\ell} = \Exp(\varphi_{k\ell}^+(P^0_{\lm1},P^+_{k,\lm1},Q_{k\ell}^-,W_\ell,\Upsilonbar_{k,\lm1}^+))$}    \label{line:mup_se_gen}
        \STATE{$\Upsilonbar_{k\ell}^+ = (\Upsilonbar_{k,\lm1}^+,\lambdabar^+_{k\ell})$}
            \label{line:lamp_se_gen}
        \STATE{$Q_{k\ell}^+ = f^+_{k\ell}(P^0_{\lm1},P^+_{k,\lm1},Q_{k\ell}^-,W_\ell,\Upsilonbar^+_{k\ell})$}
            \label{line:qp_se_gen}
        \STATE{$(P^0_\ell,P_{k\ell}^+) \sim \Norm(\zero,\Kbf_{k\ell}^+), \quad
            \Kbf_{k\ell}^+ = \Cov(Q^0_\ell,Q_{k\ell}^+) $}   \label{line:pp_se_gen}
    \ENDFOR
    \vspace{10pt}

    \STATE{// \texttt{Backward Pass }}
    \STATE{$\lambdabar^-_{\kp1,L} = T_{kL}^-(\mubar^-_{kL},\Upsilonbar_{k,\Lm1}^+)$}
    \STATE{$
        \mubar^-_{kL} = \Exp(\varphi_{kL}^-(P^0_{\Lm1},P_{k,\Lm1}^+,W_L,\Upsilonbar_{k,\Lm1}^+))$}    \label{line:muL_se_gen}
    \STATE{$\Upsilonbar_{\kp1,L}^- = (\Upsilonbar_{k,\Lm1}^+,\lambdabar^+_{\kp1,L})$} \label{line:lamL_se_gen}
    \STATE{$P_{\kp1,\Lm1}^- = f^-_{kL}(P^0_{\Lm1},P_{k,\Lm1}^+,W_L,\Upsilonbar^-_{\kp1,L})$}  \label{line:pL_se_gen}
    \STATE{$Q_{\kp1,\Lm1}^- \sim \Norm(0,\tau_{\kp1,\Lm1}^-), \ \ 
        \tau_{\kp1,\Lm1}^- = \Cov(P^-_{\kp1,\Lm1})$} \label{line:qL_se_gen}
    \FOR{$\ell=\Lm1,\ldots,1$}
        \STATE{$\lambdabar^-_{\kp1,\ell} = T_{k\ell}^-(\mubar^-_{k\ell},\Upsilonbar_{\kp1,\lp1}^-)$}
    \STATE{$
            \mubar^-_{k\ell} =
         \Exp(\varphi_{k\ell}^-(P^0_{\lm1},P_{k,\lm1}^+,Q_{\kp1,\ell}^-,W_\ell,\Upsilonbar_{\kp1,\lp1}^-))$}    \label{line:mun_se_gen}
        \STATE{$\Upsilonbar_{\kp1,\ell}^- = (\Upsilonbar_{\kp1,\lp1}^-,\lambdabar^-_{\kp1,\ell})$} \label{line:lamn_se_gen}
        \STATE{$P_{\kp1,\lm1}^- =
        f^-_{k\ell}(P^0_{\lm1},P^+_{k,\lm1},Q_{\kp1,\ell}^-,W_\ell,\Upsilonbar^-_{k+1,\ell})$}
            \label{line:pn_se_gen}
        \STATE{$Q_{\kp1,\lm1}^- \sim \Norm(0,\tau_{\kp1,\lm1}^-), \ \ 
        \tau_{\kp1,\lm1}^- = \Cov(P_{\kp1,\lm1}^-)$}   \label{line:qn_se_gen}
    \ENDFOR

\ENDFOR
\end{algorithmic}
\end{algorithm} 

Next we define a set of \textit{deterministic} constants $\{\Kbf_{k\ell}^+,\taubf_{k\ell}^-,\wb{\mu}_{k\ell}^\pm,\wb\Upsilon_{kl}^\pm,\taubf_\ell^0\}$ and $\Real^{1\times d}$-valued random vectors $\{Q_\ell^0,P_\ell^0,Q_{k\ell}^\pm,P_{\ell}^\pm\}$ which are recursively defined through Algorithm~\ref{algo:gen_se}, which we call the \textit{Gen-ML-Mat State Evolution} (SE).
These recursions in Algorithm closely
mirror those in the Gen-ML-Mat algorithm (Algorithm~\ref{algo:gen}).  The matrices
$\qbf^\pm_{k\ell}$ and $\pbf^\pm_{k\ell}$ are replaced by random vectors
$Q^\pm_{k\ell}$ and $P^\pm_{k\ell}$; the matrix and parameter update functions
$\fbf^\pm_{k\ell}(\cdot)$ and $\varphibf^\pm_{k\ell}(\cdot)$ are replaced by their
row-wise functions $f^\pm_{k\ell}(\cdot)$ and $\varphi^\pm_{k\ell}(\cdot)$;
and the parameters $\lambda_{k\ell}^\pm$ are replaced
by their limits $\lambdabar_{k\ell}^\pm$. We refer to $\{Q_\ell^0,P_\ell^0\}$ as \textit{true random vectors} and $\{Q_{k\ell}^\pm,P_{kl}^\pm\}$ as \textit{iterated random vectors}. The signal flow graph for the true and iterated random variables in Algorithm \ref{algo:gen_se} is given in the (BOTTOM) panel of Fig. \ref{fig:mlmatvamp_equivalent_system}. The iteration index $k$ for the iterated random variables $\{Q_{k\ell}^\pm,P_{kl}^\pm\}$ to simplify notation.

We also assume the following about the behaviour of row-wise functions around the quantities defined in Algorithm \ref{algo:gen_se}.  The iteration index $k$ has been dropped for simplifying notation.

\begin{assumption} \label{as:gen2} For row-wise functions $f,\varphi$ and parameter update functions $T$ we assume:
\begin{enumerate}[(a)]
\item\label{as2:a} $T^\pm_{k\ell}(\mu_{k\ell}^\pm,\cdot)$ are continuous at
$\mu_{k\ell}^\pm = \mubar_{k\ell}^\pm$ \item\label{as2:b} $f^+_{k\ell}(p_{\ell-1}^0,p^+_{k,\lm1},q_{k\ell}^-,w_\ell,\Upsilon^+_{k\ell})$, $\tfrac{\partial f^+_{k\ell}}{\partial q_{k\ell}^-}(p_{\ell-1}^0,p^+_{k,\lm1},q_{k\ell}^-,w_\ell,\Upsilon^+_{k\ell})$ and $\varphi^+_{k\ell}(p_{\ell-1}^0,p^+_{k,\lm1},q_{k\ell}^-,w_\ell,\Upsilon^+_{k,\lm1})$ are uniformly Lipschitz continuous in $(p_{\ell-1}^0,p^+_{k,\lm1},q_{k\ell}^-,w_\ell)$ at
$\Upsilon^+_{k\ell} = \Upsilonbar^+_{k\ell}$, $\Upsilon^+_{k,\lm1} = \Upsilonbar^+_{k,\lm1}$. 
Similarly, \newline $f^-_{k+1,\ell}(p_{\ell-1}^0,p^+_{k,\lm1},q_{k+1,\ell}^-,w_\ell,\Upsilon^-_{k\ell}),$ $\tfrac{\partial f_{k\ell}^-}{\partial p_{k,\ell-1}^+}(p_{\ell-1}^0,p^+_{k,\lm1},q_{k+1,\ell}^-,w_\ell,\Upsilon^-_{k\ell}),$ and $\varphi^-_{k\ell}(p_{\ell-1}^0,p^+_{k,\lm1},q_{k+1,\ell}^-,w_\ell,\Upsilon^-_{k+1,\ell+1})$ are uniformly Lipschitz continuous in 
$(p_{\ell-1}^0,p^+_{k,\lm1},q_{k+1,\ell}^-,w_\ell)$ at $\Upsilon^-_{k\ell} = \Upsilonbar^-_{k\ell}$, $\Upsilon^-_{k+1,\ell+1} = \Upsilonbar^-_{k+1,\ell+1}$. 
\item\label{as2:c} $f^0_\ell(p^0_{\lm1},w_\ell,\Upsilon^-_{01})$ are uniformly Lipschitz
continuous in $(p^0_{k,\lm1},w_\ell)$ at $\Upsilon^-_{\kp1,\ell} = \Upsilonbar^-_{\kp1,\ell}$.
\item\label{as2:d} Matrix update functions $\fbf^\pm_{k\ell}$ are \emph{asymptotically divergence free} meaning
\beq \label{eq:fdivfree}
\begin{aligned}
    \MoveEqLeft\lim_{N \arr \infty} \bkt{\tfrac{\partial \fbf^+_{k\ell}}{
        \partial \qbf_{k\ell}^-}(\pbf^+_{k,\lm1},\qbf_{k\ell}^-,\wbf_\ell,\Upsilonbar^+_{k\ell})} = \zero,\\
    \MoveEqLeft\lim_{N \arr \infty} \bkt{\tfrac{\partial \fbf^-_{k\ell}}{
        \partial \pbf_{k,\lm1}^+} (\pbf^+_{k,\lm1},\qbf_{{k+1},\ell}^-,\wbf_\ell,\Upsilonbar^-_{k\ell})} = \zero
\end{aligned}
\eeq
\end{enumerate}
\end{assumption}

\medskip
We are now ready to state the general result regarding the empirical convergence of the true and iterated vectors from Algorithm \ref{algo:gen} in terms of random variables defined in Algorithm \ref{algo:gen_se}.

\begin{theorem} \label{thm:general_convergence}  Consider the iterates of the Gen-ML recursion (Algorithm~\ref{algo:gen})
and the corresponding random variables and parameter limits
defined by the SE recursions (Algorithm~\ref{algo:gen_se}) under Assumptions~\ref{as:gen} and \ref{as:gen2}.
Then,
\begin{enumerate}[(a)]
\item For any fixed $k\geq 0$ and fixed $\ell=1,\ldots,\Lm1$,
the parameter list $\Upsilon_{k\ell}^+$ converges as
\beq \label{eq:Lamplim}
    \lim_{N \arr \infty} \Upsilon_{k\ell}^+ = \Upsilonbar_{k\ell}^+
\eeq
almost surely.
Also, the rows of
$\wbf_\ell$, $\pbf^0_{\lm1}$, $\qbf^0_{\ell}$, $\pbf_{0,\lm1}^+,\ldots,\pbf_{k,\lm1}^+$ and $\qbf_{0\ell}^\pm,\ldots,\qbf_{k\ell}^\pm$
almost surely jointly converge empirically  with limits,
\beq \label{eq:PQplim}
    (\p^0_{\lm1},\p^+_{i,\lm1},\q^-_{j\ell},\q^0_{\ell},\q^+_{j\ell})  \xRightarrow{2}
        (P^0_{\lm1},P^+_{i,\lm1},Q^-_{j\ell},Q^0_{\ell}, Q^+_{j\ell}),
\eeq
for all $0\leq i,j\leq k$, where the variables
$P^0_{\lm1}$, $P_{i,\lm1}^+$ and $Q_{j\ell}^-$
are zero-mean jointly Gaussian random variables independent of $W_\ell$ and with covariance matrix given by
\begin{equation}\label{eq:PQpcorr}
\begin{aligned} 
\MoveEqLeft
\Cov(P^0_{\lm1},P_{i,\lm1}^+) = \Kbf_{i,\lm1}^+, \quad \Exp(Q_{j\ell}^-)^2 = \taubf_{j\ell}^-,\\
\MoveEqLeft \Exp(P_{i,\lm1}^{+\mathsf{T}}Q_{j\ell}^-)  = \zero,
    \quad \Exp(P^{0\mathsf{T}}_{\lm1} Q_{j\ell}^-)  = \zero,
\end{aligned}
\end{equation}
and $Q^0_\ell$, $Q^+_{j\ell}$ are the random variable in lines~\ref{line:Q0def}, \ref{line:qp_se_gen},\ie,
\beq \label{eq:Qpf}
\begin{aligned}
\MoveEqLeft    Q^0_\ell = f^0_\ell(P^0_{\lm1},W_{\ell}), \\
\MoveEqLeft    Q^+_{j\ell} =
    f^+_{j\ell}(P^0_{\lm1},P^+_{j,\lm1},Q^-_{j\ell},W_\ell,\Upsilonbar_{j\ell}^+).
\end{aligned}
\eeq
An identical result holds for $\ell=0$ with all the variables $\pbf_{i,\lm1}^+$ and $P_{i,\lm1}^+$ removed.

\item For any fixed $k \geq 1 $ and fixed $\ell=1,\ldots,\Lm1$,
the parameter lists $\Upsilon_{k\ell}^-$ converge as
\beq \label{eq:Lamnlim}
    \lim_{N \arr \infty} \Upsilon_{k\ell}^- = \Upsilonbar_{k\ell}^-
\eeq
almost surely.
Also, the rows of
$\wbf_\ell$, $\pbf^0_{\lm1}$, $\pbf_{0,\lm1}^\pm,\ldots,\pbf_{\km1,\lm1}^\pm$,  and $\qbf_{0\ell}^-,\ldots,\qbf_{k\ell}^-$
almost surely jointly converge empirically  with limits,
\beq \label{eq:PQnlim}
    (\pbf^0_{\lm1},\pbf^+_{i,\lm1},\qbf^-_{j\ell},\pbf^-_{j,\ell-1}) \xRightarrow{2}
        (P^0_{\lm1},P^+_{i,\lm1},Q^-_{j\ell},P_{j,\ell-1}^-),
\eeq
for all $0\leq i\leq \km1$ and $0\leq j\leq k$, where the variables
$P^0_{\lm1}$, $P_{i,\lm1}^+$ and $Q_{j\ell}^-$
are zero-mean jointly Gaussian random variables independent of $W_\ell$ and with covariance matrix given by equation \eqref{eq:PQpcorr}
and $P^-_{j\ell}$ is the random variable in line~\ref{line:pn_se_gen}:
\beq \label{eq:Pnf}
    P^-_{j\ell} = f^-_{j\ell}(P^0_{\lm1},P^+_{j-1,\lm1},
                Q^-_{j\ell},W_\ell,\Upsilonbar_{j\ell}^-).
\eeq
An identical result holds for $\ell=L$ with all the variables $\qbf_{j\ell}^-$ and $Q_{j\ell}^-$ removed.

For $k=0$, $\Upsilon_{01}^-\rightarrow \Upsilonbar_{01}^-$ almost surely, and the rows $\{(\wbf_{\ell,n:},\pbf_{\ell-1,n:}^0,\qbf_{j\ell,n:}^-)\}_{n=1}^N$ empirically converge to independent random variables $(W_\ell,P_{\ell-1}^0,Q_{0\ell}^-)$.
\end{enumerate}
\end{theorem}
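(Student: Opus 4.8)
The plan is to prove Theorem~\ref{thm:general_convergence} by induction, following the conditioning technique for orthogonally-invariant recursions used in \cite{rangan2019vamp,pandit2019inference} and rooted in the approach of \cite{BayatiM:11}. The key structural observation is that every matrix produced by Algorithm~\ref{algo:gen} arises from previously generated matrices in two stages: first a \emph{row-wise} map $\fbf_{k\ell}^\pm$ (or $\fbf_\ell^0$) applied to matrices whose rows have already been shown to converge, and then a multiplication by one of the Haar matrices $\Vbf_\ell$ or $\Vbf_\ell\tran$. I would order the half-steps of the initial, forward, and backward passes into a single scalar ``time'' and carry an inductive hypothesis asserting \eqref{eq:PQplim}--\eqref{eq:PQnlim}: that the rows of all matrices generated up to the current time converge empirically (in the $\xRightarrow{2}$ sense) to the jointly Gaussian / function-of-Gaussian random vectors produced by Algorithm~\ref{algo:gen_se}, with the $d\times d$ and $2d\times 2d$ covariance structure \eqref{eq:PQpcorr}, and simultaneously that the parameter lists converge as in \eqref{eq:Lamplim} and \eqref{eq:Lamnlim}.

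The heart of the inductive step is the conditional law of a Haar matrix given the linear constraints it must satisfy. Since each $\Vbf_\ell$ is reused across iterations and appears both as $\Vbf_\ell$ (forward) and $\Vbf_\ell\tran$ (backward), conditioning on the $\sigma$-algebra of the past accumulates constraints of the form $\Vbf_\ell\qbf=\pbf$ over the pairs already computed; conditioned on these, $\Vbf_\ell$ decomposes as a deterministic matrix matching the constraints plus an independent Haar matrix on the orthogonal complement. Applying this to the new product $\Vbf_\ell\qbf_{k\ell}^+$ (and analogously $\Vbf_\ell\tran\pbf$ on the backward pass) splits it into (i) a deterministic ``Onsager'' term that is a linear combination of earlier $\pbf$'s weighted by empirical inner products, and (ii) a ``fresh'' term, the Haar action on the component $\rbf$ of $\qbf_{k\ell}^+$ orthogonal to the constraint subspace. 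For (ii) I would invoke a matrix generalization of the Haar concentration lemma: for fixed $\rbf\in\Real^{N_\ell\times d}$, the rows of $\Vbftil\rbf$ converge empirically to $\Norm(\zero,\Sigmabf)$, where $\Sigmabf\in\Real^{d\times d}$ is the limiting row second-moment of $\rbf$, independent of the past. The divergence-free condition in Assumption~\ref{as:gen2} is exactly what forces the coefficients of the Onsager term~(i) to vanish in the limit, so that no bias survives and the cross-covariances in \eqref{eq:PQpcorr} are zero; the uniform Lipschitz conditions in Assumption~\ref{as:gen2} let me push the established empirical convergence through the row-wise maps $\fbf$ and $\varphibf$, and continuity of $T_{k\ell}^\pm$ upgrades the convergence of the averaged statistics $\mu_{k\ell}^\pm\to\mubar_{k\ell}^\pm$ into convergence of the parameter lists.

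With the decomposition in hand, the remaining work is bookkeeping: verifying that the freshly generated Gaussian has the covariance recorded by lines~\ref{line:p0_se_gen},~\ref{line:pp_se_gen},~\ref{line:qL_se_gen},~\ref{line:qn_se_gen} of Algorithm~\ref{algo:gen_se} (its second moment is the empirical covariance of $\rbf$, which by induction equals the SE covariance), checking joint convergence with all earlier vectors, and treating the initial pass as the base case using Assumption~\ref{as:gen}. I expect the main obstacle to be the genuinely matrix-valued ($d>1$) content of the fresh-Gaussian step: one must show that $\Vbftil$ acting columnwise on a $d$-column residual yields rows that are \emph{jointly} Gaussian across the $d$ columns with the correct $d\times d$ (and, jointly with the true trajectory, $2d\times 2d$) covariance, and that the cross-column terms concentrate. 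In the scalar analysis of \cite{pandit2019inference} this reduces to tracking scalars and $2\times 2$ matrices, whereas here the entire induction must carry $d\times d$ second-moment matrices through both the Haar conditioning and the row-wise Lipschitz maps, which is precisely the step that does not transfer verbatim.
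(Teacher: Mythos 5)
Your overall architecture matches the paper's proof: induction over the half-steps ordered as in \eqref{eq:induc}, conditioning each Haar matrix $\Vbf_\ell$ on the linear constraints $\Abf_{k\ell}=\Vbf_\ell\Bbf_{k\ell}$ accumulated in the past sigma-algebra, the resulting split of $\Vbf_\ell\qbf^+_{k\ell}$ into a constrained (deterministic) part plus an independent Haar action on the orthogonal complement (Lemmas~\ref{lem:pconvdet} and \ref{lem:pconvran}), and the use of uniform Lipschitz continuity of the row-wise maps and continuity of $T^\pm_{k\ell}$ to propagate empirical and parameter-list convergence (Lemma~\ref{lem:pqconvinduc}). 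There is, however, a genuine error at the heart of your inductive step: you assert that the divergence-free condition \eqref{eq:fdivfree} ``forces the coefficients of the Onsager term (i) to vanish in the limit, so that no bias survives.'' If that were true, the limit of $\pbf^+_{k\ell}$ would be the fresh Gaussian $U_{k\ell}$ alone, independent of the entire past, so $\Cov(P^0_\ell,P^+_{k\ell})$ would be zero. This contradicts the very statement you are proving: \eqref{eq:PQpcorr} and line~\ref{line:pp_se_gen} of Algorithm~\ref{algo:gen_se} require $\Cov(P^0_\ell,P^+_{k\ell})=\Kbf^+_{k\ell}=\Cov(Q^0_\ell,Q^+_{k\ell})$, whose off-diagonal $d\times d$ blocks are generically nonzero --- they are precisely the correlations between the true signal and the iterates that make the state evolution useful for predicting MSE and test error.

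What the divergence-free condition actually yields --- via the multivariate Stein lemma, a step absent from your sketch --- is $\Exp(Q^{+\msf{T}}_{i\ell}Q^-_{j\ell})=\zero$ for all $0\le i,j\le k$, as in \eqref{eq:Qijstein}. This block-diagonalizes $\lim\tfrac{1}{N_\ell}\Bbf\tran_{k\ell}\Bbf_{k\ell}$ and zeroes only the lower block of $\lim\tfrac{1}{N_\ell}\Bbf\tran_{k\ell}\qbf^+_{k\ell}$, see \eqref{eq:BBlim_and_Bqlim}; that is, it kills only the coefficients multiplying the minus-side history $\Pbf^-_{k\ell}$ in the deterministic term. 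The coefficients on $\pbf^0_\ell$ and $\Pbf^+_{k-1,\ell}$ survive, giving $\pbf^{+\rm det}_{k\ell}\xRightarrow{2}P^0_\ell\beta^0_\ell+\sum_{i=0}^{k-1}P^+_{i\ell}\beta^+_{i\ell}$ as in \eqref{eq:pconvdet}, and it is this surviving component, added to $U_{k\ell}$ in \eqref{eq:pklim}, that makes $P^+_{k\ell}$ jointly Gaussian with, and correlated to, the earlier trajectory so that \eqref{eq:PQcorr3} holds. The zero cross-covariances $\Exp(P^{+\msf{T}}_{k\ell}Q^-_{j,\ell+1})=\zero$ then follow term-by-term from the inductive hypothesis and the independence of $U_{k\ell}$ from $\Gsetbar_{k\ell}^+$, as in \eqref{eq:PQcorr4}, not from the disappearance of the deterministic term. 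Relatedly, your claim that the fresh Gaussian's covariance ``by induction equals the SE covariance'' is off for the same reason: its covariance is only the residual after projecting out the constraint subspace, and the full SE covariance $\Kbf^+_{k\ell}$ is recovered only once the retained deterministic part is added back. To repair the proposal, replace the ``no bias survives'' step with the Stein-lemma computation and carry the plus-side deterministic component through all of the covariance bookkeeping.
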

\begin{proof}  \ref{app:proof_of_general_convergence} is dedicated to proving this result.
\end{proof}

\section{Proof of Theorem~\ref{thm:general_convergence}} \label{app:proof_of_general_convergence}

The proof proceeds using mathematical induction. It largely mimics the proof for the case of $d=1$ which were the convergence results in \cite[Thm. 5]{pandit2019inference}. However, in the case of $d>1$, we observe that several quantities which were scalars in proving \cite[Thm. 5]{pandit2019inference} are now matrices. Due to the non-commutativity of these matrix quantities, we re-state the whole prove, while modifying the requisite matrix terms appropriately.

\subsection{Overview of the Induction Sequence}

The proof is similar to that of \cite[Theorem 4]{rangan2019vamp},
which provides a SE analysis for VAMP on a single-layer network.
The critical challenge here is to extend that proof
to multi-layer recursions.
Many of the ideas in the two proofs are similar, so we highlight only the
key differences between the two.

Similar to the SE analysis of VAMP in \cite{rangan2019vamp},
we use an induction argument.  However, for the multi-layer proof,
we must index over both the iteration index $k$ and layer index $\ell$. To this end,
let $\mathcal{H}_{k\ell}^+$ and $\mathcal{H}_{k\ell}^-$ be the hypotheses:
\begin{itemize}
\item $\mathcal{H}_{k\ell}^+$:  The hypothesis that Theorem~\ref{thm:general_convergence}(a)
is true for a given $k$ and $\ell$, where $0\leq \ell\leq L-1$.
\item $\mathcal{H}_{k\ell}^-$:  The hypothesis that Theorem~\ref{thm:general_convergence}(b)
is true for a given $k$ and $\ell$, where $1\leq \ell\leq L$.
\end{itemize}
We prove these hypotheses by induction via a sequence of implications,
\beq \label{eq:induc}
\begin{aligned}
\MoveEqLeft
    \{\mc H^-_{0\ell}\}_{\ell=1}^L\cdots \Rightarrow \mathcal{H}_{k1}^- \Rightarrow \mathcal{H}_{k0}^+ \Rightarrow \cdots\\
    \MoveEqLeft\qquad\quad \Rightarrow  \mathcal{H}_{k,\Lm1}^+
    \Rightarrow \mathcal{H}_{\kp1,L}^- \Rightarrow \cdots \Rightarrow \mathcal{H}_{\kp1,1}^- \Rightarrow \cdots,
    \end{aligned}
\eeq
beginning with the hypotheses $\{\mathcal{H}^-_{0\ell}\}$ for all $\ell=1,\ldots,\Lm1$. 

\subsection{Base Case: Proof of \texorpdfstring{$\{\mc H_{0\ell}^-\}_{\ell=1}^L$}{H0l-}}
The base case corresponds to the hypotheses $\{\mc H_{0\ell}^-\}_{\ell=1}^L.$ Note that Theorem \ref{thm:general_convergence}(b) states that for $k=0$, we need $\Upsilon_{01}^-\rightarrow \Upsilonbar_{01}^-$ almost surely, and $\{(\wbf_{\ell,n:},\p_{\ell-1,n:}^0,\qbf_{j\ell,n:}^-)\}_{n=1}^N$ empirically converge to independent random variables $(W_\ell,P_{\ell-1}^0,Q_{0\ell}^-)$. These follow directly from equations \eqref{eq:qwinitlim} and \eqref{eq:Lambar01lim} in Assumption 1 (a).

\subsection{Inductive Step: Proof of \texorpdfstring{$\mc H_{k,\ell+1}^+$}{H0l+}}

Fix a layer index $\ell=1,\ldots,\Lm1$ and an iteration index $k=0,1,\ldots$. We show the implication $\cdots\implies \mc H^+_{k,\ell+1}$ in \eqref{eq:induc}. All other implications can be proven similarly using symmetry arguments.
\begin{definition}[Induction hypothesis] The hypotheses {prior} to $\mathcal{H}^+_{k,\lp1}$ in the sequence \eqref{eq:induc},
but not including $\mathcal{H}^+_{k,\lp1}$, are true.  
\end{definition}
The inductive step then corresponds to the following result.
\begin{lemma}\label{lem:pqconvinduc}
Under the induction hypothesis, $\mc H_{k,\ell+1}^+$ holds
\end{lemma}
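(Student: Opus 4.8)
The plan is to isolate the single new source of randomness entering the forward pass at layer $\ell$, namely the orthogonal product $\pbf^+_{k\ell}=\Vbf_\ell\qbf^+_{k\ell}$ in line~\ref{line:pp_gen} of Algorithm~\ref{algo:gen}, and to show that its rows converge to a fresh Gaussian independent of the input errors. I would first condition on the sigma-algebra $\mc G_{k\ell}$ generated by every quantity computed strictly before this product: the true vectors $\{\pbf^0_{\lm1},\qbf^0_\ell\}$, the perturbations $\wbf_\ell$, the parameter lists, and all earlier images of $\Vbf_\ell$ and $\Vbf_\ell\tran$. By the induction hypothesis (all hypotheses preceding $\mc H^+_{k,\lp1}$ in \eqref{eq:induc}), the rows of these quantities have already been shown to converge empirically to the corresponding limits of Algorithm~\ref{algo:gen_se}; what remains is to characterize $\pbf^+_{k\ell}$ conditionally and then propagate the conclusion through the row-wise map $\fbf^+_{k,\lp1}$.

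The core step is a matrix-valued version of the Haar conditioning lemma of \cite{rangan2019vamp,pandit2019inference}. Conditioned on $\mc G_{k\ell}$, the Haar matrix $\Vbf_\ell$ is constrained only by the finitely many linear relations $\pbf^+_{i\ell}=\Vbf_\ell\qbf^+_{i\ell}$ and $\qbf^-_{j\ell}=\Vbf_\ell\tran\pbf^-_{j\ell}$ accumulated in earlier passes, so its conditional law is Haar on the orthogonal complement of the revealed directions. Splitting $\Vbf_\ell\qbf^+_{k\ell}$ into its projection onto the span of the past $\{\pbf^+_{i\ell}\}_{i<k}$ plus a component isotropically distributed on the complement, the projection is a linear combination of the past $\pbf^+_{i\ell}$ with $d\times d$ matrix coefficients built from the (now matrix-valued) Gram matrices $\tfrac1N(\qbf^+_{i\ell})\tran\qbf^+_{j\ell}$, while the orthogonal part converges row-wise, in the large-system limit, to a zero-mean Gaussian vector. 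This is precisely where the argument departs from the scalar $d=1$ case of \cite{pandit2019inference}: the Gram matrices and their inverses are $d\times d$, so every projection coefficient must be kept in the correct (non-commuting) order.

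Next I would invoke the asymptotically divergence-free property, Assumption~\ref{as:gen2}(d), to eliminate the Onsager-type memory term. Since $\bkt{\partial\fbf^+_{k\ell}/\partial\qbf^-_{k\ell}}\to\zero$, the correction that would otherwise couple the fresh Gaussian to the input errors vanishes, delivering the orthogonality relations of \eqref{eq:PQpcorr} read at layer $\lp1$ (in particular $\Exp(P^{+\mathsf T}_{i\ell}Q^-_{j,\lp1})=\zero$ and $\Exp(P^{0\mathsf T}_{\ell}Q^-_{j,\lp1})=\zero$) and identifying the limiting covariance as $\Kbf^+_{k\ell}=\Cov(Q^0_\ell,Q^+_{k\ell})$. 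Together with the induction hypothesis this yields the full joint empirical convergence \eqref{eq:PQplim} of $\pbf^+_{k\ell}$ with all previously tracked vectors.

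Finally, with the inputs $(\pbf^0_\ell,\pbf^+_{k\ell},\qbf^-_{k,\lp1},\wbf_{\lp1})$ converging empirically, the averaged statistic $\mu^+_{k,\lp1}=\bkt{\varphibf^+_{k,\lp1}(\cdots)}$ converges to $\mubar^+_{k,\lp1}=\Exp\,\varphi^+_{k,\lp1}(\cdots)$ by the uniform Lipschitz continuity of $\varphi^+$ (Assumption~\ref{as:gen2}(b)) and the Wasserstein-2 convergence of the rows, whence $\lambda^+_{k,\lp1}=T^+_{k,\lp1}(\mu^+_{k,\lp1},\cdot)\to\lambdabar^+_{k,\lp1}$ by continuity of $T^+$ (Assumption~\ref{as:gen2}(a)), giving $\Upsilon^+_{k,\lp1}\to\Upsilonbar^+_{k,\lp1}$. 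Applying the row-wise update $\fbf^+_{k,\lp1}$---whose uniform Lipschitz continuity permits replacing the parameter arguments by their limits without changing the empirical limit---produces $\qbf^+_{k,\lp1}\xRightarrow{2}Q^+_{k,\lp1}=f^+_{k,\lp1}(P^0_\ell,P^+_{k\ell},Q^-_{k,\lp1},W_{\lp1},\Upsilonbar^+_{k,\lp1})$, which is \eqref{eq:Qpf} at layer $\lp1$ and completes $\mc H^+_{k,\lp1}$. The principal obstacle throughout is non-commutativity: the conditioning lemma, the induction invariants \eqref{eq:PQpcorr}, and the divergence-free cancellation must all be re-established with $d\times d$ matrices in place of scalars, keeping every product ordered correctly and phrasing the cancellations at the level of matrix traces.
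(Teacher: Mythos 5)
Your proposal is correct and follows essentially the same route as the paper's proof: conditioning on the sigma-algebra $\Gsetbar_{k\ell}^+$ of previously revealed quantities, using the Haar conditioning lemma to split $\Vbf_\ell\qbf^+_{k\ell}$ into a deterministic projection (with $d\times d$ matrix coefficients from the Gram matrices) plus a fresh Gaussian term, invoking the divergence-free condition together with a multivariate Stein's lemma to kill the coupling with the backward errors, and finally propagating convergence through the parameter updates and the row-wise function $f^+_{k,\lp1}$ via uniform Lipschitz continuity. The only cosmetic difference is bookkeeping: in the paper the divergence-free/Stein argument establishes $\Exp(Q_{i\ell}^{+\mathsf{T}}Q_{j\ell}^-)=\zero$ at layer $\ell$ (block-diagonalizing $\Bbf_{k\ell}\tran\Bbf_{k\ell}$), while the layer-$(\lp1)$ orthogonality and the identification $\Kbf^+_{k\ell}=\Cov(Q^0_\ell,Q^+_{k\ell})$ then follow from the induction hypothesis, independence of the fresh Gaussian, and norm preservation under the orthogonal $\Vbf_\ell$, respectively.
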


Before proving the inductive step in Lemma \ref{lem:pqconvinduc}, we prove two intermediate lemmas. Let us start by defining some notation. Define $\Pbf_{k\ell}^+ := \left[ \pbf_{0\ell}^+ \cdots \pbf_{k\ell}^+ \right] \in \R^{N_\ell \times (\kp1)d},
$ be a matrix whose column blocks are the first $\kp1$ values of the matrix $\pbf^+_{\ell}$. We define the matrices $\Pbf_{k\ell}^-$, $\Qbf_{k\ell}^+$ and $\Qbf_{k\ell}^-$ in a similar manner with values of $\pbf_{\ell}^-,\qbf_\ell^+$ and $\qbf_\ell^-$ respectively.

Note that except the initial matrices $\{\wbf_\ell,\qbf_{0\ell}^-\}_{\ell=1}^L$, all later iterates in Algorithm \ref{algo:gen} are random due to the randomness of $\V_\ell$.
Let $\Gset_{k\ell}^\pm$ denote the collection of random variables associated with the hypotheses,
$\mathcal{H}^{\pm}_{k\ell}$.  That is, for $\ell=1,\ldots,\Lm1$,
\begin{align*}
\label{eq:Gsetdef}
    \MoveEqLeft\Gset_{k\ell}^+ := \left\{ \wbf_{\ell},\pbf^0_{\lm1},\Pbf^+_{k,\lm1},\qbf^0_\ell,\Qbf^-_{k\ell},\Qbf_{k\ell}^+ \right\}, \\
    \MoveEqLeft\Gset_{k\ell}^- := \left\{ \wbf_{\ell},\pbf^0_{\lm1},\Pbf^+_{\km1,\lm1},\qbf^0_\ell,
        \Qbf^-_{k\ell},\Pbf^-_{k,\lm1} \right\}.
\end{align*}
For $\ell=0$ and $\ell=L$ we set, $
    \Gset_{k0}^+ := \left\{ \wbf_{0},\Qbf^-_{k0},\Qbf_{k0}^+ \right\}, \quad
    \Gset_{kL}^- := \left\{ \wbf_L,\pbf^0_{\Lm1},\Pbf^+_{\km1,\Lm1},\Pbf^-_{k,\Lm1} \right\}.$
    
Let $\Gsetbar_{k\ell}^+$ be the sigma algebra generated by the union of all the sets $\Gset_{k'\ell'}^\pm$
as they have appeared in the sequence \eqref{eq:induc} up to and including the final
set $\Gset_{k\ell}^+$.
Thus, the sigma algebra $\Gsetbar_{k\ell}^+$ contains all \textit{information} produced by Algorithm~\ref{algo:gen}
immediately \emph{before} line~\ref{line:pp_gen} in layer $\ell$ of iteration $k$. Note also that the random variables in Algorithm \ref{algo:gen_se} immediately before defining $P_{k,\ell}^+$ in line \ref{line:pp_se_gen} are all $\Gsetbar_{k\ell}^+$ measurable.

Observe that the matrix $\Vbf_\ell$ in Algorithm~\ref{algo:gen}
appears only during matrix-vector multiplications in lines~\ref{line:pp_gen} and \ref{line:pn_gen}.
If we define the matrices,
$
    \Abf_{k\ell} := \left[ \pbf^0_\ell, \Pbf_{\km1,\ell}^+ ~ \Pbf_{k\ell}^- \right], \quad
    \Bbf_{k\ell} := \left[ \qbf^0_\ell, \Qbf_{\km1,\ell}^+ ~ \Qbf_{k\ell}^- \right],
$
all the matrices in the set $\Gsetbar_{k\ell}^+$ will be unchanged for all
matrices $\Vbf_\ell$ satisfying the linear constraints
\beq \label{eq:ABVconk}
    \Abf_{k\ell} = \Vbf_\ell\Bbf_{k\ell}.
\eeq
Hence, the conditional distribution of $\Vbf_\ell$ given $\Gsetbar_{k\ell}^+$ is precisely
the uniform distribution on the set of orthogonal matrices satisfying
\eqref{eq:ABVconk}.  The matrices $\Abf_{k\ell}$ and $\Bbf_{k\ell}$ are of dimensions
$N_\ell \times (2k+2)d$.
From \cite[Lemmas 3,4]{rangan2019vamp}, this conditional distribution is given by
\beq \label{eq:Vconk}
    \left. \Vbf_\ell \right|_{\Gsetbar_{k\ell}^+} \eqd
    \Abf_{k\ell}(\Abf\tran_{k\ell}\Abf_{k\ell})^{-1}\Bbf_{k\ell}\tran + \Ubf_{\Abf_{k\ell}^\perp}\wt{\Vbf}_\ell\Ubf_{\Bbf_{k\ell}^\perp}\tran,
\eeq
where $\Ubf_{\Abf_{k\ell}^\perp}$ and $\Ubf_{\Bbf_{k\ell}^\perp}$ are $N_\ell \times (N_\ell-(2k+2)d)$ matrices
whose columns are an orthonormal basis for $\Range(\Abf_{k\ell})^\perp$ and $\Range(\Bbf_{k\ell})^\perp$.
The matrix $\wt{\Vbf}_\ell$ is  Haar distributed on the set of $(N_\ell-(2k+2)d)\times (N_\ell-(2k+2)d)$
orthogonal matrices and is independent of $\Gsetbar_{k\ell}^+$.

Next, similar to the proof of \cite[Thm. 4]{rangan2019vamp},
we can use \eqref{eq:Vconk} to write the conditional distribution of $\pbf_{k\ell}^+$ (from line~\ref{line:pp_gen} of Algorithm \ref{algo:gen}) given $\Gsetbar_{k\ell}^+$ as a sum of two terms
\begin{subequations}
\begin{align}
    \label{eq:ppart}
    \pbf_{k\ell}^+|_{\Gsetbar_{k\ell}^+} &= \Vbf_\ell|_{\Gsetbar_{k\ell}^+}\ \qbf_{k\ell}^+ \overset{d}= \pbf_{k\ell}^{\rm +det} + \pbf_{k\ell}^{\rm +ran},\\
    \label{eq:pdet}
    \pbf_{k\ell}^{\rm +det} &:= \Abf_{k\ell}(\Bbf\tran_{k\ell}\Bbf_{k\ell})^{-1}\Bbf_{k\ell}\tran\qbf_{k\ell}^+\\
    \label{eq:pran}
    \pbf_{k\ell}^{\rm +ran} &:= \Ubf_{\Bbf_k^\perp}\wt{\Vbf}_\ell\tran \Ubf_{\Abf_k^\perp}\tran \qbf_{k\ell}^+.
\end{align}
\end{subequations}
where we call $\pbf_{k\ell}^{\rm +det}$ the \emph{deterministic} term and
$\pbf_{k\ell}^{\rm +ran}$ the \emph{random} term. The next two lemmas characterize the limiting distributions
of the deterministic and random terms.

\begin{lemma} \label{lem:pconvdet}
Under the induction hypothesis, the rows of the ``deterministic" term
$\pbf_{k\ell}^{+\rm det}$ along with the rows
of the matrices in $\Gsetbar_{k\ell}^+$  converge empirically.
In addition, there exists constant $d\times d$ matrices $\beta_{0\ell}^+,\ldots,\beta^+_{\km1,\ell}$ such that
\beq \label{eq:pconvdet}
\begin{aligned}
   \pbf_{k\ell}^{\rm +det}\xRightarrow{2} P_{k\ell}^{\rm +det}
    :=  P^0_\ell\beta^0_\ell +  \sum_{i=0}^{\km1} P_{i\ell}^+\beta_{i\ell},
\end{aligned}\eeq
where $P_{k\ell}^{+\rm det} \in\Real^{1\times d}$ is the limiting random vector for the rows of $\pbf_{k\ell}^{\rm det}$.
\end{lemma}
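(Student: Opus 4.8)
The plan is to compute the least-squares coefficient matrix in closed form and to show that, in the large-system limit, its blocks along the \emph{backward} directions $\pbf^-_{i\ell}$ vanish, so that only the \emph{forward} directions $\pbf^0_\ell$ and $\{\pbf^+_{i\ell}\}_{i<k}$ survive. Writing $\Cbf := (\Bbf_{k\ell}\tran\Bbf_{k\ell})^{-1}\Bbf_{k\ell}\tran\qbf_{k\ell}^+ \in \Real^{(2k+2)d\times d}$, equation \eqref{eq:pdet} reads $\pbf_{k\ell}^{\rm +det} = \Abf_{k\ell}\Cbf$. I would partition the row-blocks of $\Cbf$ conformally with the columns of $\Abf_{k\ell}$: a block $\Cbf^0$ multiplying $\pbf^0_\ell$, blocks $\{\Cbf^+_i\}_{i<k}$ multiplying the $\pbf^+_{i\ell}$, and blocks $\{\Cbf^-_i\}_{i\le k}$ multiplying the $\pbf^-_{i\ell}$. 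The target \eqref{eq:pconvdet} is then precisely the statement that the $\Cbf^-_i$ converge to $\zero$ while $\Cbf^0\to\beta^0_\ell$ and $\Cbf^+_i\to\beta_{i\ell}$.

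First I would invoke the induction hypothesis to obtain joint empirical convergence of the rows of $\Bbf_{k\ell}$ and of $\qbf_{k\ell}^+$ to the corresponding SE random variables. Since Wasserstein-$2$ convergence controls quadratic test functions, the normalized Gram and cross matrices $\tfrac1{N_\ell}\Bbf_{k\ell}\tran\Bbf_{k\ell}$ and $\tfrac1{N_\ell}\Bbf_{k\ell}\tran\qbf_{k\ell}^+$ converge almost surely to the second-moment matrices of the limit variables, so $\Cbf\to\Cbf^\infty$ a.s., where $\Cbf^\infty$ solves the limiting normal equations. Grouping the limit variables as $F=\{Q^0_\ell\}\cup\{Q^+_{i\ell}\}_{i<k}$ and $G=\{Q^-_{i\ell}\}_{i\le k}$, the decoupling I am after is $R_{GF}:=\Exp[G\tran F]=\zero$ and $r_G:=\Exp[G\tran Q^+_{k\ell}]=\zero$; these two facts make the second block-row of the normal equations read $R_{GF}\Cbf^\infty_F+R_{GG}\Cbf^\infty_G=r_G$, i.e.\ $R_{GG}\Cbf^\infty_G=\zero$, forcing $\Cbf^\infty_G=\zero$ (using nonsingularity of the backward Gram $R_{GG}$ guaranteed by positive-definiteness of the SE covariances, or passing to the Moore--Penrose pseudoinverse otherwise, in which case the backward contribution $G\,\Cbf^\infty_G$ still vanishes).

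The heart of the argument, and where I expect the only genuine work, is establishing these orthogonalities. By the SE correlation structure \eqref{eq:PQpcorr}, each $Q^-_{i\ell}$ is a zero-mean Gaussian independent of $W_\ell$, of $P^0_{\ell-1}$, and of the $P^+_{j,\ell-1}$. Because $Q^0_\ell=f^0_\ell(P^0_{\ell-1},W_\ell)$ depends only on these independent inputs, $\Exp[(Q^-_{i\ell})\tran Q^0_\ell]=\zero$. For the cross terms with $Q^+_{j\ell}=f^+_{j\ell}(P^0_{\ell-1},P^+_{j,\ell-1},Q^-_{j\ell},W_\ell,\cdots)$, I would apply the matrix form of Gaussian integration by parts (Stein's lemma), conditioning on the inputs that are independent of the jointly Gaussian pair $(Q^-_{i\ell},Q^-_{j\ell})$, to get $\Exp[(Q^-_{i\ell})\tran Q^+_{j\ell}]=\Cov(Q^-_{i\ell},Q^-_{j\ell})\,\Exp[\tfrac{\partial f^+_{j\ell}}{\partial q^-_{j\ell}}]$. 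The asymptotically divergence-free property \eqref{eq:fdivfree} of Assumption~\ref{as:gen2}(d), together with convergence of the empirical average Jacobian to $\Exp[\tfrac{\partial f^+_{j\ell}}{\partial q^-_{j\ell}}]$, makes this averaged Jacobian factor $\zero$; hence every such moment vanishes for all $i\le k$ and all $j\le k$ (the case $i=j=k$ giving $r_G=\zero$). This is the step requiring care in the matrix setting: unlike $d=1$, Stein's identity produces an \emph{ordered} product of the $d\times d$ covariance and the $d\times d$ averaged Jacobian, so factors cannot be commuted, but the conclusion that the product is $\zero$ is unaffected since the Jacobian factor is itself $\zero$.

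Finally, with $\Cbf\to\Cbf^\infty$ almost surely and the backward blocks of $\Cbf^\infty$ equal to $\zero$, I would conclude via the standard stability lemma that empirical convergence is preserved under right-multiplication by an almost surely convergent matrix. Applying it to $\pbf_{k\ell}^{\rm +det}=\Abf_{k\ell}\Cbf$, whose columns are rows of matrices in $\Gsetbar_{k\ell}^+$ converging jointly by the induction hypothesis, yields joint empirical convergence of the rows of $\pbf_{k\ell}^{\rm +det}$ with those matrices and the limit $P^0_\ell\beta^0_\ell+\sum_{i=0}^{k-1}P^+_{i\ell}\beta_{i\ell}$, with $\beta^0_\ell,\beta_{i\ell}$ the forward blocks of $\Cbf^\infty$, which is exactly \eqref{eq:pconvdet}.
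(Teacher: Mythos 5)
Your proposal is correct and is essentially the paper's own argument: you compute the limiting normal equations, use the induction hypothesis together with the matrix Stein lemma and the asymptotically divergence-free condition to show that all correlations between the forward variables $(Q^0_\ell,Q^+_{i\ell})$ and the backward variables $Q^-_{j\ell}$ vanish, conclude that the limiting coefficient blocks multiplying the $\pbf^-_{i\ell}$ directions are $\zero$ (so only $\pbf^0_\ell\beta^0_\ell+\sum_i\pbf^+_{i\ell}\beta_{i\ell}$ survives), and finish with a continuity/stability argument. The only cosmetic difference is that you exploit joint Gaussianity and the zero cross-covariances to condition away $(P^0_{\ell-1},P^+_{j,\ell-1},W_\ell)$ and apply Stein's identity in the $q^-_{j}$ argument alone, whereas the paper expands Stein's identity over all three Gaussian arguments of $f^+_{i\ell}$ and kills each term separately via \eqref{eq:pqxcorrpf} and \eqref{eq:fpdivfree}; both variants rest on exactly the same facts.
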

\begin{proof}
The proof is similar that of \cite[Lem. 6]{rangan2019vamp}, but we go over the details
as there are some important differences in the multi-layer matrix case.
Define
$ \label{eq:PQaug}
    \wt{\Pbf}_{\km1,\ell}^+ = \left[ \pbf^0_\ell, ~ \Pbf_{\km1,\ell}^+ \right],
    \wt{\Qbf}_{\km1,\ell}^+ = \left[ \qbf^0_\ell, ~ \Qbf_{\km1,\ell}^+ \right],
$
which are the matrices in $\Real^{N_\ell\times (k+1)d}$.
We can then write $\Abf_{k\ell}$ and $\Bbf_{k\ell}$ from \eqref{eq:ABVconk} as
\beq \label{eq:ABdef2}
    \Abf_{k\ell} := \left[ \wt{\Pbf}_{\km1,\ell}^+ ~ \Pbf_{k\ell}^- \right], \quad
    \Bbf_{k\ell} := \left[ \wt{\Qbf}_{\km1,\ell}^+ ~ \Qbf_{k\ell}^- \right],
\eeq
We first evaluate the asymptotic values of various terms in \eqref{eq:pdet}.
By definition of $\Bbf_{k\ell}$ in \eqref{eq:ABdef2},
\[
    \Bbf\tran_{k\ell}\Bbf_{k\ell} = \begin{bmatrix}
        (\wt{\Qbf}_{\km1,\ell}^+)\tran\wt{\Qbf}_{\km1,\ell}^+ & (\wt{\Qbf}_{\km1,\ell}^+)\tran\Qbf_{k\ell}^- \\
        (\Qbf_{k\ell}^-)\tran\wt{\Qbf}_{\km1,\ell}^+ & (\Qbf_{k\ell}^-)\tran\Qbf_{k\ell}^-
        \end{bmatrix}
\]
We can then evaluate the asymptotic values of these
terms as follows:  For $0\leq i,j\leq k-1$ the asymptotic value of the
$(i+2,j+2)^{\rm nd}$ $d\times d$ block of the matrix $(\wt{\Qbf}_{\km1,\ell}^+)\tran\wt{\Qbf}_{\km1,\ell}^+$ is
\begin{equation*}
\begin{aligned}
    \MoveEqLeft \lim_{N \arr \infty} \tfrac{1}{N_\ell} \left[ (\wt{\Qbf}_{\km1,\ell}^+)\tran\wt{\Qbf}_{\km1,\ell}^+ \right]_{i+2,j+2}
        \stackrel{(a)}{=} \lim_{N \arr \infty}
        \frac{1}{N_\ell} (\qbf_{i\ell}^+)\tran\qbf_{j\ell}^+ \\
        &= \lim_{N \arr \infty} \tfrac{1}{N_\ell} \sum_{n=1}^{N_\ell} [\qbf_{i\ell}^+]_{n:}[\qbf_{j\ell}^+]_{n:}\tran
        \stackrel{(b)}{=} \Exp\left[ Q_{i\ell}^{+\mathsf{T}}Q_{j\ell}^{+} \right]
\end{aligned}
\end{equation*}
where (a) follows since the $(i+2)^{\rm th}$ column block of $\wt{\Qbf}_{\km1,\ell}^+$
is $\qbf_{i\ell}^+$, and (b) follows due to the empirical convergence assumption in \eqref{eq:PQplim}.
Also, since the first column block of $\wt{\Qbf}_{\km1,\ell}^+$ is $\qbf^0_\ell$,
we obtain that
\beq\begin{aligned}
    \MoveEqLeft
    \lim_{N_\ell \arr \infty}  \tfrac{1}{N_\ell}
        (\wt{\Qbf}_{k-1,\ell}^+)\tran\wt{\Qbf}_{k-1,\ell}^+ = \Rbf^+_{k-1,\ell}\qquad{\rm and}\\
        \MoveEqLeft \lim_{N_\ell \arr \infty}  \tfrac{1}{N_\ell}  (\Qbf_{k\ell}^-)\tran\Qbf_{k\ell}^- = \Rbf^-_{k\ell},
\end{aligned}\eeq
where $\Rbf^+_{k-1,\ell}\in\Real^{(k+1)d\times(k+1)d}$ is the covariance matrix of
$\left[Q^0_\ell\ Q_{0\ell}^+\ \ldots\ Q_{k-1,\ell}^+\right]$,
and $\Rbf^-_{k\ell} \in\Real^{(k+1)d\times(k+1)d}$ is the covariance matrix of 
$\left[Q_{0\ell}^-\ Q_{1\ell}^-\ \ldots\ Q_{k\ell}^-\right]$.
For the matrix $(\wt\Qbf_{\km1,\ell}^+)\tran\Qbf_{k\ell}^-$,
first observe that the limit of the divergence free condition \eqref{eq:fdivfree} implies
\beq\label{eq:fpdivfree}
\begin{aligned}
 \MoveEqLeft   \Exp\left[ \frac{\partial f_{i\ell}^+(P_{i,\lm1}^+,Q_{i\ell}^-,W_\ell,\Upsilonbar_{i\ell})}{\partial Q_{i\ell}^-} \right]\\
\MoveEqLeft    = \lim_{N_\ell \arr \infty}  \bkt{\frac{\partial \fbf^+_{i\ell}(\pbf^+_{i,\lm1},\qbf_{i\ell}^-,\wbf_\ell,\Upsilonbar^+_{i\ell})}{
        \partial \qbf_{i\ell}^-} }  = \zero,
\end{aligned}
\eeq
for any $i$.  Also, by the induction hypothesis $\mathcal{H}_{k\ell}^+$,
\beq \label{eq:pqxcorrpf}
    \Exp(P_{i,\lm1}^{+\msf{T}}Q_{j\ell}^-)  = \zero, \quad
    \Exp(P_{\lm1}^{0\msf{T}} Q_{j\ell}^-) = \zero,
\eeq
for all $0\leq i,j \leq k$.
Therefore using \eqref{eq:Qpf}, the cross-terms $\Exp(Q_{i\ell}^{+\msf{T}}Q_{j\ell}^-)$ are given by
\beq\label{eq:Qijstein}\begin{aligned}
    \MoveEqLeft\Exp(f_{i\ell}^+(P^0_{\lm1},P_{i,\lm1}^+,Q_{i\ell}^-,W_\ell,\Upsilonbar_{i\ell})^{\msf{T}}Q_{j\ell}^-) \\
    \MoveEqLeft\stackrel{(a)}{=}  \Exp\left[ \tfrac{\partial f_{i\ell}^+(P^0_{\lm1},P_{i,\lm1}^+,Q_{i\ell}^-,W_\ell,\Upsilonbar^+_{i\ell})}
        {\partial P_{\lm1}^0} \right]
        \Exp(P_{\lm1}^{0\msf{T}}Q_{j\ell}^-)     \\
     \MoveEqLeft+\Exp\left[ \tfrac{\partial f_{i\ell}^+(P^0_{\lm1},P_{i,\lm1}^+,Q_{i\ell}^-,W_\ell,\Upsilonbar^+_{i\ell})}{\partial P_{i,\lm1}^+} \right]
        \Exp(P_{i,\lm1}^{+\msf{T}}Q_{j\ell}^-)\\
     \MoveEqLeft+ \Exp\left[ \tfrac{\partial f_{i\ell}^+(P^0_{\lm1},P_{i,\lm1}^+,Q_{i\ell}^-,W_\ell,\Upsilonbar^+_{i\ell})}
        {\partial Q_{i\ell}^-} \right]
        \Exp(Q_{i\ell}^{-\msf{T}}Q_{j\ell}^-)
     \stackrel{(b)}{=} \zero, 
\end{aligned}
\eeq
(a) follows from a multivariate version of Stein's Lemma \cite[eqn.(2)]{liu1994siegel}; and (b) follows from \eqref{eq:fpdivfree}, and \eqref{eq:pqxcorrpf}.
Consequently,
\beq\label{eq:BBlim_and_Bqlim}
\begin{aligned} 
    \MoveEqLeft\lim_{N_\ell \arr \infty} \tfrac{1}{N_\ell} \Bbf\tran_{k\ell}\Bbf_{k\ell} = \begin{bmatrix}
        \Rbf_{\km1,\ell}^+ & \zero \\
        \zero & \Rbf_{k\ell}^-
        \end{bmatrix} ,\quad{\rm and}\\
        \MoveEqLeft
     \lim_{N_\ell \arr \infty} \tfrac{1}{N_\ell} \Bbf_{k\ell}\tran\qbf_{k\ell}^+= \
    \begin{bmatrix} \bbf^+_{k\ell} \\ \zero \end{bmatrix} ,
\end{aligned}
\eeq
where $\bbf^+_{k\ell} := \left[\Exp(Q_{0\ell}^{+\msf{T}}Q_{k\ell}^+) ~ \Exp(Q_{1\ell}^{+\msf{T}}Q_{k\ell}^+)
        ~\cdots~ \Exp(Q_{\km1,\ell}^{+\msf{T}}Q_{k\ell}^+) \right]\tran,$ is the matrix of correlations. We again have $\zero$ in the second term because $\Exp[Q_{i\ell}^{+\msf{T}}Q_{j\ell}^-]=\zero$ for all $0\leq i,j\leq k$. Hence we have
\beq \label{eq:Bqmult}
    \lim_{N_\ell \arr \infty} (\Bbf\tran_{k\ell}\Bbf_{k\ell})^{-1}\Bbf_{k\ell}\tran\qbf_{k\ell}^+ =
    \begin{bmatrix}  \betabf_{k\ell}^+ \\ \mathbf{0} \end{bmatrix}, \ \  \betabf_{k\ell}^+ := \begin{bmatrix} \Rbf^+_{\km1,\ell} \end{bmatrix}^{-1}\bbf^+_{k\ell}.
\eeq
Therefore, $\pbf_{k\ell}^{+\rm det}$ equals
\beq\begin{aligned}
    \MoveEqLeft\Abf_{k\ell}(\Bbf\tran_{k\ell}\Bbf_{k\ell})^{-1}\Bbf_{k\ell}\tran\qbf_{k\ell}^+
    = \left[ \wt{\Pbf}_{\km1,\ell}^+ ~ \Pbf_{k,\ell}^- \right]
\begin{bmatrix}\betabf_{k\ell}^+  \\ \zero \end{bmatrix}
    + O\left(\tfrac{1}{N_\ell}\right)\\
    \MoveEqLeft=  \pbf^0_\ell\beta^0_\ell +
    \sum_{i=0}^{\km1} \pbf_{i\ell}^+\beta_{i\ell}^+ + O\left(\tfrac{1}{N_\ell}\right),
\end{aligned}
\eeq
where $\beta^0_\ell$ and $\beta_{i\ell}^+$ are $d\times d$ block matrices of $\betabf_{k\ell}^+$ and
the term $O(\tfrac1{N_\ell})$ means a matrix sequence, ${\boldsymbol\varphi}(N) \in \R^{N_{\ell}}$ such that $\lim_{N \arr\infty} \tfrac{1}{N} \|{\boldsymbol\varphi}(N)\|^2 = 0.$
A continuity argument then shows the empirical convergence \eqref{eq:pconvdet}.
\end{proof}

\begin{lemma} \label{lem:pconvran}
Under the induction hypothesis, the components of
the ``random" term $\pbf_{k\ell}^{+\rm ran}$ along with the components
of the vectors in $\Gsetbar_{k\ell}^+$ almost surely converge empirically.
The components of $\pbf_{k\ell}^{+\rm ran}$ converge as
\beq \label{eq:pconvran}
     \pbf_{k\ell}^{+\rm ran} \xRightarrow{2} U_{k\ell},
\eeq
where $U_{k\ell}$ is a zero mean Gaussian random vector in $\Real^{1\times d}$
independent of the limiting random variables corresponding to the variables
in $\Gsetbar_{k\ell}^+$.
\end{lemma}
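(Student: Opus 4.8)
The plan is to mirror the single-layer argument of \cite[Lem.~7]{rangan2019vamp}, adapted to iterates with $d$ columns. First I would argue conditionally on $\Gsetbar_{k\ell}^+$. By the conditional law \eqref{eq:Vconk}, given $\Gsetbar_{k\ell}^+$ the matrix $\wt{\Vbf}_\ell$ is Haar distributed on the orthogonal group of dimension $M:=N_\ell-(2k+2)d$ and is independent of $\Gsetbar_{k\ell}^+$, while $\Ubf_{\Abf_{k\ell}^\perp}$, $\Ubf_{\Bbf_{k\ell}^\perp}$ and $\qbf_{k\ell}^+$ are all $\Gsetbar_{k\ell}^+$-measurable. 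Hence the random term in \eqref{eq:pran} is, conditionally, a fixed $M\times d$ matrix left-multiplied by a Haar matrix and then embedded into $\Real^{N_\ell}$ through the isometry $\Ubf_{\Abf_{k\ell}^\perp}$, whose range is orthogonal to the span of $\pbf^0_\ell$ and the earlier $\pbf^+_{i\ell}$.

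Next I would pin down the limiting covariance. Writing $\Cbf:=\Ubf_{\Bbf_{k\ell}^\perp}\tran\qbf_{k\ell}^+\in\Real^{M\times d}$ and using $\Ubf_{\Bbf_{k\ell}^\perp}\Ubf_{\Bbf_{k\ell}^\perp}\tran=\Ibf-\Bbf_{k\ell}(\Bbf_{k\ell}\tran\Bbf_{k\ell})^{-1}\Bbf_{k\ell}\tran$, the Gram matrix is $\Cbf\tran\Cbf=(\qbf_{k\ell}^+)\tran\qbf_{k\ell}^+-(\qbf_{k\ell}^+)\tran\Bbf_{k\ell}(\Bbf_{k\ell}\tran\Bbf_{k\ell})^{-1}\Bbf_{k\ell}\tran\qbf_{k\ell}^+$. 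The limits of $\tfrac1{N_\ell}(\qbf_{k\ell}^+)\tran\qbf_{k\ell}^+$, $\tfrac1{N_\ell}\Bbf_{k\ell}\tran\Bbf_{k\ell}$ and $\tfrac1{N_\ell}\Bbf_{k\ell}\tran\qbf_{k\ell}^+$ were already established in the proof of Lemma~\ref{lem:pconvdet} from the induction hypothesis \eqref{eq:PQplim}, the cross-correlation identities, and the divergence-free relation \eqref{eq:fpdivfree} (see \eqref{eq:BBlim_and_Bqlim}). Assembling these, $\tfrac1{N_\ell}\Cbf\tran\Cbf$ converges almost surely to a deterministic positive semidefinite $\Tbf\in\Real^{d\times d}$, a Schur-complement residual covariance; since $\Ubf_{\Abf_{k\ell}^\perp}$ and $\wt{\Vbf}_\ell$ are isometries and $M/N_\ell\to1$, the same $\Tbf$ is the limit of $\tfrac1{N_\ell}(\pbf_{k\ell}^{+\rm ran})\tran\pbf_{k\ell}^{+\rm ran}$ and will be the covariance of $U_{k\ell}$.

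The heart of the argument is the matrix version of the Haar-convergence lemma: that the rows of $\wt{\Vbf}_\ell\Cbf$ converge empirically to $\mc N(\zero,\Tbf)$. I would establish this by coupling $\wt{\Vbf}_\ell$ to a Gaussian matrix through its polar decomposition, i.e.\ writing an $M\times M$ matrix $\Gbf$ with i.i.d.\ $\mc N(0,1/M)$ entries as $\Gbf=\wt{\Vbf}_\ell\Sbf$ with $\Sbf\succeq\zero$ concentrating to $\Ibf$, so that $\wt{\Vbf}_\ell\Cbf=\Gbf\Sbf^{-1}\Cbf$ is asymptotically equal to $\Gbf\Cbf$. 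Conditioned on $\Cbf$, the rows of $\Gbf\Cbf$ are genuinely i.i.d.\ zero-mean Gaussian vectors with covariance $\tfrac1M\Cbf\tran\Cbf\to\Tbf$, so their empirical distribution converges to $\mc N(\zero,\Tbf)$ by the law of large numbers, and the perturbation $\Sbf^{-1}\to\Ibf$ is absorbed by a Lipschitz-test-function estimate. Applying the isometry $\Ubf_{\Abf_{k\ell}^\perp}$ preserves the row-wise Gram matrix and places the resulting Gaussian in $\Range(\Abf_{k\ell})^\perp$; together with the conditional independence of $\wt{\Vbf}_\ell$ from $\Gsetbar_{k\ell}^+$, this yields a limit $U_{k\ell}$ that is zero-mean Gaussian with covariance $\Tbf$ and independent of the limiting variables attached to $\Gsetbar_{k\ell}^+$. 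Finally I would pass from the conditional statement to the unconditional almost-sure empirical convergence \eqref{eq:pconvran} by a dominated-convergence / uniform-integrability argument over the orthogonal group.

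I expect the main obstacle to be this third step in the matrix setting. For $d=1$ the vector $\wt{\Vbf}_\ell\Cbf$ is exactly uniform on a sphere of known radius and its one-dimensional empirical law is classical; for $d>1$ one must control the \emph{joint} law of the $d$-dimensional rows, which forces the full $d\times d$ Gram matrix (not merely its trace) to concentrate and requires the matrix-valued coupling above to reproduce the correct cross-covariance $\Tbf$. Keeping track of the non-commuting $d\times d$ blocks throughout — in particular in verifying that the off-diagonal correlations with the past iterates in $\Gsetbar_{k\ell}^+$ vanish — is the principal bookkeeping difficulty relative to the scalar proof of \cite[Lem.~7]{rangan2019vamp}.
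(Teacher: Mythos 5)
Your overall architecture --- conditioning on $\Gsetbar_{k\ell}^+$ via \eqref{eq:Vconk}, identifying the limiting covariance $\Tbf$ as a Schur-complement residual from the limits \eqref{eq:BBlim_and_Bqlim}, a Gaussian coupling for the Haar factor, and then unconditioning --- is the right one, and it is exactly the structure of \cite[Lemmas 7--8]{rangan2019vamp}, which is all the paper itself invokes (its proof of this lemma is literally the one sentence that the argument is identical to those lemmas). However, the step you yourself call the heart of the argument contains a genuine error. For a square $M\times M$ matrix $\Gbf$ with i.i.d.\ $\mc N(0,1/M)$ entries, the symmetric factor $\Sbf=(\Gbf\tran\Gbf)^{1/2}$ in the polar decomposition $\Gbf=\wt{\Vbf}_\ell\Sbf$ does \emph{not} concentrate to $\Ibf$: the spectrum of $\Gbf\tran\Gbf$ converges to the Marchenko--Pastur law supported on $[0,4]$ (the singular values of $\Gbf$ follow the quarter-circle law on $[0,2]$), and $\Sbf^{-1}$ is not even bounded in operator norm since the smallest singular value of $\Gbf$ is of order $1/M$. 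So the asserted approximation $\wt{\Vbf}_\ell\Cbf=\Gbf\Sbf^{-1}\Cbf\approx\Gbf\Cbf$ fails as stated. The fix exploits precisely the dimension regime that makes the lemma true: since $d$ is fixed, only the action of $\wt{\Vbf}_\ell$ on the $d$-dimensional column span of $\Cbf$ matters. Writing $\Cbf=\Qbf_C\Rbf_C$ (QR factorization), the matrix $\wt{\Vbf}_\ell\Qbf_C$ is uniform on the Stiefel manifold of $d$-frames in $\Real^M$ and can be coupled as $\Gbf(\Gbf\tran\Gbf)^{-1/2}$ with $\Gbf\in\Real^{M\times d}$ i.i.d.\ Gaussian; now $\tfrac1M\Gbf\tran\Gbf\rightarrow\Ibf_d$ \emph{does} hold almost surely (it is a $d\times d$ matrix of averages of $M$ i.i.d.\ terms), so the rows of $\wt{\Vbf}_\ell\Cbf$ behave like the rows of $\tfrac{1}{\sqrt M}\Gbf\Rbf_C$, which are i.i.d.\ $\mc N(\zero,\tfrac1M\Cbf\tran\Cbf)\rightarrow\mc N(\zero,\Tbf)$.

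A second, related gap is your embedding step. Empirical convergence of the rows of $\wt{\Vbf}_\ell\Cbf$ in $\Real^{M\times d}$ does not transfer through left-multiplication by the $\Gsetbar_{k\ell}^+$-measurable isometry $\Ubf_{\Abf_{k\ell}^\perp}$: an isometry preserves the column Gram matrix but can redistribute mass arbitrarily across rows (it can send a vector onto a single coordinate), so ``preserves the row-wise Gram matrix'' is not a valid justification for preserving the empirical row distribution. The repair is to couple the composite object directly in the ambient space: $\Ubf_{\Abf_{k\ell}^\perp}\wt{\Vbf}_\ell\Qbf_C$ is a uniformly random $d$-frame in $\Range(\Abf_{k\ell})^\perp\subset\Real^{N_\ell}$, hence equal in distribution to $\Pbf\Gbf'\bigl((\Pbf\Gbf')\tran\Pbf\Gbf'\bigr)^{-1/2}$ with $\Gbf'\in\Real^{N_\ell\times d}$ i.i.d.\ Gaussian, independent of $\Gsetbar_{k\ell}^+$, and $\Pbf$ the projector onto $\Range(\Abf_{k\ell})^\perp$; the correction $\Gbf'-\Pbf\Gbf'$ has rank at most $(2k+2)d$, so it vanishes under empirical averages, and the rows of $\pbf_{k\ell}^{+\rm ran}$ inherit the $\mc N(\zero,\Tbf)$ limit jointly with, and independent of, the limiting variables of $\Gsetbar_{k\ell}^+$ because $\Gbf'$ is independent of that sigma-algebra. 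Your covariance identification and the final unconditioning step are fine; with these two repairs the proof goes through.
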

\begin{proof}
The proof is identical to that of \cite[Lemmas 7,8]{rangan2019vamp}.
\end{proof}

We are now ready to prove Lemma \ref{lem:pqconvinduc}.

\old{
\begin{lemma} \label{lem:pqconvinduc}
Under the induction hypothesis, the parameter list $\Upsilon_{k,\lp1}^+$ almost surely converges as
\beq \label{eq:Lampliminduc}
    \lim_{N_{\lp1} \arr \infty} \Upsilon_{k,\lp1}^+ = \Upsilonbar_{k,\lp1}^+,
\eeq
where $\Upsilonbar_{k,\lp1}$ is the parameter list generated from the SE recursion, Algorithm~\ref{algo:gen_se}.
Also, the components of
$\wbf_{\lp1}$, $\pbf^0_{\ell}$, $\qbf^0_{\lp1}$, $\pbf_{0,\ell}^+,\ldots,\pbf_{k,\ell}^+$ and $\qbf_{0,\lp1}^\pm,\ldots,\qbf_{k,\lp1}^\pm$
almost surely empirically converge jointly with limits,
\beq \label{eq:PQpliminduc}
    \lim_{N \arr \infty} \left\{
        (p^0_{\ell,n},p^+_{i\ell,n},q^0_{\lp1,n},q^-_{j,\lp1,n},q^+_{j,\lp1,n}) \right\} =
        (P^0_{\ell},P^+_{i\ell},Q^0_{\lp1},Q^-_{j,\lp1}, Q^+_{j,\lp1}),
\eeq
for all $i,j=0,\ldots,\kp1$, where the variables
\beq \label{eq:pqvecinduc}
    (P^0_\ell,P_{0\ell}^+,\ldots,P_{k,\ell}^+,Q_{0,\lp1}^-,\ldots,Q_{k,\lp1}^-),
\eeq
are zero-mean jointly Gaussian random variables independent of $W_\ell$ with
\beq \label{eq:PQpcorrinduc}
    \Cov(P^0_{\ell},P_{i,\ell}^+) = \Kbf_{i\ell}^+, \quad \Exp(Q_{j,\lp1}^-)^2 = \tau_{j,\lp1}^-, \quad \Exp(P_{i,\ell}^+Q_{j,\lp1}^-)  = 0,
    \quad \Exp(P^0_{\ell}Q_{j,\lp1}^-)  = 0,
\eeq
and $Q^0_{\lp1}$ and $Q^+_{j,\lp1}$ are the random variables in line~\ref{line:qp_se_gen}:
\beq \label{eq:Qpfinduc}
    Q^0_{\lp1} = f^0_{\lp1}(P^0_{\ell},W_{\lp1}), \quad
    Q^+_{j,\lp1} =
    f^+_{j,\lp1}(P^0_{\ell},P^+_{i\ell},Q^-_{j,\lp1},W_{\lp1},\Upsilonbar_{k,\lp1}^+).
\eeq
\end{lemma}
}

\begin{proof}[Proof of Lemma \ref{lem:pqconvinduc}]
Using the partition \eqref{eq:ppart} and Lemmas~\ref{lem:pconvdet} and \ref{lem:pconvran},
we see that the components of the
vector sequences in $\Gsetbar_{k\ell}^+$ along with $\pbf^+_{k\ell}$
almost surely converge jointly empirically, where the components of $\pbf^+_{k\ell}$
have the limit
\beq \label{eq:pklim}
\begin{aligned}
    \pbf^+_{k\ell} 
    = \pbf^{\rm det}_{k\ell} + \pbf^{\rm ran}_{k\ell}
    \xRightarrow{2}   P^0_\ell\beta^0_\ell + \sum_{i=0}^{\km1}   P_{i\ell}^+\beta_{i\ell}^+ + U_{k\ell} =: P_{k\ell}^+.
\end{aligned}
\eeq
{Note that the above Wasserstein-2 convergence can be shown using the same arguments involved in showing that if $X_N|\mc F\overset{d}{\implies} X|\mc F,$ and $Y_N|\mc F\overset{d}{\implies} c,$ then $(X_N,Y_N)|\mc F\overset{d}{\implies} (X,c)|\mc F$ for some constant $c$ and sigma-algebra $\mc F$.}

We first establish the Gaussianity of $P_{k\ell}^+$. Observe that by the induction hypothesis, $\mathcal{H}_{k,\lp1}^-$ holds whereby $(P_\ell^0,P_{0\ell}^+,\ldots,P_{\km1,\ell}^+,Q_{0,\lp1}^-,\ldots,Q_{k,\lp1}^-),$
is jointly Gaussian. Since $U_k$ is Gaussian and independent of $(P_\ell^0,P_{0\ell}^+,\ldots,P_{k-1,\ell}^+,Q_{0,\lp1}^-,\ldots,Q_{k,\lp1}^-),$ we can conclude from  \eqref{eq:pklim} that
$(P_\ell^0,P_{0\ell}^+,\ldots,P_{\km1,\ell}^+,P_{k\ell}^+,Q_{0,\lp1}^-,\ldots,Q_{k,\lp1}^-)$ is jointly Gaussian.

We now need to prove the correlations of this jointly Gaussian random vector are as claimed by $\mc H_{k,\ell+1}^+$.  Since $\mathcal{H}_{k,\lp1}^-$ is true, we know
that \eqref{eq:PQpcorr} is true for
all $i=0,\ldots,\km1$ and $j=0,\ldots,k$ and $\ell=\ell+1$.  Hence,
we need only to prove the additional identity for $i=k$,
namely the equations:
$\Cov(P^0_\ell,P_{k\ell}^+)^2 = \Kbf_{k\ell}^+
    $ and $
    \Exp(P_{k\ell}^+Q_{j,\lp1}^-) = 0.
$
First observe that
\begin{align*}
\MoveEqLeft
    \Exp(P_{k\ell}^{+\msf{T}}P_{k\ell}^+)^2  \stackrel{(a)}{=} \lim_{N_\ell \arr \infty} \tfrac{1}{N_\ell}
        \pbf_{k\ell}^{+\msf{T}}\pbf_{k\ell}^{+}\\
        \MoveEqLeft
          \stackrel{(b)}{=} \lim_{N_\ell \arr \infty} \tfrac{1}{N_\ell}
        \qbf_{k\ell}^{+\msf{T}}\qbf_{k\ell}^+ \stackrel{(c)}{=}  \Exp\left( Q_{k\ell}^{+\msf{T}}Q_{k\ell}^+ \right)^2
\end{align*}
where (a) follows from the fact that the rows of $\pbf^+_{k\ell}$ converge empirically
to $P_{k\ell}^+$;
(b) follows from line \ref{line:pp_gen} in Algorithm~\ref{algo:gen} and the fact that $\Vbf_\ell$ is orthogonal;
and
(c) follows from the fact that the rows of $\qbf^+_{k\ell}$ converge empirically
to $Q_{k\ell}^+$ from hypothesis $\mc H_{k,\ell}^+$.  Since $\pbf^0_\ell = \Vbf_\ell \qbf^0$, we similarly obtain that
$
    \Exp(P^{0\msf{T}}_\ell P_{k\ell}^+) = \Exp(Q^{0\msf{T}}_\ell Q_{k\ell}^+), \quad
    \Exp(P^{0\msf{T}}_\ell P^0_\ell )= \Exp(Q^{0\msf{T}}_\ell Q^0_\ell),
$
from which we conclude
\beq \label{eq:PQcorr3}
    \Cov(P^0_\ell, P_{k\ell}^+) = \Cov(Q^0_\ell, Q_{k\ell}^+) =: \Kbf^+_{k\ell},
\eeq
where the last step follows from the definition of $\Kbf^+_{k\ell}$ in line~\ref{line:pp_se_gen} of Algorithm \ref{algo:gen_se}.
Finally, we observe that for $0\leq j\leq k$
\beq \label{eq:PQcorr4}
\begin{aligned}
    \MoveEqLeft
    \Exp(P_{k\ell}^{+\msf{T}}Q_{j,\lp1}^-) \stackrel{(a)}{=}
     \beta^{0\msf{T}}_\ell\Exp(P_{\ell}^{0\msf{T}}Q_{j,\lp1}^-)\\
     \MoveEqLeft\qquad
     + \sum_{i=0}^{\km1} \beta_{i\ell}^{+\msf{T}} \Exp(P_{i\ell}^{+\msf{T}}Q_{j,\lp1}^-)
        + \Exp(U_{k\ell}^{\msf{T}}Q_{j,\lp1}^-) \stackrel{(b)}{=} \zero,
        \end{aligned}
\eeq
where (a) follows from \eqref{eq:pklim} and, in (b), we used the fact that
$\Exp(P_{\ell}^{0\msf{T}}Q_{j,\lp1}^-) = \zero$ and
$\Exp(P_{i\ell}^{+\msf{T}}Q_{j,\lp1}^-) = \zero$ since \eqref{eq:PQpcorr} is true for $i\leq \km1$ corresponding to $\mc H_{k,\ell+1}^-$ and
$\Exp(U_{k\ell}^{\msf{T}}Q_{j,\lp1}^-) = \zero$ since $U_{k\ell}$ is independent of $\Gsetbar_{k\ell}^+$, and $Q_{j,\lp1}^-$ is $\Gsetbar_{k\ell}^+$ measurable.
Thus, with \eqref{eq:PQcorr3} and \eqref{eq:PQcorr4}, we have proven all the correlations in
\eqref{eq:PQpcorr} corresponding to $\mc H_{k,\ell+1}^+$.

Next, we prove the convergence of the parameter lists $\Upsilon_{k,\ell+1}^+$ to $\Upsilonbar_{k,\ell+1}^+$.  Since $\Upsilon^+_{k\ell} \arr \Upsilonbar_{k\ell}^+$ due to hypothesis $\mc H_{k\ell}^+$,
and $\varphi_{k,\lp1}^+(\cdot)$ is uniformly Lipschitz continuous,
we have that $\lim_{N \arr \infty} \mu^+_{k,\lp1}$ from line~\ref{line:mup_gen} in Algorithm~\ref{algo:gen}
converges almost surely as
\beq\begin{aligned}
    \MoveEqLeft  \lim_{N \arr \infty}
        \bkt{\varphibf_{k,\lp1}^+(\pbf^0_\ell,\pbf^+_{k\ell},\qbf_{k,\lp1}^-,\wbf_{\lp1},\Upsilonbar_{k\ell}^+)}\\
    \MoveEqLeft\qquad=    \Exp\left[ \varphi_{k,\lp1}^+(P^0_\ell,P^+_{k\ell},Q_{k,\lp1}^-,W_{\lp1},\Upsilonbar_{k\ell}^+)
        \right]  = \mubar^+_{k,\lp1},
\end{aligned}
\eeq
where $\mubar^+_{k,\lp1}$ is the value in line~\ref{line:mup_se_gen} in Algorithm~\ref{algo:gen_se}.
Since $T^+_{k,\lp1}(\cdot)$ is continuous, we have that $\lambda_{k,\lp1}^+$ in
line~\ref{line:lamp_gen} in Algorithm~\ref{algo:gen} converges as
$
    \lim_{N \arr \infty} \lambda_{k,\lp1}^+ 
= T_{k,\lp1}^+(\mubar_{k,\lp1}^+,\Upsilonbar_{k\ell}^+) =: \lambdabar_{k,\lp1}^+,
$
from
line~\ref{line:lamp_se_gen} in Algorithm~\ref{algo:gen_se}. Therefore, we have the limit
\beq\label{eq:Lampliminduc}
    \lim_{N \arr \infty} \Upsilon_{k,\lp1}^+ =
    \lim_{N \arr \infty} (\Upsilon_{k,\ell}^+,\lambda_{k,\lp1}^+)
    = (\Upsilonbar_{k,\ell}^+,\lambdabar_{k,\lp1}^+) = \Upsilonbar_{k,\lp1}^+,
\eeq
which proves the convergence of the parameter lists stated in $\mc H_{k,\ell+1}^+$.
Finally, using \eqref{eq:Lampliminduc}, the empirical convergence of the matrix
sequences $\pbf^0_\ell$, $\pbf_{k\ell}^+$ and $\qbf_{k,\lp1}^-$ and the uniform Lipschitz continuity of the update function $f_{k,\lp1}^+(\cdot)$ we obtain that $\qbf_{k,\lp1}^+ $ equals
\begin{align*}
    \MoveEqLeft\fbf_{k,\lp1}^+(\pbf^0_{\ell},\pbf_{k\ell}^-, \qbf_{k,\lp1}^-,\wbf_{\lp1},\Upsilon_{k,\lp1}^+)\\
    \MoveEqLeft\qquad\qquad\xRightarrow{2} f_{k,\lp1}^+(P^0_\ell,P_{k\ell}^-, Q_{k,\lp1}^-,W_{\lp1},\Upsilonbar_{k,\lp1}^+) =: Q^+_{k,\lp1},
\end{align*}
which proves the claim \eqref{eq:Qpf} for $\mc H_{k,\ell+1}^+$.  This completes the proof.
\end{proof}

An overview of the iterates in Algorithm \ref{algo:gen} is depicted in (TOP) and (MIDDLE) of Figure \ref{fig:mlmatvamp_equivalent_system}. Theorem \ref{thm:general_convergence} shows that the rows of the iterates of Algorithm \ref{algo:gen} converge empirically with $2^{\rm nd}$ order moments to random variables defined in Algorithm \ref{algo:gen_se}. The random variables defined in Algo. \ref{algo:gen_se} are depicted in Figure \ref{fig:mlmatvamp_equivalent_system} (BOTTOM).

 }

\newpage
\bibliographystyle{IEEEtran}
\bibliography{bibl}

\end{document}